\title{Sampling in Constrained Domains with Orthogonal-Space Variational Gradient Descent}
\author{%
  Ruqi Zhang\\
  Department of Computer Science\\
  Purdue University\\
  \texttt{ruqiz@purdue.edu} \\
   \And
   Qiang Liu \\
   Department of Computer Science \\
   University of Texas at Austin\\
   \texttt{lqiang@cs.texas.edu} \\
   \AND
   Xin T. Tong\\
  Department of Mathematics\\
  National University of Singapore\\
  \texttt{mattxin@nus.edu.sg} \\
}
\begin{document}

\maketitle

\begin{abstract}
Sampling methods, as important inference and learning techniques, are typically designed for unconstrained domains. However, constraints are ubiquitous in machine learning problems, such as those on safety, fairness, robustness, and many other properties that must be satisfied to apply sampling results in real-life applications. Enforcing these constraints often leads to implicitly-defined manifolds, making efficient sampling with constraints very challenging. In this paper, we propose a new variational framework with a designed orthogonal-space gradient flow (O-Gradient) for sampling on a manifold $\mathcal{G}_0$ defined by general equality constraints. O-Gradient decomposes the gradient into two parts: one decreases the distance to $\mathcal{G}_0$ and the other decreases the KL divergence in the orthogonal space. While most existing manifold sampling methods require initialization on $\mathcal{G}_0$, O-Gradient does not require such prior knowledge. We prove that O-Gradient converges to the target constrained distribution with rate $\widetilde{O}(1/\text{the number of iterations})$ under mild conditions. Our proof relies on a new Stein characterization of conditional measure which could be of independent interest. We implement O-Gradient through both Langevin dynamics and Stein variational gradient descent and demonstrate its effectiveness in various experiments, including Bayesian deep neural networks. 
\end{abstract}

\section{Introduction}
Sampling methods, such as Markov chain Monte Carlo (MCMC)~\citep{brooks2011handbook} and  Stein variational gradient descent (SVGD)~\citep{liu2016stein,liu2017stein}, have been widely used for getting samples from or approximating intractable distributions in machine learning (ML) problems, such as estimating Bayesian neural network posteriors~\citep{zhang2019cyclical}, generating new images~\citep{song2019generative}, and training energy-based models~\citep{lecun2006tutorial}. While being powerful, most sampling methods usually can only be used in unconstrained domains or some special geometric spaces. This greatly limits the application of sampling to many real-life tasks.

We consider sampling from a distribution $\pi$ with an equality constraint $g(x)=0$ where $g: \mathbb{R}^d\rightarrow \mathbb{R}$ is a general differentiable function. The domain in this case is the level set $\mathcal{G}_0=\{x\in \mathbb{R}^d: g(x)=0\}$ which is a submanifold in $\RR^d$.
We do not require additional information about $\mathcal{G}_0$, such as explicit parameterization or known in-domain points, which is in contrast, often demanded by previous methods~\citep{byrne2013geodesic,wang2020fast,brubaker2012family,lelievre2019hybrid}. The 
problem defined above includes many ML applications, such as disease diagnosis with logic rules constraint, policymaking with fairness constraint for different demographic subgroups, and autonomous driving with robustness constraint to unseen scenarios. 

In this paper, we propose a new variational framework which transforms the above constrained sampling problem into a constrained functional minimization problem. A special gradient flow, denoted  \emph{orthogonal-space gradient flow} (O-Gradient), is developed to minimize the objective. As illustrated in Figure~\ref{fig:illustration}a, the direction of O-Gradient $v$ can be decomposed into two parts: the first part $v_\sharp$ drives the sampler towards the manifold $\calG_0$ following $\nabla g$ and keeps it on $\calG_0$ once arrived; the second part $v\perpg$ makes the sampler explore $\calG_0$ following the density $\pi(x)$. We prove the convergence of O-Gradient in the continuous-time mean-field limit. O-Gradient can be applied to both Langevin dynamics and SVGD, resulting in O-Langevin and O-SVGD respectively.
As shown in Figure~\ref{fig:illustration}b\&c, both methods can converge to the target distribution on the manifold. In particular, O-Langevin converges following a noisy trajectory while O-SVGD converges smoothly, similar to their standard unconstrained counterparts.
We empirically demonstrate the sampling performance of O-Langevin and O-SVGD across different constrained ML problems. We summarize our contributions as follows:
\begin{itemize}
    \item We reformulate the hard-constrained sampling problem into a functional optimization problem and derive a special gradient flow, O-Gradient, to obtain the solution.
    \item We prove that O-Gradient converges to the target constrained distribution with rate $\widetilde{O}(1/\text{the number of iterations})$ under mild conditions. Our proof technique includes a new Stein characterization of conditional measure which could be of independent interest.
    \item We implement O-Gradient through both Langevin dynamics and SVGD and demonstrate its effectiveness in various experiments, including a constrained synthetic distribution, income classification with fairness constraint, loan classification with logic rules and image classification with robust Bayesian deep neural networks. 
\end{itemize}

\begin{figure}[t!]
    \centering
    \begin{tabular}{cccc}		
    		\hspace{-8mm}
    	\includegraphics[width=4.5cm]{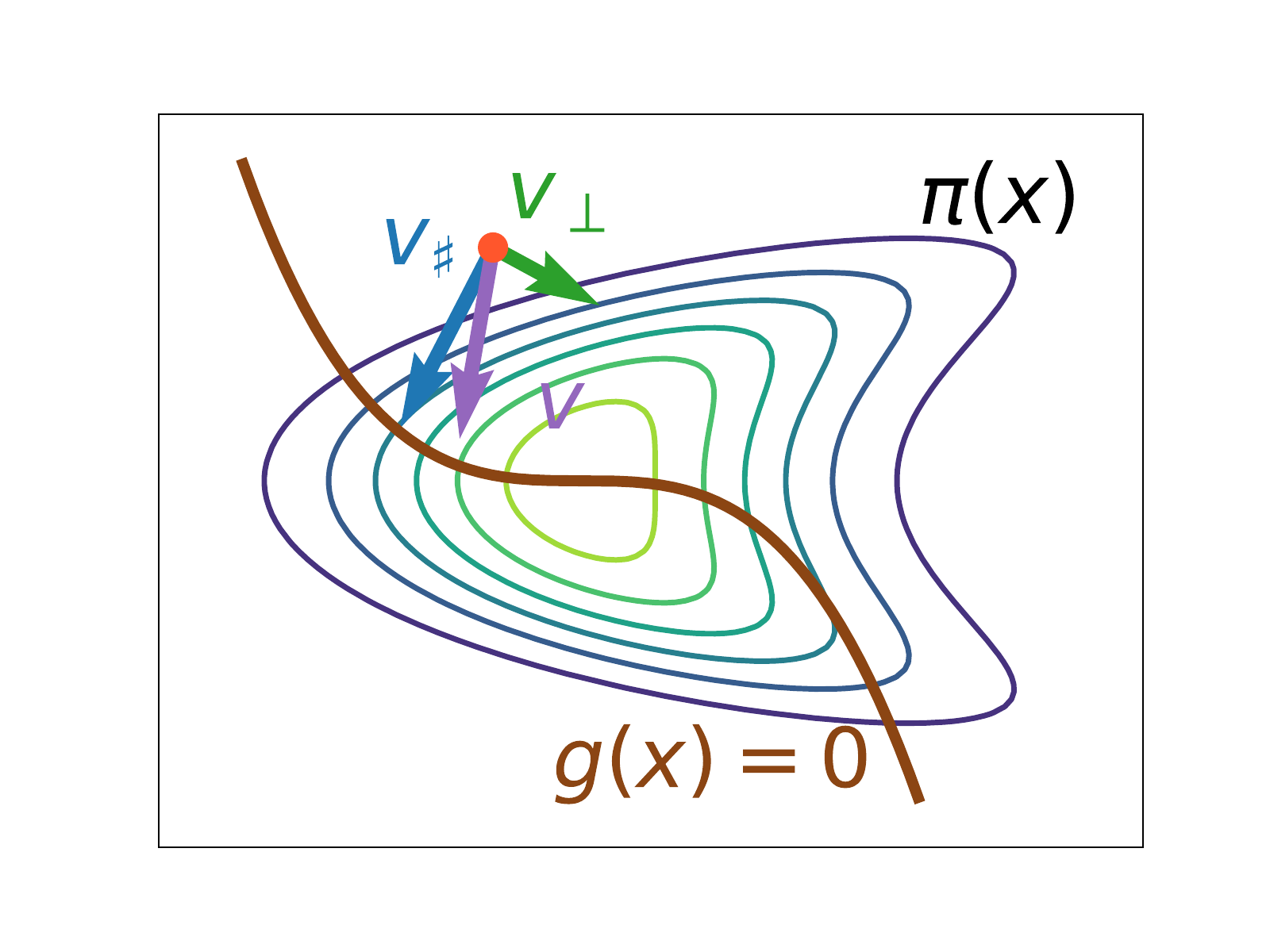}  &
    		\hspace{-8mm}
    	\includegraphics[width=4.5cm]{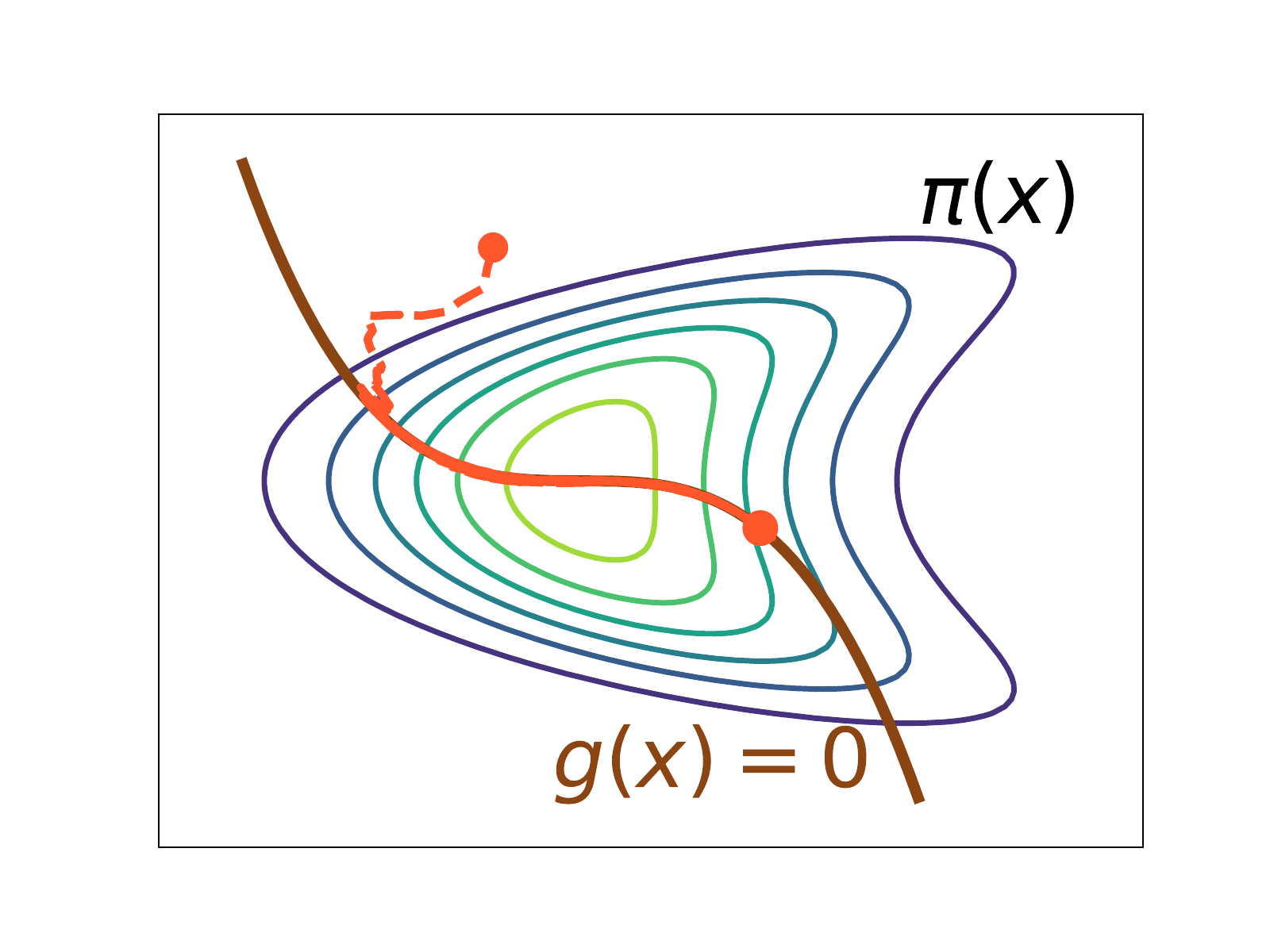} &
    	\hspace{-8mm}
    	\includegraphics[width=4.5cm]{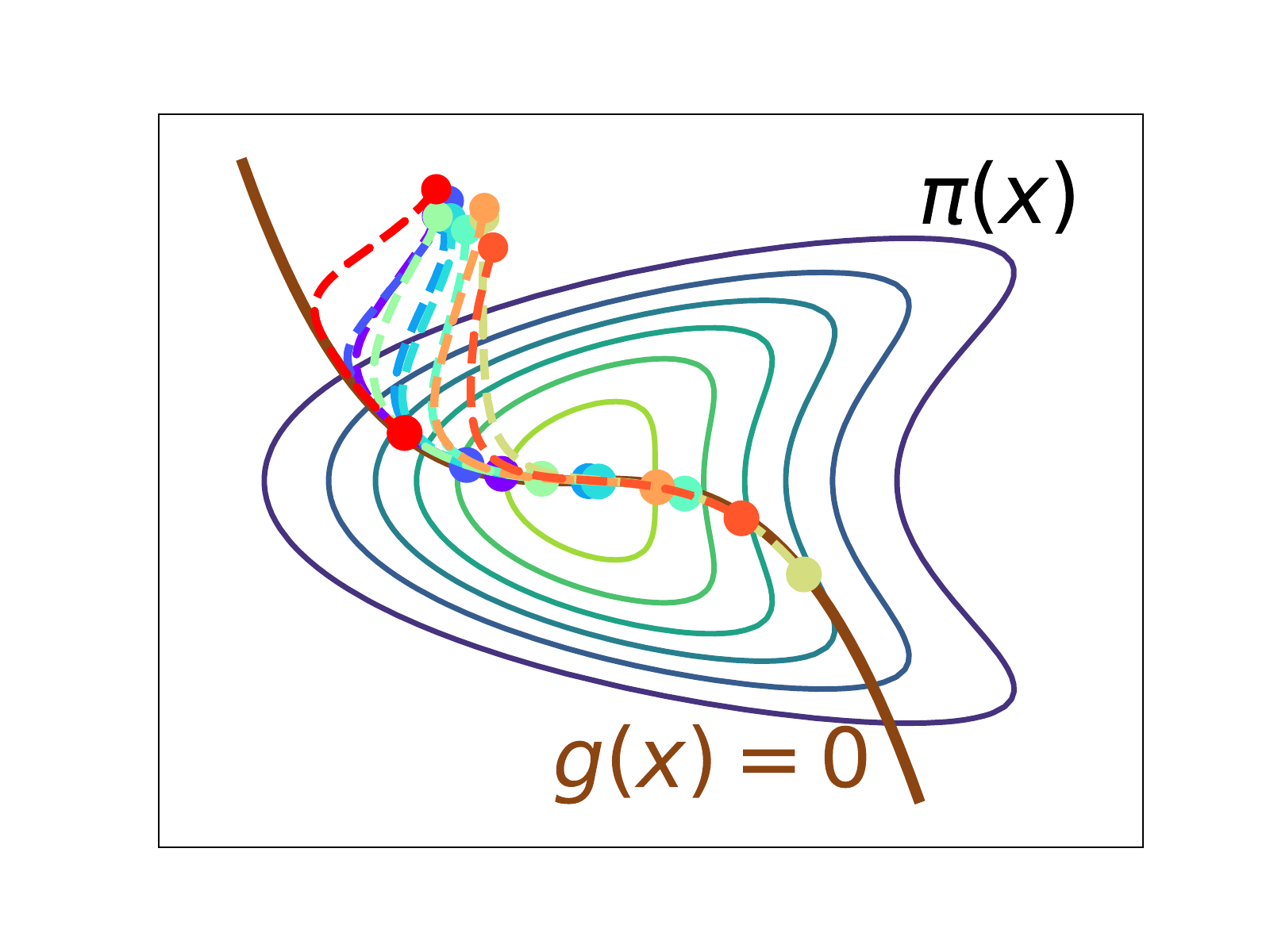}
    	\\	
    	(a) O-Gradient&
    	(b) O-Langevin&
    	(c) O-SVGD\\
    \end{tabular}
    \caption{Visualization of our methods. (a) O-Gradient $v$ is formed by $v_\sharp$ which follows $\nabla g$, and $v\perpg$ which is perpendicular to $\nabla g$. (b)-(c) Applying O-Gradient to Langevin dynamics and SVGD. Both methods can approach the manifold and sample on it.}
    \label{fig:illustration}
\end{figure}

\section{Related Work}

\paragraph{Sampling on Explicitly Defined Manifolds} 
Manifolds with special shapes, such as geometric or physics structures,  can sometimes be explicitly parameterized in lower dimension spaces. For example, a torus embedded in $\RR^3$ can be explicitly defined in two dimensions using polar coordinates. Variants of classical methods have been developed to sample on such manifolds, including rejection sampling~\citep{diaconis2013sampling}, Langevin dynamics~\citep{wang2020fast}, Hamiltonian Monte Carlo (HMC)~\citep{byrne2013geodesic} and Riemannian manifold HMC~\citep{lee2018convergence,patterson2013stochastic}. However, explicit parameterization is only applicable to a few special cases and cannot be used for general machine learning problems. In contrast to this line of work, our method is able to work with more general manifolds defined in the original domain $\RR^d$.

\paragraph{Sampling on Implicitly Defined Manifolds} Many common applications are not endowed with simple manifolds, such as molecular dynamics~\citep{lelievre2016partial}, matrix factorization~\citep{meyer2011linear} and free energy calculations~\citep{stoltz2010free}. Motivated by these applications, sampling methods on implicitly defined manifolds have been developed. \citet{brubaker2012family} has proposed a family of constrained MCMC methods by applying Lagrangian mechanics to Hamiltonian dynamics. \citet{zappa2018monte} has introduced a constrained Metropolis-Hastings (MH) with a reverse projection check to ensure the reversibility. Later, this method has been extended to HMC~\citep{lelievre2019hybrid} and multiple projections~\citep{lelievre2022multiple}. However, the implementation and analysis of these methods often assume the algorithm starts on the manifold and never leaves it, requiring prior known points on the manifold and expensive projection subroutines, such as Newton’s method~\citep{brubaker2012family,zappa2018monte,lelievre2012langevin,lelievre2019hybrid,lelievre2022multiple} or a long time ordinary differential equation (ODE)~\citep{zhang2020ergodic,sharma2021nonreversible,kook2022sampling}. In contrast, our method works with distributions supported on the ambient space and thus gets rid of the above strong assumptions, leading to a much faster update per iteration. This makes our method especially suitable for complex ML models such as deep neural networks.

\paragraph{Sampling with a Moment Constraint} 
Recently, sampling with a general moment constraint, such as $\E_q[g]\leq 0$ where $q$ is the approximated distribution, has been studied~\citep{liu2021sampling}. However, this type of constraint can not guarantee every sample to satisfy $g(x) = 0$. From a technical view, the target distribution with a moment constraint is usually not singular w.r.t. $\pi$, so the problem is conceptually less challenging compared to the problem considered in this work.

\section{Preliminaries}
\label{sec:prelim}
\paragraph{Variational Framework}
We review the derivation of Langevin dynamics and SVGD from a unified variational framework. The variational approach frames the problem of sampling into a KL divergence minimization problem: $\min_{q \in \mathcal P} \KL(q~||~\pi)$ where $\mathcal P$ is the space of probability measures. 
We start from an initial distribution $q_0$ and an initial point $x_0 \sim q_0$, and update $x_t$ following $\dno x_t = v_t(x_t) \dno t$, where $v_t\colon \RR^d\to \RR^d$ is a velocity field 
at time $t$. Then the density $q_t$ of $x_t$ follows Fokker-Planck equation: $\dno q_t/\dno t= - \nabla \cdot (v_t q_t)$, and the KL divergence decreases with the following rate \citep{liu2016stein}: 
\bbb \label{equ:kld}
-\frac{d}{dt} \KL(q_t ~||~ \pi) = 
\E_{q_t}[\stein_\pi v_t]=
\E_{q_t}[(s_\pi-s_{q_t})\tt v_t], \quad 
\eee 
where $\stein_\pi v(x)  = s_\pi(x) \tt v(x) + \dd\cdot v(x)$ is the Stein operator, and $s_p=\nabla \log p$ is the score function of the distribution $p$. The optimal $v_t$ is obtained by solving an optimization in a Hilbert space $\H$, 
\bbb\label{eq:pre-velocity}
\max_{v \in \H} \E_{q_t}[(s_\pi-s_{q_t})\tt v] - \frac{1}{2} \norm{v}_{\H}^2.
\eee
The above objective makes sure that $v_t$ decreases the KL divergence as fast as possible. 

\paragraph{Langevin Dynamics and SVGD Algorithms}
Both Langevin dynamics and SVGD can be derived from this variational framework by taking $\H$ to be different spaces.
Taking $\H$ to be $\calL_{q}^2$, the velocity field becomes
$v_t(\cdot)=\nabla s_\pi(\cdot) - \dd q_t(\cdot)$ which can be simulated by Langevin dynamics $dx_t=s_\pi(x_t)dt+dW_t$ with $W_t$ being a standard Browninan motion. After discretization 
with a step size $\eta>0$, the update step of Langevin dynamics is $x_{t+1} = x_t + \Langevin(x_t)$,  where
\bbb \label{eq:langevin-update}
\Langevin(x_t) = \eta\nabla\log\pi(x_t) + \sqrt{2\eta}\xi_t, ~~\xi_t\sim\mathcal{N}(0,I).
\eee 
Taking $\H$ to be the reproducing kernel
Hilbert space (RKHS) of a continuously differentiable kernel $k \colon \RR^d\times \RR^d\to \RR$, the velocity field becomes
$
v_t(\cdot) 
= \E_{x\sim q_t} [k_t(\cdot,x)s_\pi(x)  + \dd_x k_t(\cdot, x)].
$
After discretization, the update step of SVGD for particles $\{x_{i}\}_{i=1}^n$ is $x_{i,t+1} = x_{i,t} + \eta\cdot\SVGD_k(x_{i,t})$, for $i = 1,\ldots, n$, where $\eta$ is a step size and 
\begin{align}\label{eq:svgd-update}
    \SVGD_k (x_{i,t}) =\frac{1}{n}\sum_{j=1}^n k(x_{i,t},x_{j,t})\nabla_{x_{j,t}}\log\pi(x_{j,t})+\nabla_{x_{j,t}}k(x_{i,t},x_{j,t}).
\end{align}

\section{Main Method}

In this section, we formulate the constrained sampling problem into a constrained optimization through the variational lens in Section~\ref{sec:constrained-variational}, and introduce a new gradient flow to solve the problem in Section~\ref{sec:o-gradient}. We apply this general framework to Langevin dynamics and SVGD, leading to two practical algorithms in Section~\ref{sec:alg}.

\subsection{Constrained Variational Optimization}\label{sec:constrained-variational}
Recall that our goal is to draw samples according to the probability of $\pi$, but restricted to a low dimensional manifold specified by an equality: $\calG_0 \defeq \{x\in \RR^d \colon g(x) = 0\}$. Similar to the standard variational framework in Section~\ref{sec:prelim}, we can formulate the problem into a constrained optimization in the space of probability measures: 
\[\min_{q \in \mathcal P} \KL(q~||~\pi), ~~~\text{s.t.}~~~q(g(x) =0) = 1. 
\]
However, this problem is in general ill-posed. To see that, when $q$ satisfies the constraint, $q$ will be singular w.r.t. $\pi$, so both $\frac{dq}{d\pi}$ and $\KL(q~||~\pi)$ are not defined. Although the problem is ill-posed, we are actually still able to derive a $\KL$-gradient flow to solve the problem by considering $q$ supported on $\RR^d$. The intuition of the derivation is that, in addition to minimizing the objective as in Eq.~\eqref{equ:kld}, the velocity filed $v_t$ should also push $q$ towards $\calG_0$ to satisfy the constraint. Surprisingly, the distribution $q_t$ following such a gradient flow indeed converges to the target distribution on the manifold. We will focus on the derivation of the gradient flow here and leave its rigorous justification in Section~\ref{sec:theory}. 

\subsection{Orthogonal-Space Gradient Flow (O-Gradient)}\label{sec:o-gradient}
As mentioned above, besides maximizing the decay of $\KL(q~||~\pi)$, the velocity field $v_t$ also needs to drive  $q$ towards the manifold satisfying $g(x)=0$. 
In particular, we add to \eqref{eq:pre-velocity} a requirement that the value of $g(x)$ is driven towards $0$ with a given rate:
\begin{equation}
    \label{eq:vcontraint}
v_t=\arg\max_{v \in \H}\E_{q_t}[(s_\pi-s_{q_t})\tt v]  - \frac{1}{2} \norm{v}_{\H}^2,  ~~~\text{s.t.}~~~v_t(x)^\top\nabla g(x)=-\psi(g(x)) 
\end{equation}
where $\psi(x)$ is an increasing odd function. To see the effect of the constraint term, we consider three cases: 
\begin{itemize}
    \item When $g(x) > 0$, then $v_t(x)^\top\nabla g(x)=-\psi(g(x))>0$ which ensures that $v_t$ will make $g$ decrease strictly. 
    \item When $g(x) < 0$, then $v_t(x)^\top\nabla g(x)=-\psi(g(x))<0$ which ensures that $v_t$ will make $g$ increase strictly. 
    \item When $g(x) = 0$, then $v_t(x)^\top\nabla g(x)=-\psi(g(x))=0$ which ensures $x$ stay on the manifold $\mathcal{G}_0$.
\end{itemize} 
We choose $\psi(x)=\alpha \text{sign}(x)|x|^{1+\beta}$ with $\alpha>0$ and $\beta\in(0,1]$ in this paper because it is one of the simplest functions that satisfy the requirements and we found it works well in theory and practice. 

In summary, the objective function in Eq.~\eqref{eq:vcontraint} is the same as Eq.~\eqref{eq:pre-velocity} in the standard variational framework while the constraint ensures that $v_t$ pushes $q_t$ towards the manifold and keeps it stay.
It is easy to see that the solution of the above problem can be decomposed as $v_t=v_\sharp+v_\bot$ where
\bbb
\label{eq:vsharp} v_\sharp(x)=\frac{-\psi(g(x))\nabla g(x)}{\|\nabla g(x)\|^2}, ~~~v_\bot\bot \nabla g.
\eee
We use $f\bot g$ to denote (pointwise)  orthogonality: $f(x)\tt g(x) = 0$, $\forall x\in \RR^d$.  
Note that $v_\sharp$ is parallel to $\nabla g$ and the remaining is to determine $v_\bot$.  
Note that $v_\bot$ can be represented as a projection of an arbitrary function $u$ to the orthogonal space of $\dd g$: 
\bbb\label{eq:projD}
v_\bot = D(x)u(x),\quad \text{where } D(x):=I-\frac{\nabla g(x)\nabla g(x)^\top}{\|\nabla g(x)\|^2}.\eee 
The projection operator $D$ makes sure that $v_\bot\bot \nabla g$ holds for any $u$, of which the optimal value we can get by maximizing the unconstrained objective in Eq.~\eqref{eq:pre-velocity},
\bbb\label{eq:proju}
\max_{v\perpg} E_{q_t}[(s_\pi - s_{q_t}) \tt (v\parag + v\perpg)] - \tfrac{1}{2} \norm{v\perpg}_{\H}^2 \Rightarrow \max_u\E_{q_t}[(D(s_\pi - s_{q_t})) \tt u ] - \frac{1}{2} \norm{D u}_{\H}^2.
\eee
The optimal solution of  $u$ depends 
on the choice of space $\calH$, which we   discuss in Section~\ref{sec:alg}. 

Overall, we obtain the velocity field $v_t$ by first formulating a constrained optimization and then transforming it into an unconstrained optimization via orthogonal decomposition.
We call $v_t$ \emph{Orthogonal-Space Gradient Flow} (O-Gradient) and it drives $q_t$ to the target distribution 
only with the knowledge of $\dd g$,  requiring no explicit representation of the manifold $\calG_0$.

\subsection{Practical Algorithms}
\label{sec:alg}
After deriving O-Gradient for general Hilbert spaces $\calH$, we explain how to implement it using SVGD and Langevin dynamics. The resulting O-SVGD and O-Langevin are outlined in Algorithm~\ref{alg:o-sampling}. At a high level, our algorithms keep the original SVGD or Langevin dynamics movement in the directions perpendicular to $\nabla g$, while pushing the density towards $\calG_0$ along the $\nabla g$ direction. 

\paragraph{O-SVGD}
We apply O-Gradient to SVGD first since it is fairly straightforward. Recall that $v_\sharp$ can be obtained using Eq.~\eqref{eq:vsharp}. We solve Eq.~\eqref{eq:proju} to get $v_\bot$ through the following lemma. 

\begin{lem}
\label{lem:RKHSform}
When $\calH$ is an RKHS with kernel $k\colon \RR^d \times \RR^d \to \RR$, a solution to Eq.~\eqref{eq:proju} is  
$v\perpg = Du(x)=\E_{y\sim q_t}(k_\bot(x,y)s_\pi(y)+\nabla_y  \cdot k_\bot(x,y))$ with the orthogonal-space kernel $k_\bot(x,y)=k(x,y)D(x) D(y)$. 
Here $k_\bot\colon \RR^d \times \RR^d\to \RR^{d\times d}$ is matrix valued, and  
$\dd_y\cdot  k_\bot = \sum_{j}\partial_{y_j}k_\bot^{ij}(x,y)$. 
\end{lem}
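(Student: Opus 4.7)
The plan is to cast Eq.~\eqref{eq:proju} as a Stein-type variational problem over the matrix-valued reproducing kernel Hilbert space (RKHS) induced by the projection $D$, and then apply the standard RKHS optimality argument underlying SVGD.

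First, I would verify that $k_\bot(x,y) = k(x,y) D(x) D(y)$ is a positive semi-definite matrix-valued kernel: for any $\{(x_i, a_i)\}_{i=1}^n \subset \RR^d \times \RR^d$,
\[
\sum_{i,j} a_i^\top k_\bot(x_i,x_j)\, a_j = \sum_{i,j} k(x_i,x_j)\, (D(x_i) a_i)^\top (D(x_j) a_j) \geq 0,
\]
since $k$ is PSD and $D = D^\top$. Denote the associated matrix-valued RKHS by $\calH_{k_\bot}$. Every $v_\bot \in \calH_{k_\bot}$ is of the form $v_\bot(x) = D(x) h(x)$, and hence automatically satisfies $v_\bot \bot \nabla g$. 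Conversely, the image $\{Du : u \in \calH\}$ of the product RKHS under $D$, equipped with the induced quotient norm, is exactly $\calH_{k_\bot}$. Thus Eq.~\eqref{eq:proju} is equivalent to maximizing the same objective over $v_\bot \in \calH_{k_\bot}$.

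Next, I would apply Stein's identity via integration by parts (assuming appropriate decay of $q_t$) to rewrite the linear term as
\[
\E_{q_t}[(s_\pi - s_{q_t})^\top v_\bot] = \E_{q_t}[s_\pi^\top v_\bot + \nabla \cdot v_\bot].
\]
The reproducing property in $\calH_{k_\bot}$, namely $a^\top v_\bot(x) = \langle v_\bot,\, k_\bot(\cdot, x) a \rangle_{\calH_{k_\bot}}$ for every $a \in \RR^d$, together with exchange of expectation and inner product, lets us express this linear functional as $\langle v_\bot, \phi \rangle_{\calH_{k_\bot}}$ where
\[
\phi(x) = \E_{y \sim q_t}\bigl[k_\bot(x,y) s_\pi(y) + \nabla_y \cdot k_\bot(x,y)\bigr].
\]
Eq.~\eqref{eq:proju} then reduces to $\max_{v_\bot \in \calH_{k_\bot}} \langle v_\bot, \phi\rangle_{\calH_{k_\bot}} - \tfrac12 \|v_\bot\|_{\calH_{k_\bot}}^2$, whose unique maximizer is $v_\bot^{\ast} = \phi$, giving the stated formula.

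The main obstacle will be the bookkeeping that relates $\|Du\|_{\calH}$ in Eq.~\eqref{eq:proju} to $\|v_\bot\|_{\calH_{k_\bot}}$. One needs to verify: (i) every $v_\bot \in \calH_{k_\bot}$ can be written as $Du$ for some $u$ in the product RKHS; (ii) the induced norm $\inf_{u : Du = v_\bot} \|u\|_\calH$ matches the RKHS norm on $\calH_{k_\bot}$; and (iii) the formal interchange between expectations and RKHS inner products is valid, which requires integrability of $k_\bot$ against $q_t$ and $s_\pi$. Once these points are settled, the closed-form solution drops out of the RKHS minimization argument familiar from the original SVGD derivation.
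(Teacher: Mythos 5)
Your proposal follows essentially the same Riesz-representation/reproducing-property argument as the paper: both recognize that the orthogonal-space velocity lives in the (matrix-valued) RKHS generated by $k_\bot(x,y)=k(x,y)D(x)D(y)$, reduce Eq.~\eqref{eq:proju} to a concave quadratic in that space, and read off the closed form. The paper stays in the scalar-kernel product space $\H$, applies the Riesz representation of $u\mapsto\E_{q_t}[(D(s_\pi-s_{q_t}))^\top u]$ first and integrates by parts at the end, whereas you introduce $\calH_{k_\bot}$ explicitly and integrate by parts before invoking the reproducing property --- a reordering that is cosmetic, though your explicit flagging of points (i)--(iii) (in particular the identification of the quotient norm with the $\calH_{k_\bot}$ norm) makes the implicit technical assumptions more visible than the paper does.
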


Then the combined velocity is obtained using the original SVGD with the kernel $k_\bot$,
\begin{align*}
v_t(x)
&=v_\sharp(x)+\int k_\bot(x,y)s_\pi(y) q_t(y)dy+\int \nabla_y\cdot (k_\bot(x,y)) q_t(y)dy.
\end{align*}
Numerically, we iteratively update a set  of $n$ particles 
$\{x_{i, t} \}_{i=1}^n \subset \RR^d$, 
such that its empirical distribution  $\sum_{i=1}^n \delta_{\x_{i,t}}/n$ is an approximation of 
$q_t$
 in a proper sense when step size $\eta\to0$ and particle size $n\to+\infty$. Similar to the update of standard SVGD in Eq.~\eqref{eq:svgd-update}, the update of O-SVGD is $x_{i,t+1} = x_{i,t} + \eta\cdot \left (v_\sharp(x_{i,t}) +   \SVGD_{k\perpg}(x_{i,t})\right)$ where
\begin{align}\label{eq:o-svgd}
    \SVGD_{k\perpg} (x_{i,t}) &=\frac{1}{n}\sum_{j=1}^n k\perpg(x_{i,t},x_{j,t})\nabla_{x_{j,t}}\log\pi(x_{j,t})+\nabla_{x_{j,t}}k\perpg(x_{i,t},x_{j,t}).
\end{align}
It is worth noting that $\SVGD_{k\perpg}$ is identical to Eq.~\ref{eq:svgd-update} but with kernel $k_\bot$ rather than $k$.

\paragraph{O-Langevin}
The Langevin implementation requires some additional derivation. First of all, with $\calH=L^2_q$, we can show that the optimal  velocity field is 
$v_t(x)=\phi(x)-D(x)s_{q_t}(x)$ where 
$\phi(x)=v_\sharp(x)+D(x)s_\pi(x)$. This leads to a density flow
\begin{equation}
\label{eq:densityLD}
\frac{d}{dt} q_t(x)=-\nabla \cdot (\phi (x) q_t(x))+\nabla\cdot(D(x)\nabla q_t(x)).
\end{equation}
Next, we try to design a stochastic differential equation (SDE) of which the Fokker--Plank equation (FPE) is identical to Eq.~\eqref{eq:densityLD}. The result is given by the following:

\begin{thm}
\label{thm:LD}
Consider a vector field  $r(x)=\nabla\cdot D(x)$, or its component-wise formulation $r_i(x)=\sum_{j=1}^d \partial_{x_j}D_{i,j}(x)$, where $x_j$ denotes the $j$th dimension of $x$.
Consider the SDE 
\begin{equation}
\label{eq:CPGFLD}
dx_t=(\phi(x_t)+r(x_t))dt+\sqrt{2} D(x_t)dW_t \end{equation}
with $\phi(x)=v_\sharp(x)+D(x)s_\pi(x)$, then its FPE is identical to Eq.~\eqref{eq:densityLD}. 
Moreover, i)
the value $g(x_t)$ has deterministic decay $\frac{d}{dt}g(x_t)=-\psi(x_t)$; ii)
for any $f$ with $\nabla f \bot \nabla g=0$, the generator of $x_t$ satisfies the Langevin equation:
$\frac{d}{dt} \E[f(x_t)|x_0=x]\big |_{t=0} \defeq 
\mathcal{L} f(x)=\nabla f^\top(x) s_{\pi}(x)+\Delta f(x)$.
\end{thm}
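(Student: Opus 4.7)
The plan is to exploit the algebraic structure of $D$, namely that it is the symmetric idempotent projector onto $(\nabla g)^\perp$, together with Ito's formula and the standard form of the Fokker--Planck equation for SDEs with state-dependent diffusion. The single identity I would like to isolate first is
\[
\nabla \cdot (D \nabla f) = r^\top \nabla f + \mathrm{tr}(D \nabla^2 f),
\]
which follows by the product rule once one uses $D = D^\top$ and the definition $r_i = \sum_j \partial_j D_{ij}$. This identity will be the common engine behind all three claims.

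For the FPE claim, I would write the standard FPE associated with Eq.~\eqref{eq:CPGFLD}. Because $D^\top = D$ and $D^2 = D$, the diffusion coefficient $(\sqrt{2} D)(\sqrt{2} D)^\top$ collapses to $2D$, so the second-order term of the FPE is $\sum_{ij} \partial_i \partial_j (D_{ij} q_t)$. Expanding by the product rule splits this into $\nabla \cdot (r q_t) + \nabla \cdot (D \nabla q_t)$, and the first piece exactly cancels the $r$ contribution from the drift, leaving $-\nabla \cdot (\phi q_t) + \nabla \cdot (D \nabla q_t)$, which matches Eq.~\eqref{eq:densityLD}.

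For the two generator claims, I would apply Ito's formula to get $\mathcal{L} f = (\phi + r)^\top \nabla f + \mathrm{tr}(D \nabla^2 f)$ and then collapse the last two terms via the identity above to obtain the compact divergence-form expression
\[
\mathcal{L} f = \phi^\top \nabla f + \nabla \cdot (D \nabla f).
\]
Claim~(i) then follows by taking $f = g$: the stochastic integral $\sqrt{2}(\nabla g)^\top D\, dW_t$ vanishes since $D \nabla g = 0$, and $\mathcal{L} g = v_\sharp^\top \nabla g + s_\pi^\top D \nabla g + \nabla \cdot (D \nabla g) = -\psi(g) + 0 + 0$, giving the deterministic ODE $\frac{d}{dt} g(x_t) = -\psi(g(x_t))$. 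Claim~(ii) follows when $\nabla f \perp \nabla g$: in that case $D \nabla f = \nabla f$, so $v_\sharp^\top \nabla f = 0$, $s_\pi^\top D \nabla f = s_\pi^\top \nabla f$, and $\nabla \cdot (D \nabla f) = \nabla \cdot \nabla f = \Delta f$, yielding $\mathcal{L} f = s_\pi^\top \nabla f + \Delta f$.

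The step most prone to error is the Ito/FPE bookkeeping, specifically verifying that the extra drift $r$ is precisely what is needed to absorb the derivatives that land on $D$ when one commutes $\nabla$ past the state-dependent diffusion. Once the compact identity $\mathcal{L} f = \phi^\top \nabla f + \nabla \cdot (D \nabla f)$ is in hand, the rest reduces to pointwise projector algebra: annihilation along the parallel direction $\nabla g$ for Claim~(i), and identity action on its orthogonal complement for Claim~(ii).
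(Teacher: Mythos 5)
Your proof is correct and takes a genuinely cleaner route than the paper for claims (i) and (ii). The paper first computes $r$ explicitly in terms of derivatives of $g$, then for claim (i) verifies the cancellation $\nabla g^\top r + \mathrm{tr}(D\nabla^2 g\, D) = 0$ by a lengthy component-wise calculation, and for claim (ii) differentiates the pointwise orthogonality $\nabla f^\top\nabla g = 0$ to obtain $\nabla^2 f\,\nabla g + \nabla^2 g\,\nabla f = 0$ and uses this to cancel the curvature terms. You sidestep both computations with the single divergence-form identity $\nabla\cdot(D\nabla f) = r^\top\nabla f + \mathrm{tr}(D\nabla^2 f)$, which needs only $D = D^\top$ and the definition $r_i = \sum_j\partial_j D_{ij}$. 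Claim (i) then follows because $D\nabla g$ vanishes identically as a vector field, so $\nabla\cdot(D\nabla g)=0$, and the martingale term drops since $\nabla g^\top D = 0$; claim (ii) follows because $\nabla f$ lies pointwise in the range of the projector $D$, so $D\nabla f = \nabla f$ and $\nabla\cdot(D\nabla f) = \Delta f$. The net effect is that your argument never needs the explicit formula for $r$ in terms of $g$ inside the proof of Theorem~\ref{thm:LD}, whereas the paper's explicit computation of $r$ is in any case also used for implementing the SDE. The FPE part of your argument matches the paper's: identical cancellation of the $r$-drift against the derivative landing on $D$ in the second-order term.
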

It is worth pointing out that if $g(x_0)=0$, then $g(x_t)\equiv 0$, that is, $x_t$ always stays on $\calG_0$. In this case, Eq.~\eqref{eq:CPGFLD} degenerates to manifold Langevin dynamics studied in previous work~\citep{girolami2011riemann,wang2020fast}. However, our SDE does not have this requirement since \emph{it is still well-defined off $\calG_0$}. This is especially useful for numerical implementations, leading to a fast algorithm without expensive projection steps.

\begin{algorithm}[t]
  \caption{O-SVGD and O-Langevin.}
  \begin{algorithmic}
    \label{alg:o-sampling}
    \STATE \textbf{given:} Initialization $\{x_{i,0}\}_{i=1}^n$ for O-SVGD and $x_0$ for O-Langevin, step size $\eta$.
      \LOOP
      \STATE \textbf{compute} $v_\sharp(x)=\frac{-\psi(g(x))\nabla g(x)}{\|\nabla g(x)\|^2}$ 
      and the projection operator $D(x)=I-\frac{\nabla g(x)\nabla g(x)^\top}{\|\nabla g(x)\|^2}$.
    \STATE \textbf{if \texttt{O-SVGD}}, update $\{x_{i,t}\}_{i=1}^n$ by Eq.~\eqref{eq:o-svgd}.
    \STATE \textbf{if \texttt{O-Langevin}}, update $x_t$ by Eq.~\eqref{eq:o-langevin}.
    \ENDLOOP

  \end{algorithmic}
\end{algorithm}

Similar to the standard Langevin dynamics update in Eq.~\eqref{eq:langevin-update}, the update rule of O-Langevin is $x_{t+1} = x_t + \eta\cdot v_\sharp(x_t) + \Langevin\perpg(x_t)$ where
\begin{align}\label{eq:o-langevin}
    \Langevin\perpg(x_t) &= \eta D(x_t)s_\pi(x_t) + \eta r(x_t)+ \sqrt{2\eta}D(x_t)\xi_t,~~\xi_t\sim\mathcal{N}(0,I).
\end{align}

\section{Theoretical Analysis}\label{sec:theory}

We theoretically justify the convergence of O-Gradient in this section. To do so, we first describe the target measure as a conditioned measure $\Pi_0$, then derive its associated orthogonal-space Fisher divergence, and finally prove that O-Gradient converges to $\Pi_0$. 
\subsection{Conditioned measure and its Stein characterization}
First of all, conditioning on a zero-measure set is a challenging concept.
Assume we have a distribution $\Pi$ with density $\pi$. Let $A$ be a set with $\Pi(A)\neq 0$, then $\Pi(B|A) = \frac{\Pi(B\cap A)}{\Pi(A)}.$ However, if $\Pi(A) = 0$, this definition is ill-posed. 
Instead, a standard way to define a conditioned measure on a zero measure set is using disintegration theorem \citep{chang1997conditioning}. 
Let $\Pi_z(\,\cdot\,) = \Pi(\,\cdot\,| g(x) = z)$ be the measure such that 
\bbb
\label{eq:condmeas}
\E_\Pi[ f(x)] = 
\E_{z \sim \Pi^g} \E_{x\sim \Pi_z} [f(x)],~~\forall f,
\eee
where $\Pi^g$ is the pushforward measure of $\Pi$ under $g$. 
It is worth noting that $\Pi_0$ \emph{is not} the measure with Hausdorff density $\pi$ on $\calG_0$. Instead, under regularity conditions, $\Pi_0$ can also be defined through Hausdorff density $\pi(x)/|\nabla g(x)|$ on $\calG_0$, see Lemma 6.4.1 of \cite{simon2014introduction}.

One can also avoid this rather abstract definition, and consider 
an alternative definition which is more intuitive and Bayesian.  
Suppose we observe $z = g(x) + \sigma\xi$,  where $\xi\sim \normal(0,1)$. Under prior $x\sim\pi$, the posterior $\pi_{\sigma^2,z}(x) \propto \pi(x) \exp(-|z -g(x)|^2/2\sigma^2)$.   
Then we can show that $\Pi_z$ is the (weak) limit of $\pi_{\sigma^2,z}$ as $\sigma \to 0$, moreover, it satisfies  a \emph{Stein characterization of conditional measure}. We need  some regularity assumptions to state our results. 
   
\begin{mydef}\label{def:greg}
A density $q$ is $g$-regular if there is a constant $L$ such that for small enough $\eta>0$, 
\begin{equation}
\label{eq:greg}
|\E_{\xi\sim \mathcal{N}(0,1)}[q^g(z+\eta\xi)/q^g(z)]-1|\leq L\eta,\quad \forall z.
\end{equation}
\end{mydef}
Note that Eq.~$\eqref{eq:greg}$ holds if $q^g$ is a Lipschitz function. Moreover, while Definition~\ref{def:greg} requires Eq.~\eqref{eq:greg} to hold for all $z$, in practice we only concern $\Pi_0$, thus only need the equation to hold for $z$ near $0$. 

\begin{pro}
\label{pro:disint}
Suppose $\pi$ is $g$-regular. Then the weak   limit of $\pi_{\eta,z}(x)\propto \pi(x)\exp(-\frac1{2\eta} (g(x)-z)^2)$ 
as $\eta\to 0$ concentrates on $\calG_z$ and satisfies Eq.~\eqref{eq:condmeas}. Moreover, for each $z$,  
\bbb
\label{eq:projstein}
\E_{\Pi_z} \left [\stein_\pi \phi \right] =0,\quad \forall \phi\bot \nabla g. 
\eee
\end{pro}
The first part of Proposition \ref{pro:disint} also provides us with a way to sample $\Pi_z$, as we can sample a sequence of distribution $\pi_{\eta_k,z}$ with $\eta_k\to 0$. However, this method involves double loops and is usually much more expensive than single-loop algorithms.  

The second part of Proposition \ref{pro:disint} can be used to formulate an orthogonal-space Stein equation. Specifically, given a $q$ concentrated on $\calG_z$, we consider checking whether $\E_q [\stein_\pi \phi]$ is zero for $\phi\in \calH_\bot=\{\phi: \phi\bot \nabla g\}$, which is a necessary condition for $q=\Pi_z$. 
Notably, this equation is well defined in $\RR^d$ without using any parameterization of $\calG_z$.

Compared with the standard Stein equation,  we add an additional restriction that  $\phi\in\calH_\bot$.
We argue such restriction is actually very natural. To see that, we note $\Pi_z$ is concentrated on $\calG_z$, so we should check  the Stein equation with vector field that ``live'' only on $\calG_z$, namely the tangent bundle $\mathcal{T} \calG_z$. Notably, if $\phi(x)\in \mathcal{T} \calG_z$, then $\phi(x)\bot \dd g(x)$, therefore it is natural to consider $\calH_\bot$.

\subsection{Orthogonal-Space Fisher divergence}

Next we consider how to generalize Fisher divergence for constrained sampling. Since the orthogonal-space Stein equation can only discern discrepancy in $\calH_\bot$, it is natural to consider an orthogonal-space Fisher divergence 
\begin{equation}
\label{eq:projfisher}
F_{\bot}(q,\pi):=\|D(s_q-s_\pi)\|^2_{\calH},     
\end{equation}
where $D$ is the $\calH_\bot$-projection operator defined pointwisely using Eq.~\eqref{eq:projD}. Its easy to see $F_\bot$ is well defined for a density $q$ on $\RR^d$. But it remains unclear whether it can be used to measure the distance between $q$ and $\Pi_z$. 

To answer this question, we need some form of strong convexity conditions.
Recall that in the unconstrained case,  such conditions can be characterized by Poincar\'{e} inequalities (PI) when $\H$ is $L^2_q$, so we will focus on this setting. 
PI counterparts of SVGD remain an open question \cite{gorham2017measuring,duncan2019geometry}, 
which we hope to be addressed by future works.
 When constrained on $\calG_z$, PI's formulation involves gradients so it depends on the Riemannian metric used in  $\calG_z$. One natural choice of Riemannian metric is the metric inherited from $\RR^d$ which equals $\|D(x)\nabla f(x)\|$, since $D(x)$ is also the projection onto the tangent bundle of $\calG_z$. This leads to the following definition.

\begin{mydef}
$\Pi_z$ follows a $\kappa$-PI, if for any smooth $f$ on $\RR^d$, $\var_{\Pi_z} [f]\leq \kappa \E_{\Pi_z}[\|D\nabla f\|^2]$.
\end{mydef}
PI on Riemannian manifold has been studied in previous work~\citep{hebey2000nonlinear,li2006weighted}. In particular, Theorem 2.10 of \cite{hebey2000nonlinear} shows that PI holds for the Riemannian measure if the manifold $\calG_z$ is compact, and hence it also holds for any equivalent distributions.
But it is often hard to find the exact PI constant $\kappa$ for $\Pi_z$, especially if we are interested in a family of $\Pi_z$ with possibly discontinuous dependence on $z$.

\begin{pro}
\label{pro:meandiff}
Suppose that $\Pi_z$ satisfies $\kappa$-PI for $|z|\leq \delta$, and $q$ is supported on $\{x: |g(x)|\leq \delta\}$. Suppose also that $\Pi_z, q^g, \pi^g$ admit $C^1$ density functions.
Then for any function $f$ such that $|f|\leq 1$, the following holds
\begin{align*}
|\E_{q} [f]-\E_{\Pi_0} [f]|\leq &\sqrt{8\kappa\E_q [\|D(s_q-s_\pi)\|^2]} 
~+~ \max_{|z|\leq \delta} |\E_{\Pi_z} [f]-\E_{ \Pi_0} [f]|. 
\end{align*}
\end{pro}
In particular, Proposition \ref{pro:meandiff} shows that if we have a  $q$ such that it is 1) supported close to $\calG_0$ and 2) $F_\bot (q,\pi)$ is small, then $q$ is close to $\Pi_0$ in total variation. We can also interpret the bound alternatively as a decomposition of the difference between $q$ and $\Pi_0$: the first part measures the difference along $\calH_\bot$ directions and it is controlled by $F_\bot$; the second part measures the difference along $\nabla g$ direction, and it is controlled by the distance between $q$'s support  to $\calG_0$.

As a final remark, the orthogonal-space Stein equation \eqref{eq:projstein} and orthogonal-space Fisher divergence \eqref{eq:projfisher} can be applied to densities supported on $\RR^d$, and their formulations do not require information of $\calG_0$ such as parameterization and geodesic. Therefore, they can be used as very computationally friendly statistical divergence in practice.

\begin{pro}
\label{pro:flow}
Suppose we apply $v_t=v_\sharp+Du$ with \eqref{eq:vsharp} and \eqref{eq:proju} to the density field $q_t$.
\begin{enumerate}
    \item Let $M_t=\max\{|g(x)|,x\in \text{supp}(q_t)\} $. Let $S_t$ be the solution of an ordinary differential equation $\dot{S}_t=-\psi(S_t).$ Suppose $S_0=M_0<\infty$, then $M_t<S_t$ a.s..
    \item Assume $\psi$ is differentiable  with derivative  $\dot \psi$.  Suppose $g$ is also smooth enough so that
    \begin{equation}
        \label{eq:regular}
        \left|\nabla g^T s_\pi+\Delta g-\frac{2\nabla g^T \nabla^2 g \nabla g}{ \|\nabla g\|^2}\right|\leq C_0\|\nabla g\|^2.
    \end{equation}
\end{enumerate}
Then the KL-divergence follows $\frac{d}{dt}\KL(q_t\|\pi)\leq -F_\bot(q_t,\pi)+\E_{q_t} [|\dot{\psi}(g)|]+C_0\E_{q_t} [|\psi(g)|\|\nabla g\|^2].$
\end{pro}
The first result shows that the support of $q_t$ will shrink towards $\calG_0$. But in order to avoid possible stagnation case, e.g. $\|g\|$ is too small, it is necessary to impose \eqref{eq:regular}. 
The second result provides us some hint on the choice of $\psi$. Note that under $q_t$ for large enough $t$, $g(x)$ will take values close to $0$, so we need $\psi(0)=\dot{\psi}(0)=0$. Since $\psi(x)$ should have the same sign with $x$,  one natural choice would be $\psi(y)=\alpha\text{sign}(y)|y|^{1+\beta}$.
\begin{thm}
\label{thm:grad}
Suppose we choose $\psi(y)=\alpha\text{sign}(y)|y|^{1+\beta}$ for an $\alpha>0,\beta\in (0,1]$ and the conditions of Proposition \ref{pro:flow} hold. Suppose also that $\KL(q_0,\pi)<\infty$ and $\|\nabla g(x)\|\leq D_0$ if $|\psi(x)|<M_0$,  then 
$M_T=O(T^{-\frac1\beta}),$
$\min_{t\leq T} F_\bot(q_t,\pi)=O(\log T/T)$.
The orthogonal-space Stein equality holds approximately, in the sense that for any $\phi(x)\bot \nabla g(x)$ and $\|\phi\|_{\H}=1$, 
$\min_{t\leq T}|\E_{q_t} \stein_{\pi} \phi|\leq \sqrt{F_\bot(q_t,\pi)}=O(\sqrt{\log T/T}).$
\end{thm}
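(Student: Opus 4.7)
The plan is to dispatch the three claims in sequence. The bound on $M_T$ follows by directly integrating the ODE from Proposition~\ref{pro:flow}(1): with $\psi(y)=\alpha\,\mathrm{sign}(y)|y|^{1+\beta}$, the envelope $S_t$ satisfies $\dot S_t=-\alpha S_t^{1+\beta}$ (taking $S_0>0$; the symmetric case is identical), which separates to $S_t=(S_0^{-\beta}+\alpha\beta t)^{-1/\beta}$, giving $M_T\leq S_T = O(T^{-1/\beta})$.

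For the Fisher-divergence rate, I would integrate the KL-decay inequality from Proposition~\ref{pro:flow}(2). Substituting $\dot\psi(y)=\alpha(1+\beta)|y|^\beta$ and $|\psi(y)|=\alpha|y|^{1+\beta}$ and using $|g(x)|\leq M_t\leq S_t$ on $\mathrm{supp}(q_t)$, the first error term is bounded by
\[
\E_{q_t}[|\dot\psi(g)|]\leq \alpha(1+\beta)M_t^\beta\leq \frac{\alpha(1+\beta)}{S_0^{-\beta}+\alpha\beta t}=O(1/t),
\]
whose integral over $[0,T]$ is $O(\log T)$. For the second term, once $t\geq T_0$ is large enough that $\alpha M_t^{1+\beta}<M_0$, the hypothesis ``$\|\nabla g\|\leq D_0$ whenever $|\psi(g)|<M_0$'' applies uniformly on $\mathrm{supp}(q_t)$, giving
\[
C_0 \E_{q_t}[|\psi(g)|\|\nabla g\|^2]\leq \alpha C_0 D_0^2 M_t^{1+\beta}=O(t^{-(1+\beta)/\beta}),
\]
whose integral is bounded since $(1+\beta)/\beta>1$. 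The short early interval $[0,T_0]$ contributes only a finite additive constant. Combining these with $\KL(q_T\|\pi)\geq 0$ and $\KL(q_0\|\pi)<\infty$ yields $\int_0^T F_\bot(q_t,\pi)\,dt\leq \KL(q_0\|\pi)+O(\log T)$, hence $\min_{t\leq T}F_\bot(q_t,\pi)=O(\log T/T)$.

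For the Stein claim, since $\phi\bot \nabla g$ implies $D\phi=\phi$ pointwise, the identity $\E_{q_t}[\stein_\pi \phi]=\E_{q_t}[(s_\pi-s_{q_t})^\top \phi]=\E_{q_t}[(D(s_\pi-s_{q_t}))^\top \phi]$ combined with Cauchy--Schwarz in $L^2_{q_t}$ gives
\[
|\E_{q_t}[\stein_\pi \phi]|\leq \|D(s_\pi-s_{q_t})\|_{L^2_{q_t}}\|\phi\|_{L^2_{q_t}}=\sqrt{F_\bot(q_t,\pi)}\,\|\phi\|_\calH,
\]
and evaluating at the argmin from the previous step finishes the rate. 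The main technical obstacle will be clean handling of the transition time $T_0$ after which $\alpha M_t^{1+\beta}<M_0$: this requires tracking how the initial support of $q_0$ enters the ``good'' region through the envelope $S_t$ and showing that the short early interval is absorbed as a finite constant without disturbing the $O(\log T/T)$ rate.
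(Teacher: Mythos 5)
Your proposal is correct and follows essentially the same route as the paper: integrate the envelope ODE to bound $M_T$, plug $M_t=O(t^{-1/\beta})$ into the KL-decay inequality from Proposition~\ref{pro:flow} to get the two error terms of orders $O(1/t)$ and $O(t^{-(1+1/\beta)})$, integrate, and then use the projection identity $D\phi=\phi$ plus Cauchy--Schwarz for the Stein bound. You are in fact a little more explicit than the paper about the transition time $T_0$ after which $\|\nabla g\|\leq D_0$ applies on $\mathrm{supp}(q_t)$ (the paper silently absorbs this into the constant $C_1$), and your Cauchy--Schwarz derivation of $|\E_{q_t}\stein_\pi\phi|\leq\sqrt{F_\bot(q_t,\pi)}$ actually matches the theorem statement more cleanly than the paper's Riesz-representation version, which carries a spurious factor of $1/2$.
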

Theorem \ref{thm:grad} shows that we can obtain a $q_t$ that is supported on $\calG_{[-\delta,\delta]}=\{x:|g(x)|\leq \delta\}$, while it has close to zero orthogonal-space Fisher divergence and the  Stein identity holds approximately on $\calH_\bot$. Combining it with Proposition \ref{pro:meandiff}, we have 
\begin{cor}
Under the conditions of Theorem \ref{thm:grad}, for a bounded $f$ with $\calH=L^2_q$, suppose
 $h_f(z)=\E_\pi [f|g(X)=z]=\E_{\Pi_z} [f]$ is Lipschitz with near $z=0$, then 
$\min_{t\leq T}|\E_{q_t} [f]-\E_{\Pi_0} [f]|\leq O(\sqrt{\log T/T}).$
\end{cor}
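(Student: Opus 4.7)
The plan is to combine the two conclusions of Theorem \ref{thm:grad} with Proposition \ref{pro:meandiff} at a carefully chosen time $t^*\in[0,T]$. The subtlety is that Theorem \ref{thm:grad} only guarantees $\min_{t\le T} F_\bot(q_t,\pi)=O(\log T/T)$ and separately $M_T=O(T^{-1/\beta})$; these conclusions must be joined at a single time in order to feed them into Proposition \ref{pro:meandiff}.

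First I would locate such a $t^*$ in the second half of the time interval. Proposition \ref{pro:flow} actually gives an integrated bound $\int_0^T F_\bot(q_s,\pi)\,ds\le \KL(q_0\|\pi)+O(\log T)$ (the right-hand side being the same quantity that yields the $\log T/T$ rate in Theorem \ref{thm:grad}). Restricting to $[T/2,T]$ gives $\min_{t\in[T/2,T]}F_\bot(q_t,\pi)=O(\log T/T)$; pick $t^*$ achieving this min. Part~1 of Proposition \ref{pro:flow} shows $M_t\le S_t$ with $S_t$ monotonically decreasing, so $M_{t^*}\le M_{T/2}=O((T/2)^{-1/\beta})=O(T^{-1/\beta})$. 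Thus at this single time $t^*$, both the support radius and the orthogonal-space Fisher divergence are controlled.

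Next I would invoke Proposition \ref{pro:meandiff} with $q=q_{t^*}$ and $\delta=M_{t^*}$; after rescaling so that $\|f\|_\infty\le 1$ (absorbed into constants since $f$ is bounded), this yields
\[
|\E_{q_{t^*}}[f]-\E_{\Pi_0}[f]|\le \sqrt{8\kappa\, F_\bot(q_{t^*},\pi)}+\max_{|z|\le M_{t^*}}|h_f(z)-h_f(0)|.
\]
The first term is $\sqrt{8\kappa\cdot O(\log T/T)}=O(\sqrt{\log T/T})$. For the second, the stated Lipschitz continuity of $h_f$ near $z=0$ gives $\max_{|z|\le M_{t^*}}|h_f(z)-h_f(0)|\le L\, M_{t^*}=O(T^{-1/\beta})$. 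Since $\beta\in(0,1]$ implies $1/\beta\ge 1$, this term is $O(T^{-1})$, which is dominated by $O(\sqrt{\log T/T})$. Adding the two bounds and taking a min over $t\le T$ gives the claim.

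The main obstacle is the one flagged above, namely the alignment issue: the time minimizing $F_\bot$ must simultaneously have small $M_t$. This is handled purely by the monotone decay of $M_t$ (from Proposition \ref{pro:flow}) together with the averaging argument restricted to $[T/2,T]$; nothing new beyond the cited results is required. A secondary technical point to check is that the Poincaré-inequality hypothesis of Proposition \ref{pro:meandiff} holds for all $|z|\le M_{t^*}$, which is automatic for large $T$ since $M_{t^*}\to 0$ and $\Pi_z$ is assumed to admit PI in a neighborhood of $z=0$ under the running assumptions.
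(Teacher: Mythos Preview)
Your proposal is correct and follows exactly the route the paper intends: the paper does not give a separate proof of the corollary but simply says it follows by ``combining [Theorem \ref{thm:grad}] with Proposition \ref{pro:meandiff}'', and your write-up spells out precisely that combination. In particular, the alignment trick you highlight---taking the minimizer of $F_\bot$ on $[T/2,T]$ so that the monotone bound $M_{t^*}\le M_{T/2}=O(T^{-1/\beta})$ applies simultaneously---is already built into the paper's proof of Theorem \ref{thm:grad} (where the min is explicitly taken over $T/2\le t\le T$), so no extra work is needed there.
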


\section{Experiments}\label{sec:exp}
We demonstrate the effectiveness of our methods on various tasks, including a constrained synthetic distribution, an income classification with a fairness constraint, a loan application with a logic constraint and Bayesian deep neural networks with a robustness constraint. For all MCMC methods, we run $n$ parallel chains and collect the final samples. For all SVGD, we use $n$ particles with an RBF kernel. We released the code at \url{https://github.com/ruqizhang/o-gradient}.

\paragraph{Synthetic Distribution}
We first demonstrate our methods on a two-dimensional synthetic distribution where the ground truth samples are available. Let $y\sim\mathcal{N}(0,I)$, and we transform $y$ to $x=[x_1,x_2]$ by $x=\phi^{-1}(y)$ where 
$\phi(x) =[
x_1 + x_2^3, 
x_2]\tt$. 
We aim to sample from $\pi(x)$ with constraint $g(x) = x_1 + x_2^3=0$. We are able to generate  ground truth samples by first generating $y_0\sim\mathcal{N}(0,1)$ and then setting $x_{\text{gt}}=[-y_0^3,y_0]$. We report the energy distance between the collected samples and the ground truth samples (measuring how well samples approximate the target distribution) and the mean absolute error (MAE) $|g(x)|$ (measuring how well the samples satisfy the constraint). We set $n=50$ for this task.

In order to compare with previous manifold sampling methods, which all require initializations on the manifold, we first consider the case when the sampler starts on the manifold. We compare with constrained Langevin dynamics (CLangevin) and constrained Hamiltonian Monte Carlo (CHMC), which are two advanced manifold samplers~\citep{lelievre2019hybrid}. From Figure~\ref{fig:toy}a, we observe that O-SVGD and O-Langevin converge after 2000 and 4000 steps respectively whereas CLangevin and CHMC have not fully converged even after 5000 steps. These results indicate that projection-based methods could be inefficient due to finding sub-optimal solutions in the projection subroutines. In terms of runtime, we observe that O-Langevin converges the fastest (see Appendix for comparisons). Figure~\ref{fig:toy}b shows that previous methods have almost zero MAE all the time since they are required to stay on the manifold. Our methods have reasonably small MAE (but not zero due to operating in the ambient space) even without expensive projection steps. O-Langevin has more wiggly trajectories than O-SVGD due to injecting the Gaussian noise.

Next, we test our methods starting outside the manifold, where previous manifold sampling methods are not applicable. From Figure~\ref{fig:toy}d\&e, we observe again that O-SVGD converges the fastest in terms of iterations and both methods converge very quickly to the manifold and are able to stay close to it.

\begin{figure}[t!]
    \centering
    \begin{tabular}{cccc}		
    		\hspace{-6mm}
    	\includegraphics[width=3.5cm]{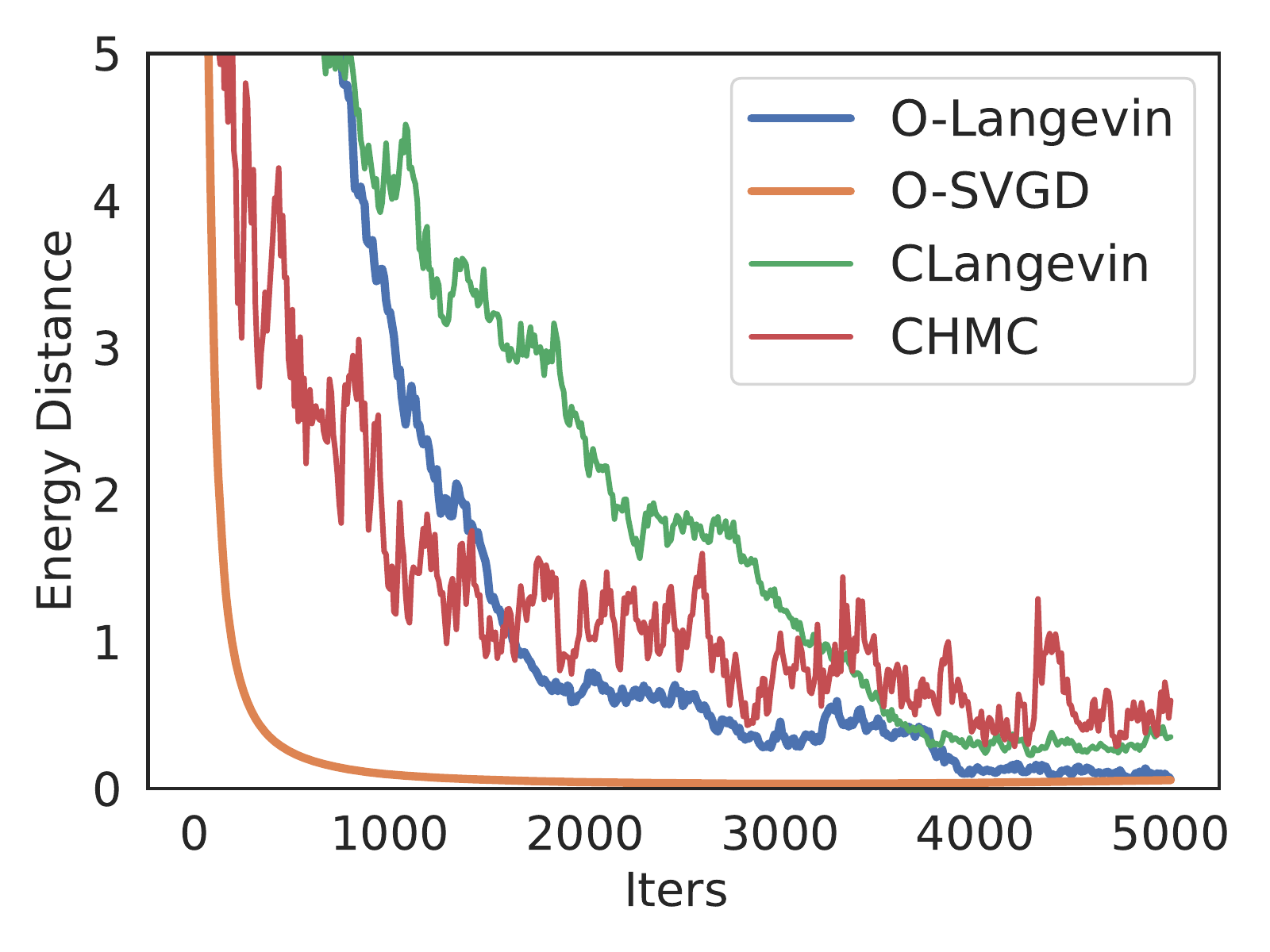}  &
    		\hspace{-6mm}
    	\includegraphics[width=3.5cm]{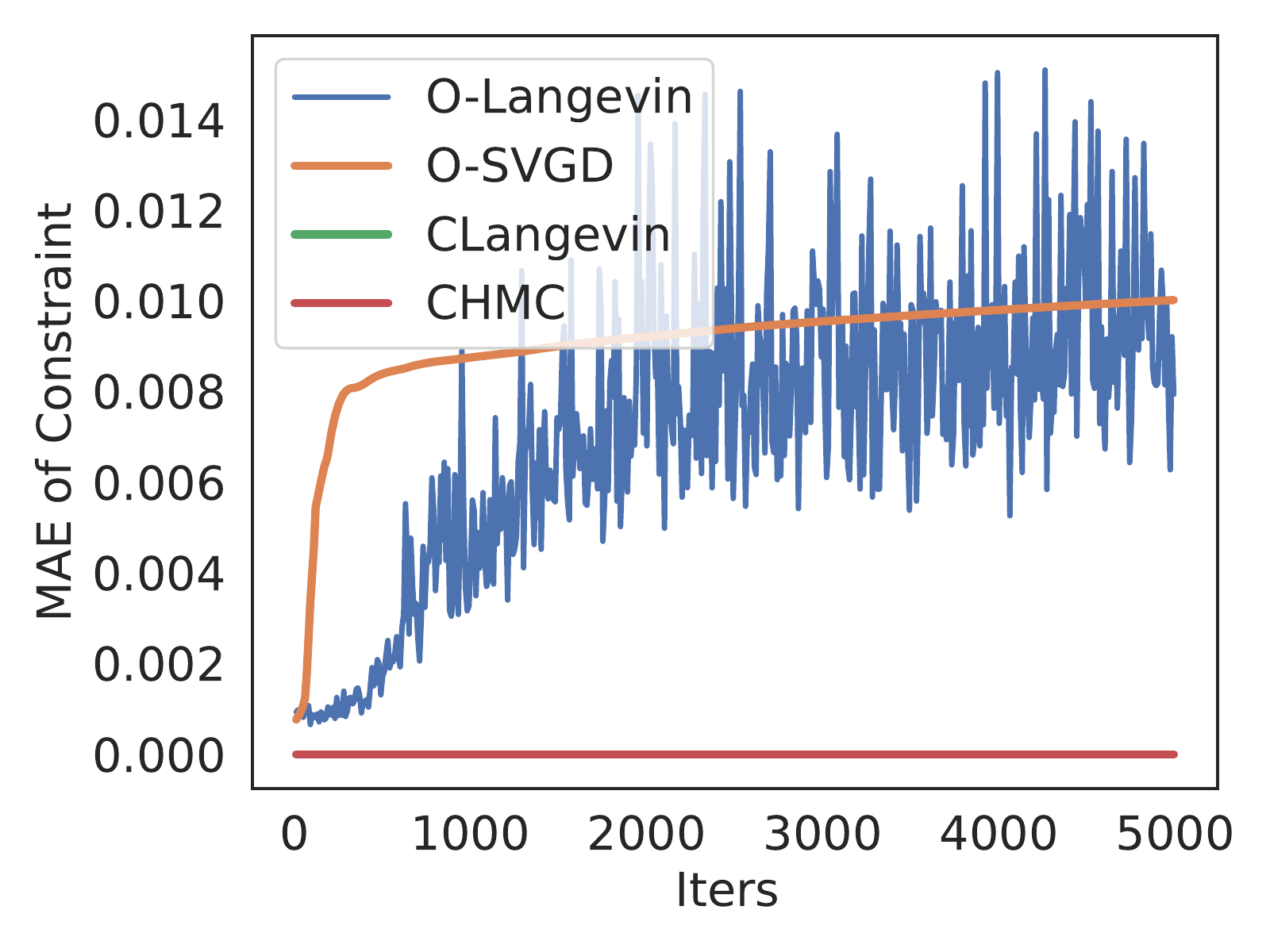} &
    	\hspace{-6mm}
    	\includegraphics[width=3.5cm]{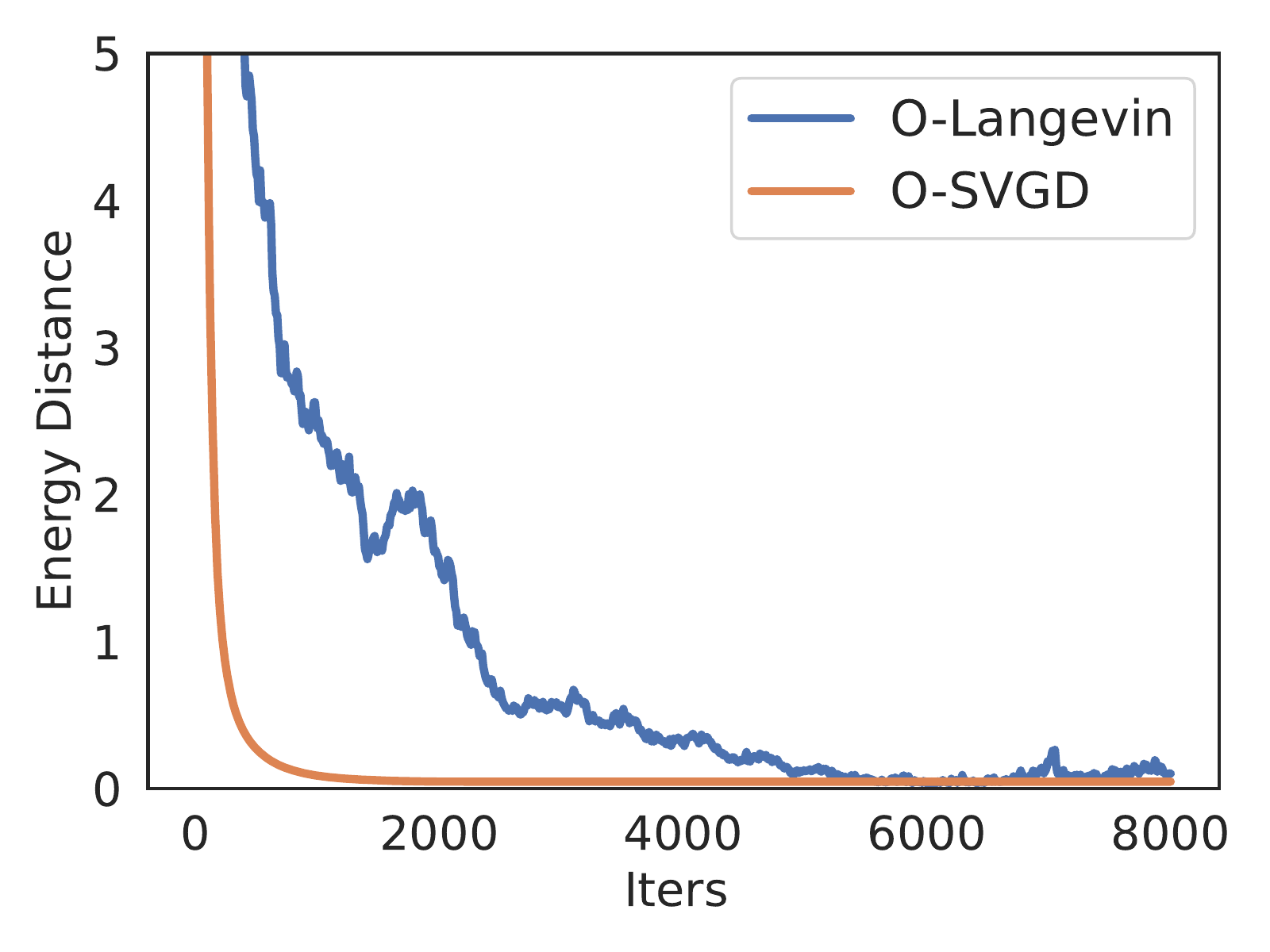}  &
    		\hspace{-6mm}
    	\includegraphics[width=3.5cm]{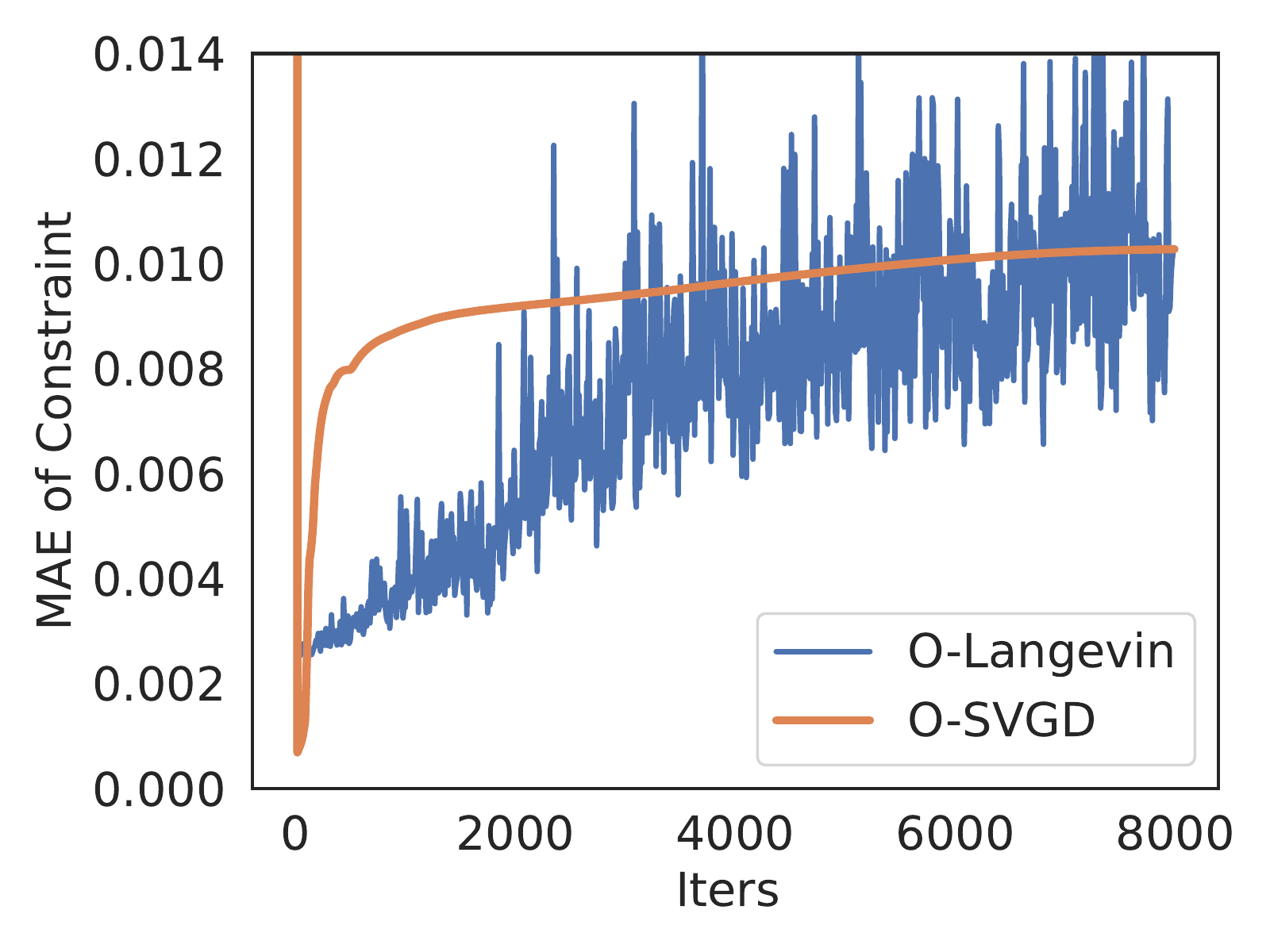}
    	\\	
    	(a)&
    	(b)&
    	(c)&
        (d)\\
    \end{tabular}
    \caption{Energy distance and MAE w.r.t. iterations when starting: (a)\&(b) on the manifold and (c)\&(d) outside the manifold.}
    \label{fig:toy}
\end{figure}

\paragraph{Income Classification with Fairness Constraint}
ML Fairness has gained increasing attention in recent years, which aims to guarantee unbiased treatments for individuals w.r.t. gender, race, disabilities, etc. Following previous work~\citep{martinez2020minimax,liu2020accuracy}, we predict whether an individual's annual income is greater than $50,000$ unfavorably in terms of the gender. We use \emph{Adult Income} dataset~\citep{kohavi1996scaling} and train a Bayesian neural network with two-layer multilayer perceptron (MLP). During testing, we use CV score, a standard fairness measure of disparate impact, as the evaluation metric~\citep{calders2010three}. On this task, CV score is defined as $|p(\text{income } 50,000|\text{female}) - p(\text{income } 50,000|\text{male})|$. Training and testing performance of our methods and standard algorithms are shown in Figure~\ref{fig:fairness}. We find that during training and testing, both O-Langevin and O-SVGD satisfy the constraint very well while maintaining high training LL and test accuracy. In contrast, standard Langevin and SVGD violate the constraint significantly, indicating that they fail to make fair predictions. These results demonstrate the power of O-Gradient which keeps the sampler on the manifold while still correctly sampling from $\pi(x)$.

\begin{figure}[t!]
    \centering
    \begin{tabular}{cccc}		
    	\hspace{-6mm}
    	\includegraphics[width=3.6cm]{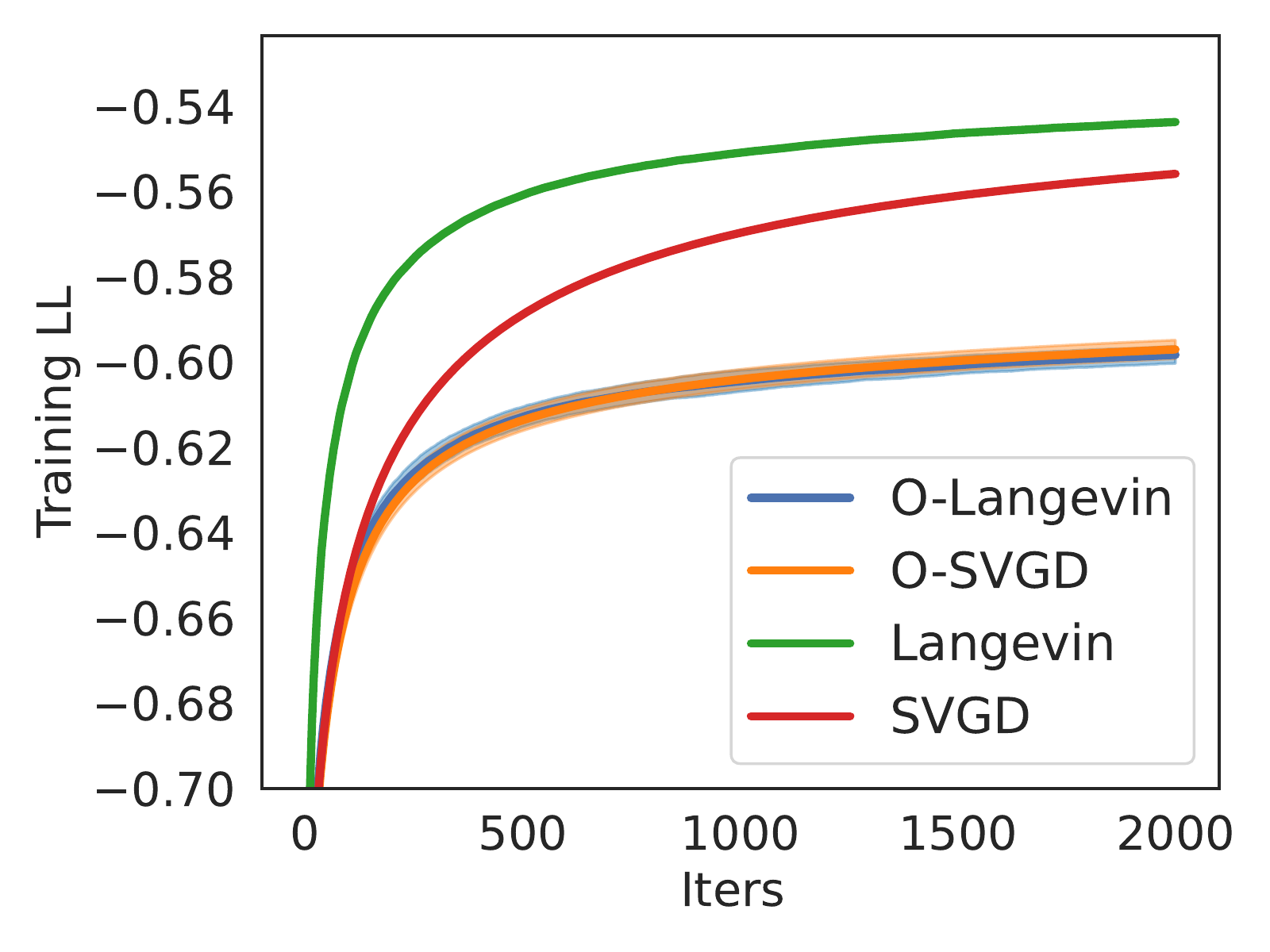}  &
    		\hspace{-6mm}
    	\includegraphics[width=3.6cm]{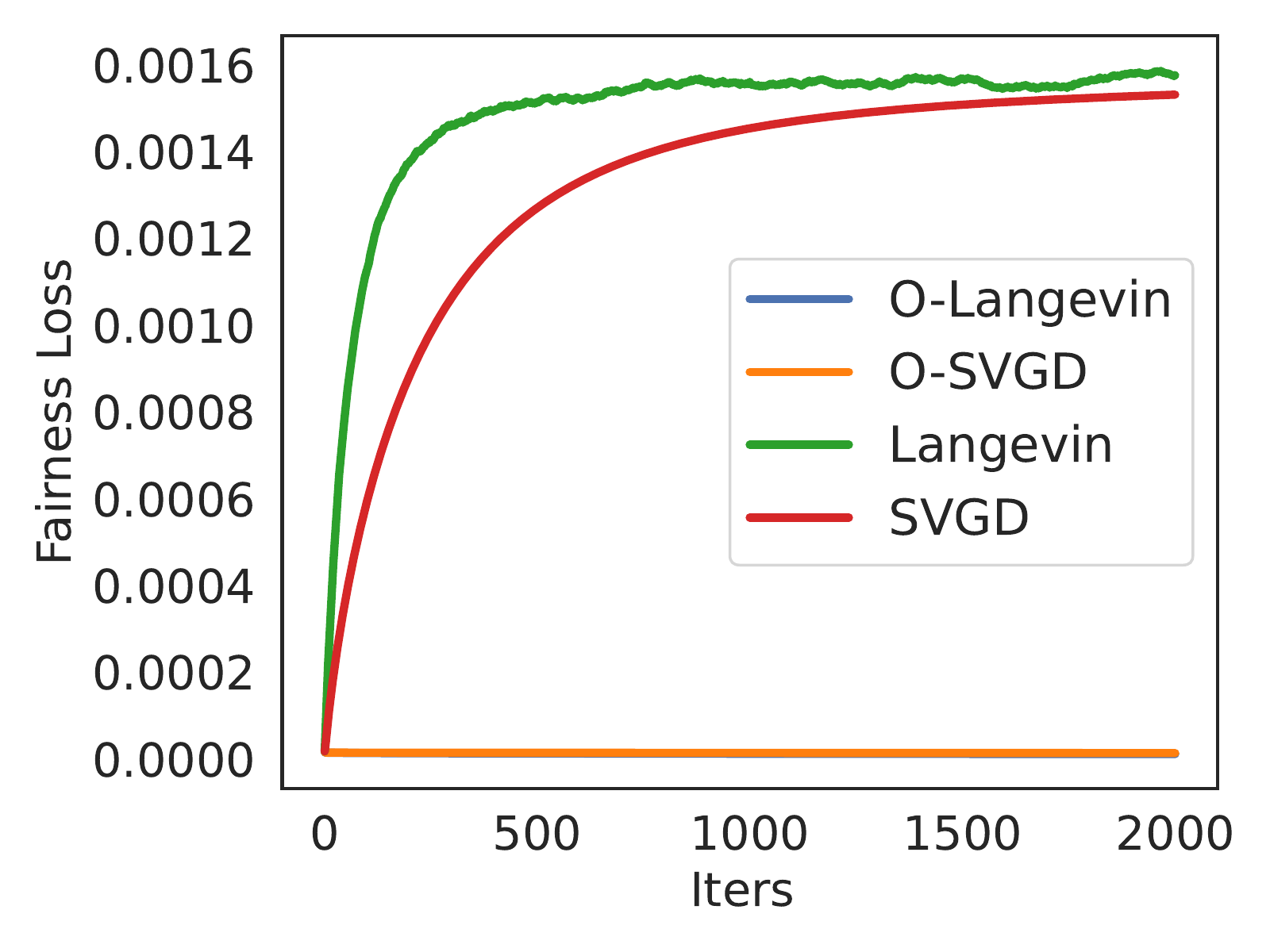} &
    	\hspace{-6mm}
    	\includegraphics[width=3.6cm]{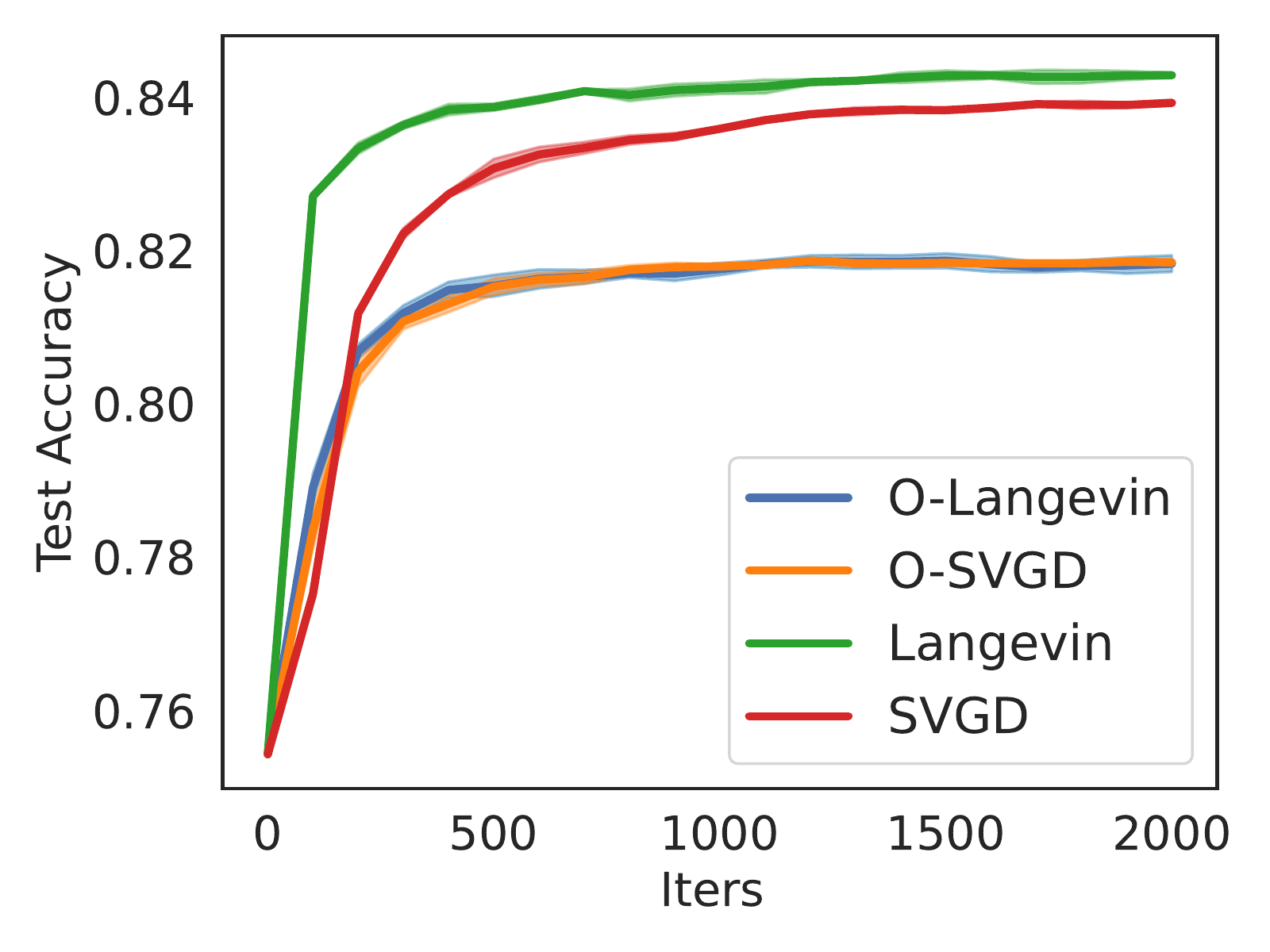} &
    	\hspace{-6mm}
    	\includegraphics[width=3.6cm]{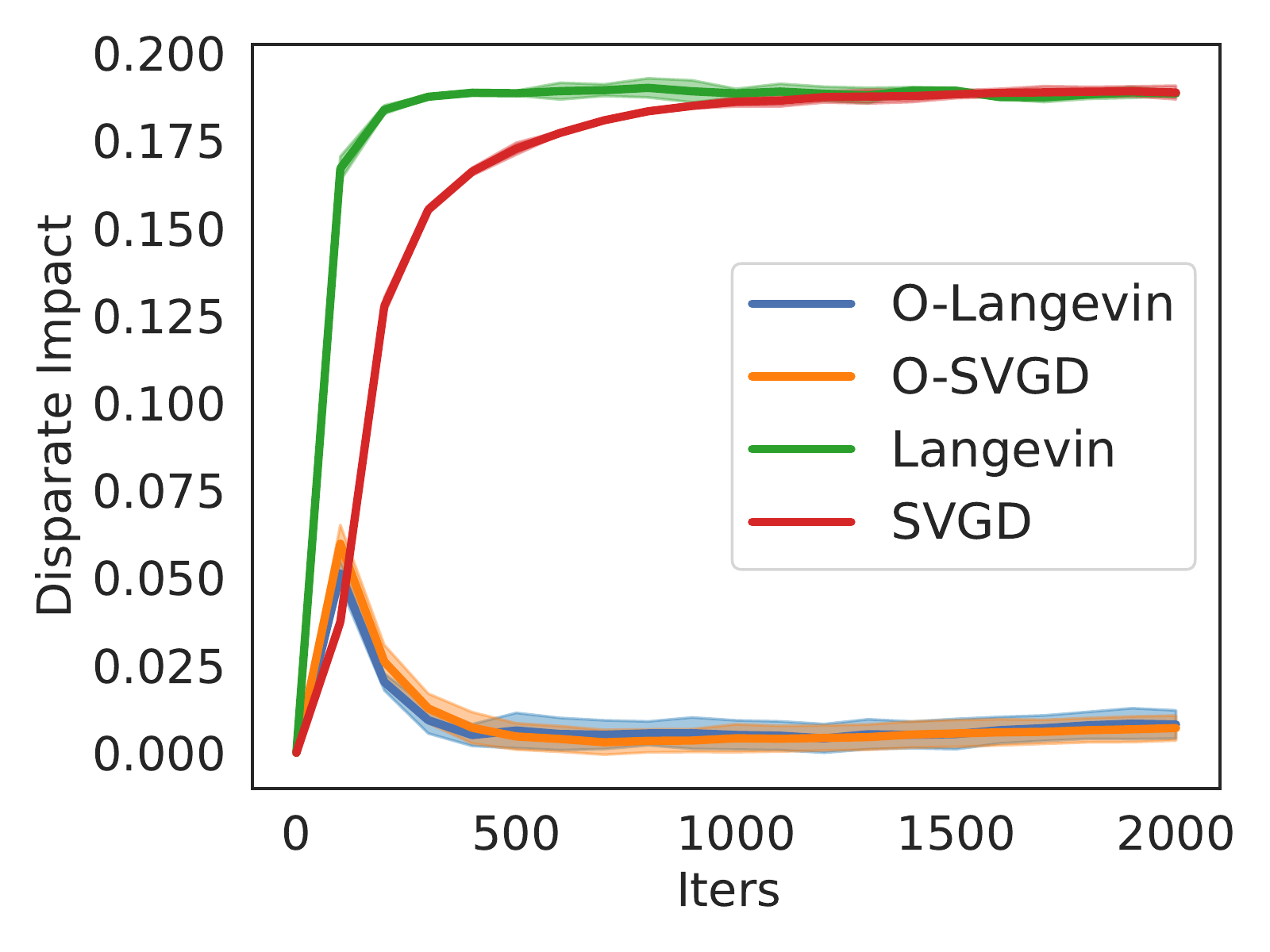}
    	\hspace{-0mm}\\	
    	\multicolumn{2}{c}{(a) Training Curve} &  \multicolumn{2}{c}{(b) Testing Curve}
    \end{tabular}
    \caption{Income Classification: O-Langevin and O-SVGD have almost zero fairness loss and disparate impact
while maintaining great prediction performance. Standard Langevin and SVGD fail to make fair predictions.}
    \label{fig:fairness}
\end{figure}
\begin{figure}[t!]
    \centering
    \begin{tabular}{cccc}		
    		\hspace{-6mm}
    	\includegraphics[width=3.6cm]{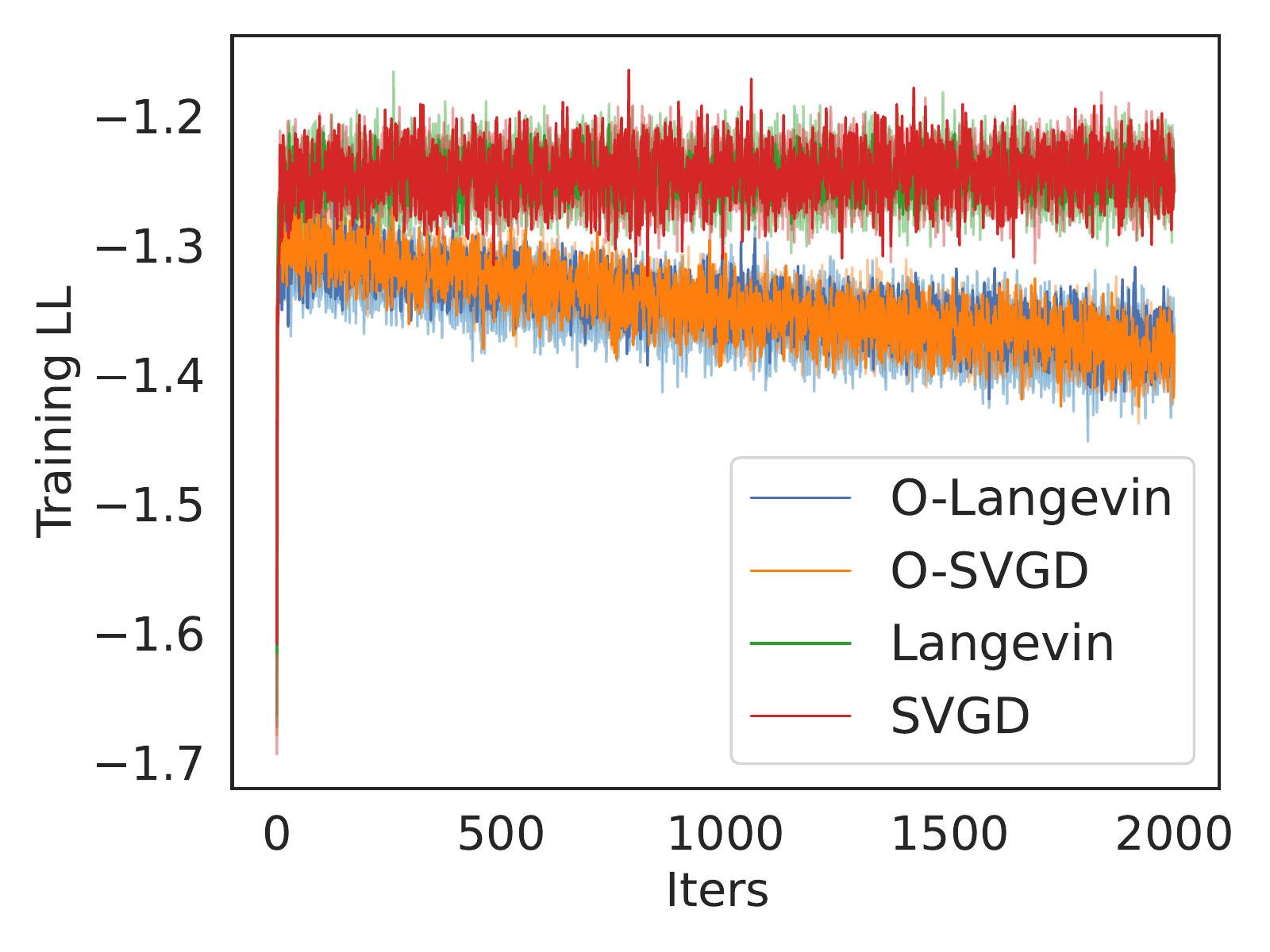}  &
    		\hspace{-6mm}
    	\includegraphics[width=3.6cm]{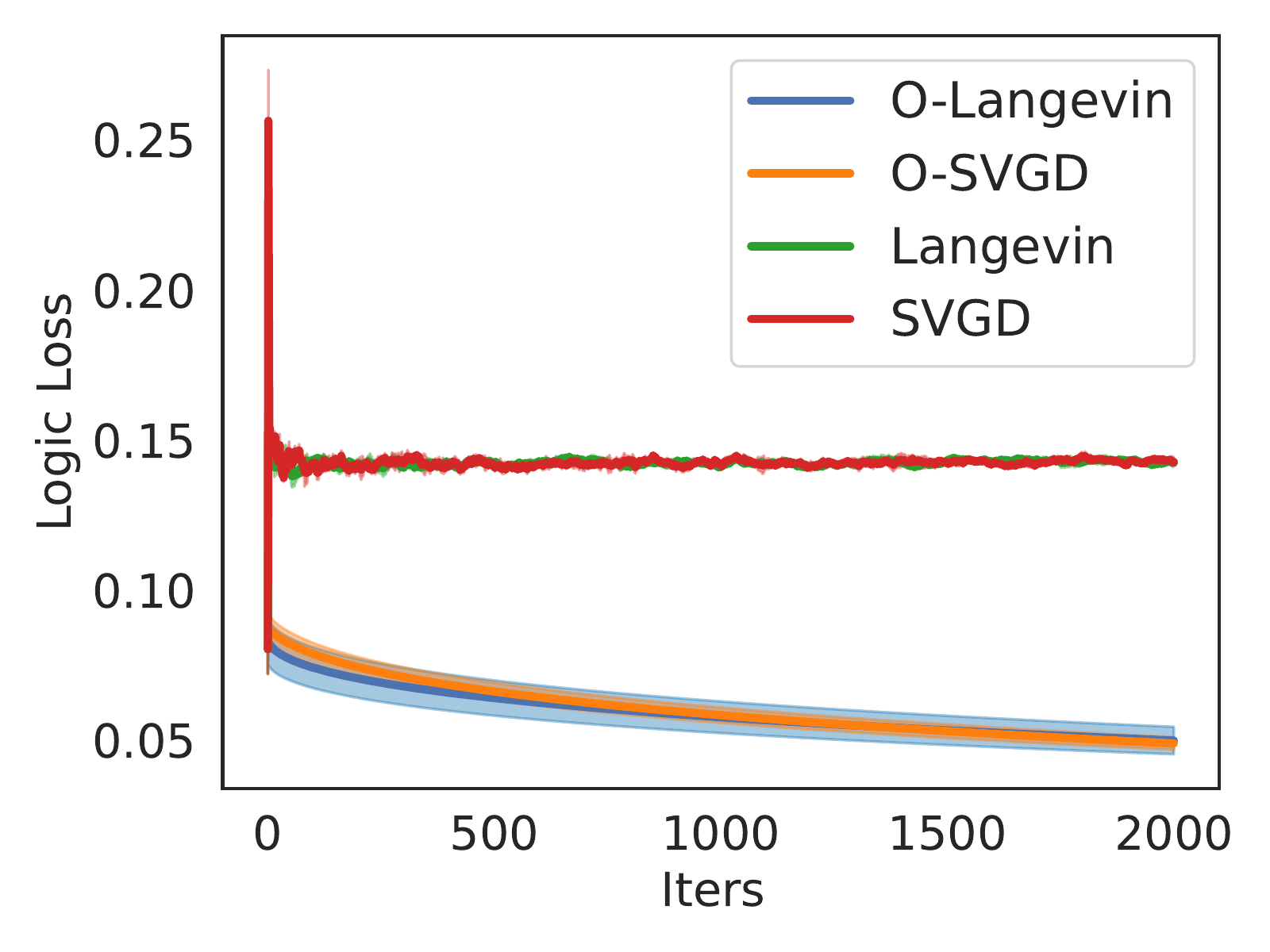} &
    	\hspace{-6mm}
    	\includegraphics[width=3.6cm]{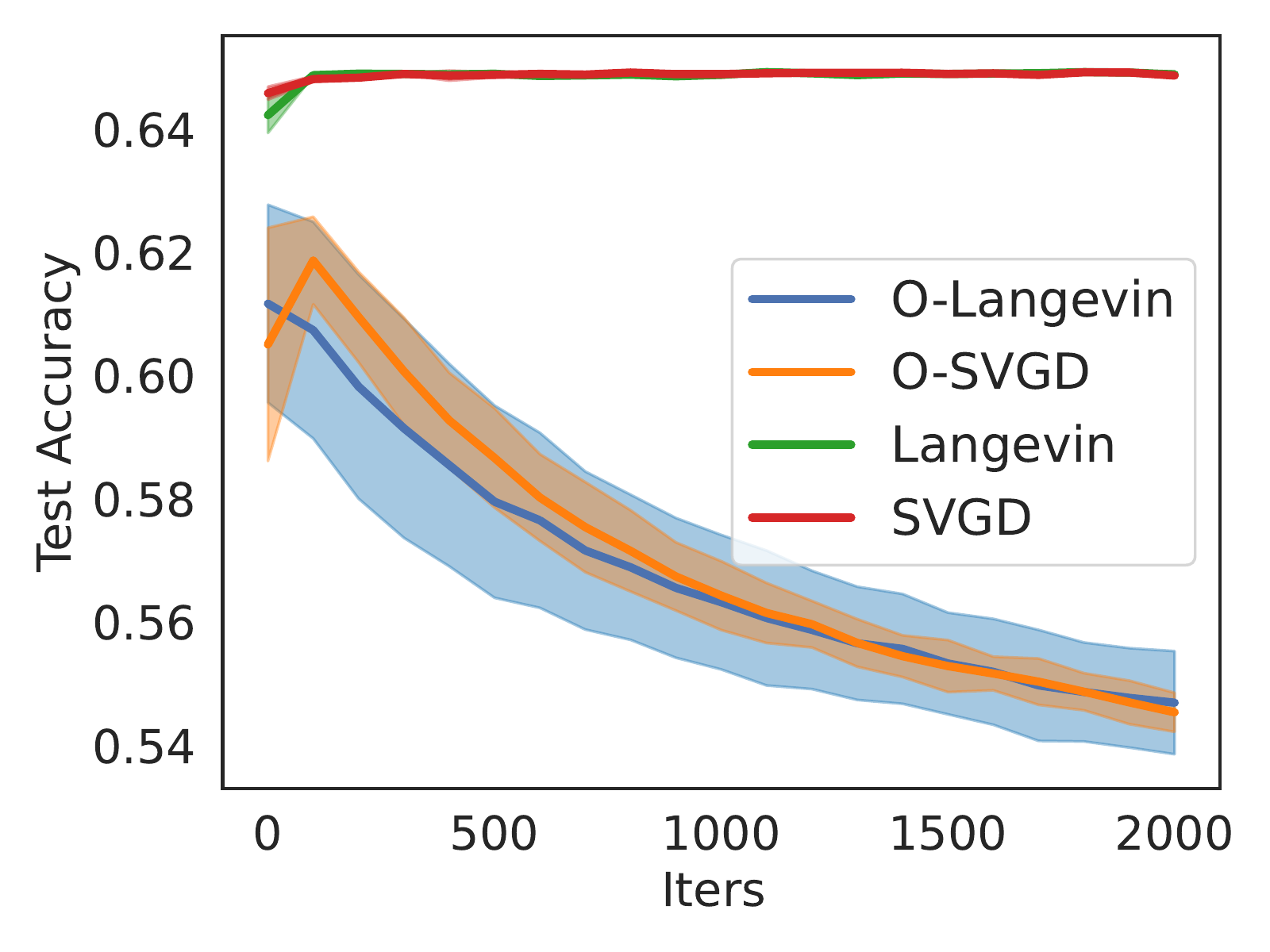}&
    	\hspace{-6mm}
    	\includegraphics[width=3.6cm]{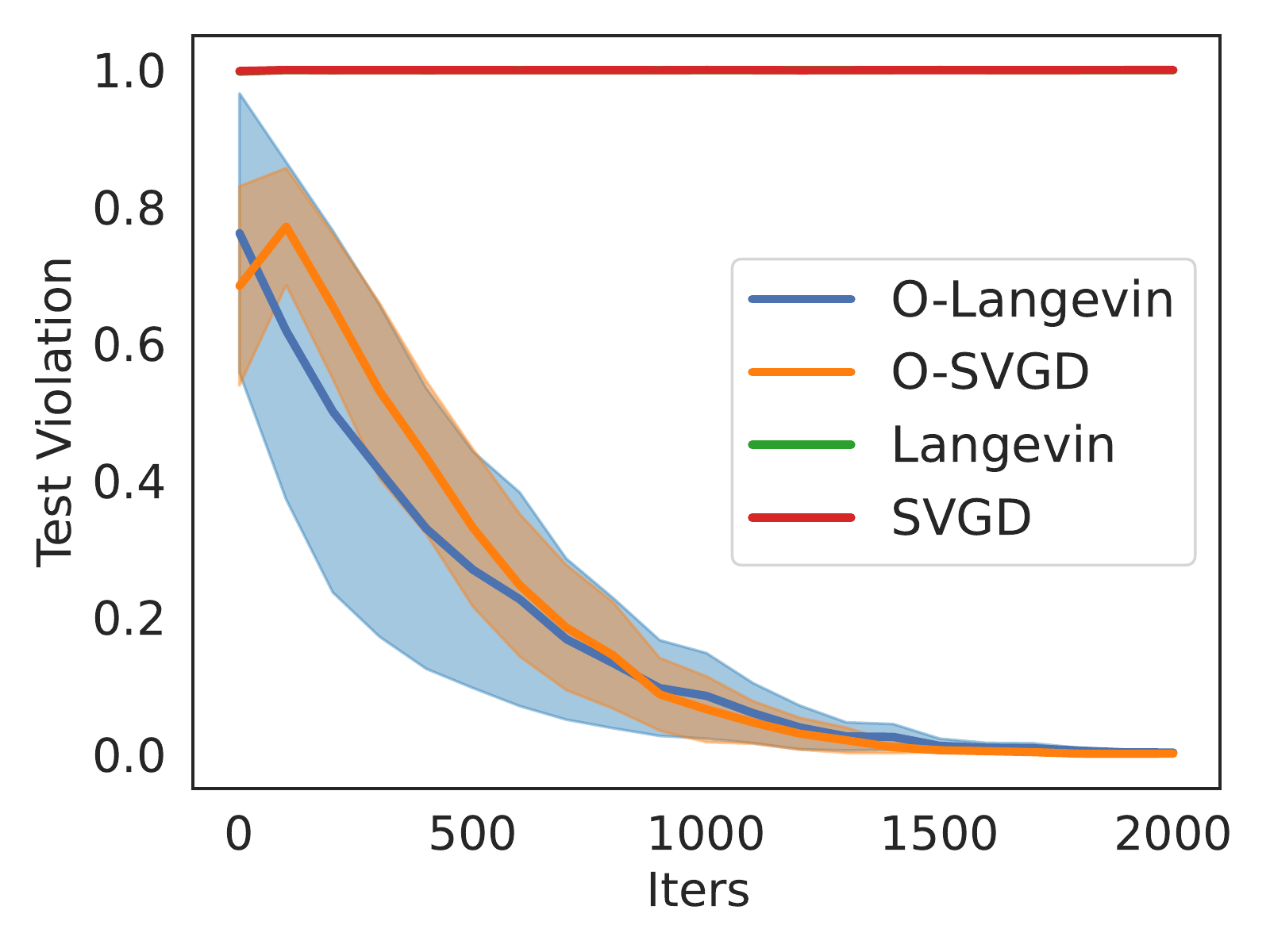}\\
    	\multicolumn{2}{c}{(a) Training Curve} &  \multicolumn{2}{c}{(b) Testing Curve}
    \end{tabular}
    \caption{Loan Classification: O-Langevin and O-SVGD satisfy the constraint well whereas standard Langevin and SVGD violate the constraint severely. 
    }
    \label{fig:logic-rule}
\end{figure}

\paragraph{Loan Classification with Logic Rules}\label{sec:exp:loan}

Following previous work~\citep{liu2021sampling}, we predict whether to lend loans to applicants and encode two logic rules: (1) an applicant must be rejected if not employed and having the lowest credit rank; (2) an applicant must be approved if employed over 15 years and having the highest credit rank. We use a Bayesian logistic regression model and set $g(x) = l_{\text{logic}}(x) = \mathbb{E}_{(\mu,\nu)\sim \mathcal{D}_{\text{logic}}}[\text{Loss}(\nu,\hat{\nu}(\mu;x))]$ where $\mathcal{D}_{\text{logic}}$ is a uniform distribution over all datapoints $(\mu,\nu)$ that satisfy the constraint, Loss refers to the classification loss and $\hat{\nu}$ is the prediction made by the model. During testing, we compute the violation which is the percentage of predictions that do not follow the logic rules.
In Figure~\ref{fig:logic-rule}, we see that both of our methods satisfy the logic rules well whereas standard algorithms violate the constraint severely. A drop in the accuracy of our methods might be caused by conflicts between the data and the logic rules, that is, the solution with high accuracy does not satisfy the constraint. A similar phenomenon has also been observed in ~\citet{liu2021sampling} which implies that this negative impact on the accuracy is most likely due to the task rather than the algorithm. Nevertheless, our methods can obtain models satisfying the constraint, thus guarantee their safety and interpretability.

\paragraph{Prior-Agnostic Bayesian Neural Networks}
Bayesian neural network (BNN) has been widely used in deep learning (DL) for quantifying uncertainty. The posterior of BNNs is determined by the prior which reflects our prior knowledge, and the likelihood which quantifies the data fitness. Due to complex NN architectures, specifying an appropriate prior for BNNs has been shown to be difficult~\citep{fortuin2021bayesian}. A poor prior often leads to a bad posterior and thus bad inference results. To automatically control the influence of the prior, one approach is to sample from the posterior with the constraint of a reasonably high data fitness.
We apply our methods to image classification on CIFAR10 with ResNet-18. Since the training loss is typically very small in DL, we set the constraint to be $g(x)=\text{Loss}(x)$. Similar to stochastic Langevin dynamics (SGLD)~\citep{welling2011bayesian} and SVGD, our methods are easy to be combined with stochastic gradients which are used on this task. We also ignore the second-order derivative terms in our methods for speedup. Besides standard algorithms, we also compare with a tempered SGLD which has been used in the literature for good performance~\citep{zhang2019cyclical}. We report test error, expected calibration error (ECE) and AUROC on the SVHN dataset (measuring out-of-distribution detection). 

From Table~\ref{tab:oe}, we see that O-Langevin and O-SVGD improve significantly over their unconstrained counterparts on all three metrics. O-Langevin has the lowest test error and the highest AUROC while O-SVGD has the lowest ECE. These results indicate that the constraint can automatically limit the effect of the prior, leading to a better posterior and thus better generalization and calibration.

\begin{table}[t]
    \caption{Results on Image Classification (\%). O-Langevin and O-SVGD significantly outperform unconstrained methods in terms of both generalization accuracy and calibration.}
  \label{tab:oe}
  \centering
  \begin{tabular}{lcccccc}
    \toprule & Test Error ($\downarrow$) &ECE   ($\downarrow$) &AUROC ($\uparrow$) \\
    \midrule
      SGLD &15.00 & 2.21   &89.41\\
      Tempered SGLD &4.73   &0.83 & 97.63\\
      O-Langevin  & {\bf 4.46}  &{ 0.87} & {\bf 98.68}  \\
      \hdashline
      SVGD &6.11    &0.93   &93.55\\
      O-SVGD  &4.92  &{\bf 0.77}   &94.69\\
    \bottomrule
  \end{tabular}
 
\end{table}

\section{Conclusion and Limitations}\label{sec:conclusion}

We propose a new variational framework with a designed gradient flow (O-Gradient) for sampling in implicitly defined constrained domains. O-Gradient is formed by two orthogonal directions where one drives the sampler towards the domain and the other explores the domain by decreasing a KL divergence. We prove the convergence of O-Gradient and apply the framework to both Langevin dynamics and SVGD. We empirically demonstrate the power of our methods on a variety of ML tasks. 

While ML achieves impressive performance, we must take realistic constraints into consideration when deploying it in daily life. This work provides a principled framework for solving constrained sampling problem with theoretical guarantees and practical algorithms.

One possible limitation of our methods is that the obtained samples are not exactly on the manifold due to working in an ambient space, though can be made arbitrarily close to the manifold by tuning hyperapameters. This might be solved by including a projection step. Another limitation is that we only consider an equality constraint. It will be interesting to incorporating inequality constraints into the framework. Besides, our theory only considers the convergence in the continuous-time limit. Future work could provide convergence analysis in discrete time, which may characterize the properties of practical algorithms more accurately.

\section*{Acknowledgements}
QL is supported by CAREER-1846421, SenSE2037267, EAGER-2041327, Office of Navy Research, and NSF AI Institute for Foundations of Machine Learning (IFML). XT is supported by the Singapore Ministry of Education (MOE) grant R-146-000-292-114.
\bibliography{zref} 
\bibliographystyle{plainnat}

\section*{Checklist}


\begin{enumerate}

\item For all authors...
\begin{enumerate}
  \item Do the main claims made in the abstract and introduction accurately reflect the paper's contributions and scope?
    \answerYes{}
  \item Did you describe the limitations of your work?
    \answerYes{}
  \item Did you discuss any potential negative societal impacts of your work?
    \answerNA{}
  \item Have you read the ethics review guidelines and ensured that your paper conforms to them?
    \answerYes{}
\end{enumerate}

\item If you are including theoretical results...
\begin{enumerate}
  \item Did you state the full set of assumptions of all theoretical results?
    \answerYes{}
        \item Did you include complete proofs of all theoretical results?
    \answerYes{}
\end{enumerate}

\item If you ran experiments...
\begin{enumerate}
  \item Did you include the code, data, and instructions needed to reproduce the main experimental results (either in the supplemental material or as a URL)?
    \answerYes{}
  \item Did you specify all the training details (e.g., data splits, hyperparameters, how they were chosen)?
    \answerYes{}
        \item Did you report error bars (e.g., with respect to the random seed after running experiments multiple times)?
    \answerYes{}
        \item Did you include the total amount of compute and the type of resources used (e.g., type of GPUs, internal cluster, or cloud provider)?
    \answerYes{}
\end{enumerate}

\item If you are using existing assets (e.g., code, data, models) or curating/releasing new assets...
\begin{enumerate}
  \item If your work uses existing assets, did you cite the creators?
    \answerNA{}
  \item Did you mention the license of the assets?
    \answerNA{}
  \item Did you include any new assets either in the supplemental material or as a URL?
    \answerNA{}
  \item Did you discuss whether and how consent was obtained from people whose data you're using/curating?
    \answerNA{}
  \item Did you discuss whether the data you are using/curating contains personally identifiable information or offensive content?
    \answerNA{}
\end{enumerate}

\item If you used crowdsourcing or conducted research with human subjects...
\begin{enumerate}
  \item Did you include the full text of instructions given to participants and screenshots, if applicable?
    \answerNA{}
  \item Did you describe any potential participant risks, with links to Institutional Review Board (IRB) approvals, if applicable?
    \answerNA{}
  \item Did you include the estimated hourly wage paid to participants and the total amount spent on participant compensation?
    \answerNA{}
\end{enumerate}

\end{enumerate}


\appendix

\section{proofs}
\begin{proof}[Proof of Lemma \ref{lem:RKHSform}]
When $\calH$ is the RKHS with kernel $k$
\bb
&\E_{q_t}[(D(s_\pi - s_{q_t})) \tt u ] - \frac{1}{2} \norm{D u}_{\H}^2\\
&=\E_{q_t}[(D(s_\pi - s_{q_t})) \tt Du ] - \frac{1}{2} \norm{D u}_{\H}^2\\
&=\langle R_{D(s_\pi-s_{q_t})}, D u\rangle_{\H}-\frac{1}{2} \norm{D u}_{\H}^2
\ee
where $R_v$ is the Riesz representation of the linear function $u\to \E_{q_t} v\tt u$.  
Therefore a solution to \eqref{eq:proju} is
\[
Du=DR_{D(s_\pi-s_{q_t})}
\]
which can be written as 
\bb
Du(x)&=\int D(x)k(x,y)D(y) (s_\pi(y)-s_q(y)) q_t(y)dy\\
&=\int k_\bot(x,y) (s_\pi(y)-s_q(y)) q_t(y)dy\\
&= \int (k_\bot(x,y) s_\pi(y)+\nabla_y k_\bot (x,y))q_t(y)dy\\
&=\E_{y\sim q_t}(k_\bot(x,y)s_\pi(y)+\nabla_y k_\bot(x,y))
\ee

\end{proof}

\begin{proof}[Proof of Theorem \ref{thm:LD}]
First, we show the components of $r$ can be written as
\begin{align*}
r_i(x)=\sum_{j}\partial_j D_{i,j}(x)&=-\frac{\sum_j \partial^2_{i,j} g \partial_j g+\partial_i g\partial_{j,j}g}{\|\nabla g\|^2}+\frac{2(\nabla g^T \nabla^2 g\nabla g)\partial_i g}{\|\nabla g\|^4}.
\end{align*}
To this end, note that 
\[
D_{i,j}(x)=1-\frac{\partial_i g\partial_j g}{\|\nabla g\|^2}
\]
Then by product rule, and the fact that $\partial_j \|\nabla g\|^2=2\sum_k \partial^2_{k,j} g \partial_k g=2[\nabla^2 g \nabla g]_j$
\[
\partial_j D_{i,j}(x)=\frac{-\partial^2_{i,j} g\partial_j g-\partial_i g\partial^2_{jj} g}{\|\nabla g\|^2}+\frac{2\partial_i g\partial_j g [\nabla^2 g\nabla g]_j}{\|\nabla g\|^4}.
\]
So 
\[
\sum_j\partial_j D_{i,j}(x)=\frac{-\sum_j\partial^2_{i,j} g\partial_j g-\sum_j\partial_i g\partial^2_{jj} g}{\|\nabla g\|^2}+\frac{2\partial_i g(\nabla g\tt \nabla^2 g\nabla g)}{\|\nabla g\|^4}.
\]
Comparing with the each component of 
\[
r=-\frac{\nabla^2 g\nabla g}{\|\nabla g\|^2}-\frac{\text{tr}(\nabla^2 g)}{\|\nabla g\|^2}\nabla g+\frac{2\nabla g^T \nabla^2 g\nabla g}{\|\nabla g\|^4}\nabla g,
\]
it is easy to see they are the same.

Next, recall the O-gradient density flow \eqref{eq:densityLD}
\begin{align*}
\frac{d}{dt} q
&=-\nabla \cdot (\phi(x) q(x))+\nabla \cdot ( D(x)\nabla q(x))\\
&=-\nabla \cdot (\phi(x) q(x))+\sum_{i,j}\partial_i (D_{i,j}(x) \partial_jq(x))\\
&=-\nabla \cdot (\phi(x) q(x))+\sum_{i,j}\partial_i D_{i,j}(x) \partial_jq(x)+\sum_{i,j} D_{i,j}(x) \partial^2_{i,j}q(x)\\
&=:-\nabla \cdot (\phi(x) q(x))+(I). 
\end{align*}
Meanwhile, the FPE of the SDE follows 
\begin{align}
\label{tmp:FPE}
\frac{d}{dt}q=-\nabla \cdot (\phi(x) q(x))-\nabla \cdot (r(x) q(x))+\sum_{i,j} \partial_{i,j}^2 (D_{i,j} (x)q(x))
\end{align}
Then note that
\begin{align*}
(II):&=-\nabla \cdot (r(x) q(x))=-\sum_i\partial_i\left(\sum_{j}\partial_j D_{i,j}(x) q(x)\right)\\
&=-\sum_{i,j}\partial_{i,j}^2 D_{i,j}(x) q(x)-\sum_{i,j}\partial_j D_{i,j}(x) \partial_iq(x).
\end{align*}
Also note that 
\begin{align*}
\sum_{i,j} \partial_{i,j}^2 (D_{i,j} (x)q(x))&= \sum_{i,j} \partial_{i,j}^2 D_{i,j} (x)q(x)+\sum_{i,j} \partial_{j} D_{i,j} (x)\partial_i q(x)\\
&\quad+  \sum_{i,j}  \partial_i D_{i,j} (x)\partial_j q(x)+\sum_{i,j} D_{i,j} (x)\partial^2_{i,j} q(x)\\
&=(I)-(II). 
\end{align*}
So we arrive at our first claim.

For the second claim, using $\nabla g(x)^T D(x)=0$,  the Ito formula indicates that 
\begin{align*}
d g(X_t)&= \nabla g(X_t)^T[D(X_t)\nabla \log \pi(X_t)-\frac{c g(X_t)}{\|\nabla g(X_t)\|^2}\nabla g(X_t)+r(X_t)+D(X_t)\sqrt{2}dW_t]\\
&\quad + \text{tr}(D(X_t)\nabla^2g(X_t) D(X_t))dt\\
&=-c g(X_t) dt+ \nabla g(X_t)^Tr(X_t)+\text{tr}(D(X_t)\nabla^2g(X_t) D(X_t))dt.
\end{align*}
So it suffices to show that 
\[
\nabla g(x)^Tr(x)+\text{tr}(D(x)\nabla^2g(x) D(x))=0.
\]
To continue, we suppress the expression of $x$ in below to keep formulas short. 
We first note that $D_{i,j}=1_{i=j}-\partial_i g\partial_j g/\|\nabla g\|^2$, 
\[
\partial_j D_{i,j}=\frac{-\partial_{i,j} g\partial_j g-\partial_{i} g\partial_{j,j} g}{\|\nabla g\|^2}+\frac{2\sum_{k}\partial_i g\partial_j g \partial_{j,k}^2 g \partial_k g}{\|\nabla g\|^4}
\]
We plug this into the computation of $\nabla g^T r=\sum_i \partial_i g\partial_j D_{i,j}$. We note that 
\[
\sum_{i,j}-\frac{\partial_i g\partial_{i,j} g\partial_j g}{\|\nabla g\|^2}=\frac{-(\nabla g)^T\nabla^2 g \nabla g }{\|\nabla g\|^2}.
\]
\[
\sum_{i,j}-\frac{(\partial_i g)^2\partial_{j,j} g\partial_j g}{\|\nabla g\|^2}=-\text{tr}(\nabla^2 g). 
\]
\begin{align*}
\frac{2\sum_{i,j}\sum_{k}\partial_i g\partial_i g\partial_j g \partial_{j,k}^2 g \partial_k g}{\|\nabla g\|^4}&=
\frac{\sum_{j,k}\|\nabla g\|^2\partial_j g \partial_{j,k}^2 g \partial_k g}{\|\nabla g\|^4}=\frac{2(\nabla g)^T\nabla^2 g \nabla g }{\|\nabla g\|^2}
\end{align*}
Therefore
\begin{align*}
\sum_i \partial_i g\partial_j D_{i,j}&=\frac{(\nabla g)^T\nabla^2 g \nabla g }{\|\nabla g\|^2}-\text{tr}(\nabla^2 g)\\
&=\text{tr}\left(-\nabla^2 g+\nabla^2 g\frac{ \nabla g (\nabla g)^T}{\|\nabla g\|^2}\right)\\
&=\text{tr}\left(-\nabla^2 gD\right)=\text{tr}\left(-D\nabla^2 gD\right),\quad \mbox{since $D^2=D$}.
\end{align*}
This completes our proof. 

For the last claim, we note that 
\begin{align*}
\mathcal{L} f&=\nabla f^T [D\nabla \log \pi-\frac{\psi(g) }{\|\nabla g\|^2}\nabla g+r] +\text{tr}(D\nabla^2 fD)\\
&=\nabla f^T [\nabla \log \pi-\frac{ \nabla^2 g}{\|\nabla g\|^2}\nabla g] +\text{tr}(D\nabla^2 f)\\
&=\nabla f^T \nabla \log \pi+\text{tr}(\nabla^2 f)-\frac{ \nabla f^T\nabla^2 g\nabla g}{\|\nabla g\|^2} +\frac{\nabla g^T \nabla^2 f\nabla g}{\|\nabla g\|^2}\\
\end{align*}
Finally, we note that $\nabla f^T \nabla g=0$. Take derivative of this identity we find 
\[
\nabla^2 f \nabla g+\nabla^2 g \nabla f=0
\]
Therefore 
\[
\nabla f^T \nabla^2 g\nabla g=-\nabla g^T \nabla^2 f\nabla g
\]
and 
\[
\mathcal{L}f=\nabla f^T \nabla \log \pi+\text{tr}(\nabla^2 f)
\]
which is the same as the generator of the Langevin diffusion. 
\end{proof}

\begin{remark}
If we drop $r$ and implementing a naive SDE:
\[
dx_t=\phi(x)+\sqrt{2}D(x)dw_t,
\]
the FPE of this SDE will be
\[
\frac{d}{dt}q=-\nabla \cdot (\phi(x) q(x))+\sum_{i,j} \partial_{i,j}^2 (D_{i,j} (x)q(x)).
\]
It is identical to \eqref{tmp:FPE} but without the term $-\nabla \cdot (r(x) q(x))=(II)$. In other words, it will not match \eqref{eq:densityLD} unless $(II)=0$, which happens when $\sum_{i,j}\partial_{i,j}^2 D_{i,j}=\sum_{j}\partial_j D_{i,j}\equiv 0$. But it should be pointed out that if $g$ is an affine function, $D(x)$ is a constant matrix, then $(II)\equiv 0$, and $r$ is safe to be dropped out. 
\end{remark}

\begin{proof}[Proof of Proposition \ref{pro:disint}]
We will first show that for any function $f$, the following holds
\[
\int \pi^g(z) f(x) \pi_{\eta,z}(x) dxdz\to \int f(x)\pi(x)dx,
\]
when $\eta\to 0$. So \eqref{eq:condmeas} holds for $\Pi_z$ as the weak limit of $\pi_{\eta,z}$.

We note that 
\[
\pi_{\eta,z}(x)=\frac{\pi(x)\exp(-\frac1{2\eta}(g(x)-z)^2)}{\sqrt{2\eta\pi}Z_{\eta,z}}
\]
where the normalizing constant is given by
\[
Z_{\eta,z}=\frac{1}{\sqrt{2\pi\eta}}\int \pi(x) \exp(-\frac1{2\eta} (g(x)-z)^2)dx.
\]
Because $\pi^g$ is the density of $g(X)$, so for any function $h$:
\[
\int h(g(x)) \pi(x)dx=\E_{X\sim\pi}[ h(g(X))]=\int h(y) \pi^g(y)dy.
\]
We pick $h(y)=\exp(-\frac1{2\eta}(y-z)^2)$, we obtain that 
\begin{align*}
Z_{\eta,z}&=\int \frac{1}{\sqrt{2\pi\eta}}\exp(-\frac1{2\eta}(y-z)^2) \pi^g(y)dy\\
&=\E \pi^g(z+\sqrt{\eta}\xi)=\pi^g(z)(1+R(z))^{-1}.
\end{align*}
where the $|R|\leq 2L\sqrt{\eta}$ and $L$ is the regularity constant of $\pi^g$. Therefore 
\begin{align*}
    &\int \pi^g(z) f(x) \pi_{\eta,z}(x) dxdz\\
    &=\int \pi(x) f(x)\left(\int\frac1{\sqrt{2\pi\eta}}\exp(-\frac1{2\eta}(g(x)-z)^2)\frac{\pi^g(z)}{Z_{\eta,z}} dz\right)dx\\
    &=\int \pi(x) f(x)\left(\int\frac1{\sqrt{2\pi\eta}}\exp(-\frac1{2\eta}(g(x)-z)^2) dz\right)dx+\E R(g(X))\\
    &=\int \pi(x)f(x)dx+\E R(g(X)).
\end{align*}
Since $\E R(g(X))\leq 2L\sqrt{\eta}$, we find our first claim when $\eta\to 0$. 

Next note that if we pick $f(x)=1_{|g(x)-z|\geq \epsilon}$
\begin{align*}
    &\int f(x) \pi_{\eta,z}(x) dx\\
    &=\int 1_{|g(x)-z|\geq \epsilon} \frac1{\sqrt{2\pi\eta}}\exp(-\frac1{2\eta}(g(x)-z)^2)\frac{\pi(x)}{Z_{\eta,z}} dx\\
    &\leq \int  \frac1{\sqrt{2\pi\eta}}\exp(-\frac{\epsilon^2}{2\eta})\frac{\pi(x)}{Z_{\eta,z}} dx=
    \frac1{\sqrt{2\pi\eta}Z_{\eta,z}}\exp(-\frac{\epsilon^2}{2\eta}).
\end{align*}
When $\eta\to 0$, since $Z_{\eta,z}\to \pi^g(z)$,  and $\frac1{\sqrt{2\eta}}\exp(-\frac{\epsilon^2}{2\eta})\to 0$, we find that 
\[
\Pi_z(|g(X)-z|\geq \epsilon)=\lim_{\eta\to 0}\int f(x) \pi_{\eta,z}(x) dx\to 0
\]

For the Stein equation part, note that for each $\pi_{\eta,z}$, we have the following by Stein's identity: 
$$
\E_{\pi_{\eta,z}}[(\dd \log \pi(x) - \frac{1}{2\eta} (g(x)-z) \dd g(x) )\tt  \phi(x) + \dd \tt \phi(x)] = 0. 
$$
But $\dd g(x) \tt \phi(x) = 0$ for $\phi\in \calH_\bot$. This gives 
$$
\E_{\pi_{\eta,z}} [\stein_\pi \phi] = 0,~~~\forall \eta. 
$$
Taking $\eta\to0$ yields that $\E_{\pi_{z}}[\stein_\pi \phi] =0.$ 
\end{proof}

\begin{lem}
\label{lem:RN}
Suppose $q(x),q^g(z)$ and $\pi(x),\pi^g(z),\pi_z(x)$ are all $C^1$ functions, then the Radon--Nikodym derivative between $q_z$ and $\pi_z$  can be written as  
\[
\frac{d q_z}{d\pi_z}(x)=\frac{\pi^g(z)q(x)}{q^g(z) \pi(x)},\quad z=g(x).
\]
In particular,
\[
D (s_{q_z}(x)-s_{\pi_z}(x))=D(s_q(x)-s_\pi(x)). 
\]
\end{lem}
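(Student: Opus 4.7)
The plan is to derive both identities by leveraging the two characterizations of $\Pi_z$ available in the paper: the disintegration formula (via the coarea formula) and the Bayesian limit of $\pi_{\eta,z}$ from Proposition \ref{pro:disint}.

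For the Radon--Nikodym derivative, I would first establish that, under the $C^1$ regularity assumption, $\pi_z$ and $q_z$ admit Hausdorff densities on $\calG_z$ proportional to $\pi(x)/(\pi^g(z)\|\nabla g(x)\|)$ and $q(x)/(q^g(z)\|\nabla g(x)\|)$, respectively. The cleanest way is to combine the defining relation \eqref{eq:condmeas} of conditional measures with the coarea formula
\[
\int f(x)\pi(x)\,dx \;=\; \int \left( \int_{\calG_z} \frac{f(x)\pi(x)}{\|\nabla g(x)\|}\,d\mathcal{H}^{d-1}(x)\right) dz ,
\]
and compare with $\int f(x)\pi(x)\,dx=\int \pi^g(z)\E_{\Pi_z}[f]\,dz$ to read off the density; an equivalent route is to track the normalizing constant $Z_{\eta,z}\to \pi^g(z)$ computed in the proof of Proposition \ref{pro:disint} and take $\eta\to 0$ in $\pi_{\eta,z}$. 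Writing down the analogous expression for $q_z$ and dividing, the $\|\nabla g(x)\|$ factors cancel, yielding exactly $dq_z/d\pi_z(x)=\pi^g(z)q(x)/(q^g(z)\pi(x))$.

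For the score identity I would use the Bayesian approximation, where the computation is almost immediate. The regularized distribution $\pi_{\eta,z}(x)\propto \pi(x)\exp(-(g(x)-z)^2/(2\eta))$ is an absolutely continuous density on $\RR^d$ with score
\[
s_{\pi_{\eta,z}}(x) \;=\; s_\pi(x) - \frac{g(x)-z}{\eta}\,\nabla g(x).
\]
Since $D(x)\nabla g(x)=0$ by construction, the second term is annihilated, giving $D\,s_{\pi_{\eta,z}}=D\,s_\pi$ for every $\eta>0$. Passing to the weak limit $\eta\to 0$ on $\calG_z$ and using the $C^1$ assumption to justify pointwise convergence of the scores, we obtain $D\,s_{\pi_z}=D\,s_\pi$; applying the same argument to $q$ gives $D\,s_{q_z}=D\,s_q$, and subtracting yields the stated identity.

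The main obstacle is the interpretation of $s_{\pi_z}$ and $s_{q_z}$: since $\pi_z$ and $q_z$ are singular measures on $\RR^d$, these symbols require a definition, either as limits of the ambient scores of the regularized $\pi_{\eta,z}$ or, equivalently, as tangential gradients of the log Hausdorff density on $\calG_z$. Both interpretations lead to the same projected object, because the two candidates differ only by a scalar multiple of $\nabla g(x)$ (arising either from $\eta^{-1}(g(x)-z)\nabla g(x)$ in the Bayesian route, or from the chain-rule derivative of $\log\pi^g(g(x))$ and $\log\|\nabla g(x)\|$ in the Hausdorff-density route), and $D$ annihilates any such multiple. Once this equivalence is pinned down, the rest of the proof is the two short calculations above.
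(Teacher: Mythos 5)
Your proposal is correct but takes a genuinely different route from the paper's, and the difference is instructive. For the Radon--Nikodym derivative the paper does not derive Hausdorff densities via the coarea formula or the Bayesian limit; it simply plugs the candidate $\frac{dq_z}{d\pi_z}(x)=\frac{\pi^g(z)q(x)}{q^g(z)\pi(x)}$ into the disintegration identity \eqref{eq:condmeas} and verifies by a short chain of equalities that the resulting family $\{q_z\}$ disintegrates $q$, which identifies $q_z$ for almost every $z$ by uniqueness of disintegration. Your derivation of the explicit Hausdorff density $\pi(x)/(\pi^g(z)\|\nabla g(x)\|)$ is also correct and produces the same ratio once the $\|\nabla g\|$ factors cancel, but it invokes more machinery than is strictly needed. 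For the score identity, the paper then leverages what it has just proved: since $\frac{dq_z}{d\pi_z}$ is a genuine ambient function, $s_{q_z}-s_{\pi_z}=\nabla\log\frac{dq_z}{d\pi_z}=(s_q-s_\pi)+\nabla g(x)\bigl(s_{\pi^g}(g(x))-s_{q^g}(g(x))\bigr)$, and $D$ annihilates the $\nabla g$ term. You instead re-run the Bayesian regularization separately for $\pi$ and $q$, observe $Ds_{\pi_{\eta,z}}=Ds_\pi$ and $Ds_{q_{\eta,z}}=Ds_q$ for every $\eta$, and pass to the limit. This yields the stronger per-measure identities $Ds_{\pi_z}=Ds_\pi$ and $Ds_{q_z}=Ds_q$, but at the cost of pinning down $s_{\pi_z}$ as an ambient object and justifying $Ds_{\pi_{\eta,z}}\to Ds_{\pi_z}$ --- precisely the difficulty you flag in your final paragraph. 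The paper's route sidesteps that entirely: only the difference $s_{q_z}-s_{\pi_z}$ is ever needed, and it is unambiguously the gradient of the log of an ambient RN derivative established in the first part, so no limiting argument or choice of definition for the individual scores is required. Both arguments are sound; the paper's is shorter and avoids the delicate point.
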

\begin{proof}[Proof of Lemma \ref{lem:RN}]
We will show \eqref{eq:condmeas} holds, where  $dq_z=\frac{dq_z}{d\pi_z}d\pi_z$ with our choice of RN derivative. 
This can be done using
\begin{align*}
\E_{z\sim q^g}\E_{x\sim q_z}[f(x)]
&=\int_R dz q^g(z) \int_{\calG_z} f dq_z\\
&=\int_R dz q^g(z) \int_{\calG_z} \frac{\pi^g(z) q}{q^g(z) \pi} f d\pi_z \\
&=\int_R \pi^g(z) dz  \int_{\calG_z} \frac{q}{\pi} f d\pi_z\\
&=\E_{z\sim \pi^g}\E_{y\sim \pi_z}[f(x) q(x)/\pi(x)]\\
&=\E_{x\sim \pi} [f(x) q(x)/\pi(x)]=\E_{x\sim q} [f(x)].
\end{align*}
Moreover
\[
D (s_{q_z}(x)-s_{\pi_z}(x))=D(s_q(x)-s_\pi(x))+
D(\nabla g(x)s_{q^g}(g(x))-\nabla g(x)s_{q^g}(g(x)))
=D(s_q(x)-s_\pi(x)).
\]
\end{proof}

\begin{proof}[Proof of Proposition \ref{pro:meandiff}]
Note that 
\begin{align*}
\left|\E_{\Pi_0} [f]-\int^{\delta}_{-\delta} q_g(z)\E_{\Pi_z} [f] dz\right|
&\leq \int^{\delta}_{-\delta} q_g(z)\left|\E_{\Pi_0} [f]-\E_{\Pi_z} [f]\right| dz\\
&\leq \max_{|z|\leq \delta} |\E_{\Pi_z} [f]-\E_{\Pi_0} [f]|.
\end{align*}
Meanwhile
\begin{align*}
\E_q [f]-\int^{\delta}_{-\delta} q_g(z)\E_{\Pi_z} [f] dz
=\int^{\delta}_{-\delta} q_g(z)\left(\E_{ q_z} [f]-\E_{\Pi_z} [f]\right) dz.
\end{align*}
Then  we note the following holds when we restrict $f$ on $\calG_z$. We use a paramaterization of $\calG_z$ with dummy variable $y$ and the inherited metric form $\RR^d$, 
\begin{align*}
    (\E_{q_z} [f]-\E_{\Pi_z} [f])^2
    &=\left(\int_{\calG_z}  \Pi_z(y)(\frac{q_z(y)}{\Pi_z(y)}-1)f(y)dy\right)^2\\
    &\leq \int_{\calG_z}  \Pi_z(y)(\sqrt{\frac{q_z(y)}{\Pi_z(y)}}-1)^2dy \cdot\int_{\calG_z}  \Pi_z(y)(\sqrt{\frac{q_z(y)}{\Pi_z(y)}}+1)^2f(y)^2dy\\
    &\leq 2\int_{\calG_z}  \Pi_z(y)(\sqrt{\frac{q_z(y)}{\Pi_z(y)}}-1)^2dy\cdot \int_{\calG_z}  \Pi_z(y)(\frac{q_z(y)}{\Pi_z(y)}+1)dy\\
    &=4\int_{\calG_z}  \Pi_z(y)(\sqrt{\frac{q_z(y)}{\Pi_z(y)}}-1)^2dy\\
    &=8  (1-\E_{\Pi_z}\sqrt{\frac{q_z(y)}{\Pi_z(y)}})=8(1-b)\mbox{ with }b=\E_{\Pi_z}\sqrt{\frac{q_z(y)}{\Pi_z(y)}}\leq 1.
    \end{align*}
Then note that 
\[
\var_{\Pi_z}\sqrt{\frac{q_z(y)}{\Pi_z(y)}}=
\E_{\Pi_z}[\frac{q_z(y)}{\Pi_z(y)}]-b^2=1-b^2.
\]
  
Therefore by $\kappa$-PI, we have
    \begin{align*}
    1-b\leq (1-b^2)&\leq \kappa\int_{\calG_z}  \Pi_z(y)\left\|\nabla \sqrt{\frac{q_z(y)}{\Pi_z(y)}}\right\|_{\calG_z}^2dy\\
    &=\kappa\int_{\calG_z}  q_z(y)\|(s_{q_z}(y)-s_{\Pi_z}(y))\|_{\calG_z}^2 dy\\
    &=\kappa\int_{\calG_z}  q_z(y)\|D(y)(s_{q_z}(y)-s_{\Pi_z}(y))\|^2 dy\\
    &=\kappa \E_{q_z} \|D(s_{q_z}-s_{\Pi_z})\|^2.
\end{align*}
So in combination
\[
(\E_{q_z} [f]-\E_{\Pi_z} [f])^2\leq 8\kappa \E_{q_z} \|D(s_{q_z}-s_{\Pi_z})\|^2. 
\]

Then by Lemma \ref{lem:RN} we have
\begin{align*}
    D\nabla(\log q_z-\log\Pi_z)=D(s_q-s_\pi).
\end{align*}
So we find that 
\[
(\E_{q_z}[f]-\E_{\Pi_z}[f])^2\leq  4\kappa \E_{q_z} \|D(s_{q}-s_{\pi})\|^2. 
\]
Integrating both sides with $q_g(z)$ we find our final claim.

\end{proof}

\begin{proof}[Proof of Proposition \ref{pro:flow}]
Fix a particle $z_t$ in the density of $q_t$, we track its $g$-value trajectory:
\[
\frac{d}{dt}g(z_t)=\nabla g(z_t)\tt v_t(z_t)=-\psi(g(z_t)),
\]
we will show that $g(z_t)\leq M_t$ for all $t$ (The proof for $g(z_t)\geq -M_t$ is identifical is omitted).
Suppose $g(z_t)>M_t+\epsilon$ for some $t$ and $\epsilon>0$. Let $t_0=\inf\{t>0, g(z_t)>M_t+\epsilon\}$. Then $\frac{d}{dt}g(z_{t_0})=-\psi(M_{t_0}+\epsilon)<-\psi(M_{t_0})$, so for a sufficiently small $\delta>0$, $g(z_{t_0-\delta})>M_{t_0-\delta}$, this contradicts the definition of $t_0$.

For the second claim, note that 
\begin{align*}
&\frac{d}{dt} \KL(q_t\|\pi)=-\int q_t(x) v_t(x)\tt(s_\pi(x)-s_{q_t}(x))dx\\
&=-\int q_t(x) (v\perpg(x)+v\parag(x))\tt(s_\pi(x)-s_{q_t}(x))dx\\
&=-F_\bot(q_t, \pi)+\int q_t(x)\frac{\psi(g(x))\nabla g(x)\tt(s_\pi(x)-s_{q_t}(x))}{\|\nabla g(x)\|^2}dx\\
&=-F_\bot(\pi,q_t)+ \int \frac{\psi(g(x))\nabla g(x)\tt s_\pi(x)}{\|\nabla g(x)\|^2}dx\\
&\quad+\int \frac{\|\nabla g(x)\|^2\dot{\psi}(g(x))+\psi(g(x))\Delta g(x)}{\|\nabla g(x)\|^2}q_t(x)dx\\
&\quad-\int \frac{2\psi(g(x))\nabla g(x)\tt\nabla^2 g(x)\nabla g(x)  }{\|\nabla g(x)\|^4} q_t(x)dx\\
&\leq -F_\bot(\pi,q_t)+\E_{q_t} [|\dot{\psi}(g)|]+C_0\E_{q_t} [|\psi(g)|\|\nabla g\|^2].
\end{align*}
\end{proof}

\begin{proof}[Proof of Theorem \ref{thm:grad}]
We use the notation $M_t$ from Proposition \ref{pro:flow}.
When we take $\psi(z)=-\alpha\text{sgn}(z)z^{1+\beta}$, we find that for some $c_0$ that depends on $M_0$
\[
\frac{d}{dt}M_t=-\alpha|M_t|^{1+\beta}\Rightarrow M_t=(\alpha\beta)^{-\frac{1}{\beta}} (t+c_0)^{-\frac{1}{\beta}}.
\]
Moreover, we have that for some constant $C_1$
\begin{align*}
&\frac{d}{dt} \KL(q_t\|\pi)\\
&\leq -F_\bot(q_t,\pi)+\alpha\E_{q_t} (1+\beta)[|g|^\beta]+\alpha C_0\E_{q_t}[|g|^{1+\beta}\|\nabla g\|^2]\\
&\leq -F_\bot(q_t,\pi)+\frac{C_1}{t+c_0}
+\frac{C_1 }{(t+c_0)^{1+1/\beta}}.
\end{align*}
Integrating both sides yields the following for some constant $M_\alpha$
\begin{align*}
\int^T_{T/2} &F_\bot(q_t,\pi)dt\leq
\int^T_{0} F_\bot(q_t,\pi)dt\\
&\leq
\KL(q_0,\pi)+C_1\log\frac{T+c_0}{c_0}+
C_1 M_\alpha.
\end{align*}
So 
\[
\min_{T/2\leq t_0\leq T} F_\bot(q_t,\pi)
\leq \frac2{T}\KL(q_0,\pi)+\frac{2C_1}{T}\log\frac{T+c_0}{c_0}+
\frac{2}{T}C_1 M_\alpha.
\]
Finally, we note that if
$r^D_{q,\pi}$ is the Riez representation of $D(s_\pi-s_{q})^T$, 
\begin{align*}
    \E_q \stein_\pi \phi&=
    \E_q (D(s_\pi-s_q))\tt \phi=\langle r^D_{q,\pi}, \phi\rangle_{\calH}
    \leq \frac12 \|r^D_{q,\pi}\|_{\calH}=\frac12 \sqrt{F_\bot(q,\pi)}. 
\end{align*}

\end{proof}

\section{Additional Experiments Results and Setting Details}
We use NVIDIA GeForce RTX 2080 Ti for neural network experiments.

\subsection{Synthetic Distribution}
For both O-Langevin and O-SVGD, we use $\alpha=100$ and $\beta=0$. We set $\eta=0.01$ and $0.5$ for O-Langevin and O-SVGD respectively. For CLangevin and CHMC, we use a python implementation~\footnote{https://matt-graham.github.io/mici/} and tune the step size and the number of leapfrog steps. We report the best results which are achieved at step size $=0.3$ for CLangevin, and step size $=1$ and number of leapfrog steps $=2$ for CHMC. 

We use energy distance to measure the difference between the approximated distributions by sampling methods and the target distribution. Energy distance is a statistical distance between probability distributions and has been used in the literature, e.g.~\citep{sejdinovic2012hypothesis,sejdinovic2013equivalence,gritsenko2020spectral}. Formally speaking, the energy distance between probability distributions $P$ and $Q$ is defined by
\[
D(P,Q) = 2E_{Z,W}\norm{Z-W}_2 - E_{Z,Z’}\norm{Z-Z’}_2 - E_{W,W’}\norm{W-W’}_2
\]
where $Z, Z’\sim P$ and $W,W’\sim Q$.

\paragraph{Runtime Comparison} 
We report runtime comparison in Figure~\ref{fig:runtime}. We did not include O-SVGD since one iteration of it already $>$ 5s (SVGD is known to take more time per iteration than MCMC due to computing particle interaction). When starting on the manifold, we observe that O-Langevin converges much faster than previous manifold sampling methods. It takes about 4s to fully converge whereas previous methods have not fully converged after 5s. Previous methods cannot work with initializations outside the manifold, thus we only report the runtime of O-Langevin in Figure~\ref{fig:runtime}b. 

\paragraph{Effect of Hyperparameters}
Besides the hyperparameter step size $\eta$ as in standard Langevin and SVGD, our methods have hyperparameters $\alpha>0$ and $\beta\in(0,1]$ in $\psi(x)=\alpha \text{sign}(x)|x|^{1+\beta}$ to control the speed of the sampler to approach the manifold and the closeness of the sampler to the manifold after converging. In theory, as $\alpha$ increases, the sampler approaches the manifold faster and stays closer. As $\beta$ increases, the sampler first converges faster to the manifold but stays relatively far away after converging. We report the results of O-Langevin with varying $\alpha$ (with fixed $\beta=0$) and $\beta$ (with fixed $\alpha=1$) when starting outside the manifold in Figure~\ref{fig:hyperparameters}. We can see that the results of MAE, which measures the closeness to the manifold, align with the theoretical analysis. The energy distance with varying $\alpha$ and $\beta$ are similar while $\alpha=10$ and $\beta=0$ perform slightly better. The theoretical analysis could not tell which values of hyperparameters give the fastest convergence to the target distribution thus we still need to tune $\alpha$ and $\beta$ to achieve the optimal performance. In practice, we recommend to set $\beta=0$ (though our theoretical results apply to $\beta\in(0,1]$, we find $\beta=0$ generally works well in practice.) and tune $\alpha$ to achieve a desirable MAE and energy distance.

\paragraph{Density Estimation}

To compare the estimated density with the ground truth, we plot the collected samples after 5000 epochs when starting on the manifold and 8000 epochs when starting outside the manifold in Figure~\ref{fig:path}. The density estimation from our methods is closer to the ground truth than previous methods, aligning with the results of the energy distance in Figure~\ref{fig:toy}.

\begin{figure}[H]
    \centering
    \begin{tabular}{cccc}		
    	\includegraphics[width=4.2cm]{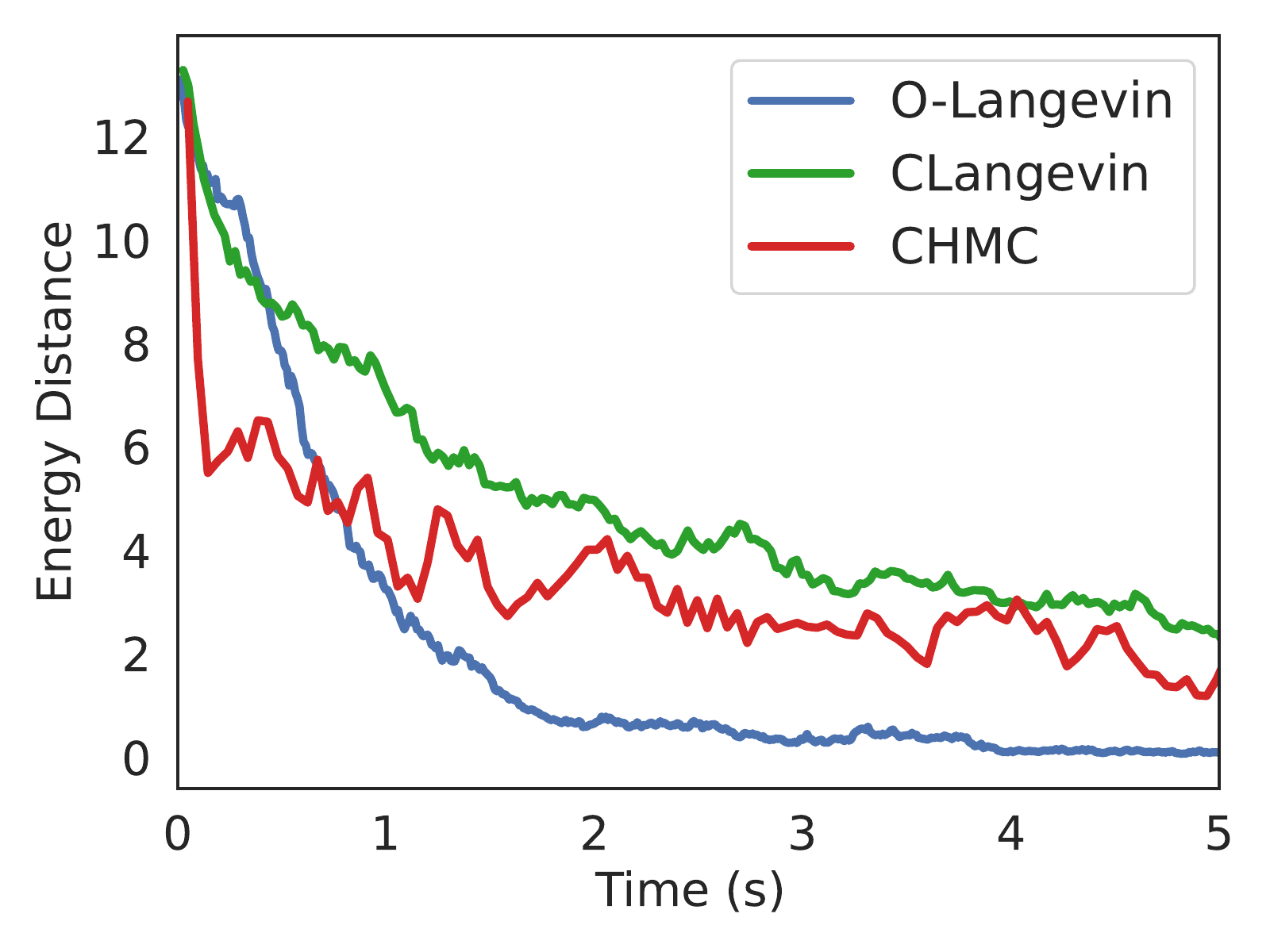} &
    	\includegraphics[width=4.2cm]{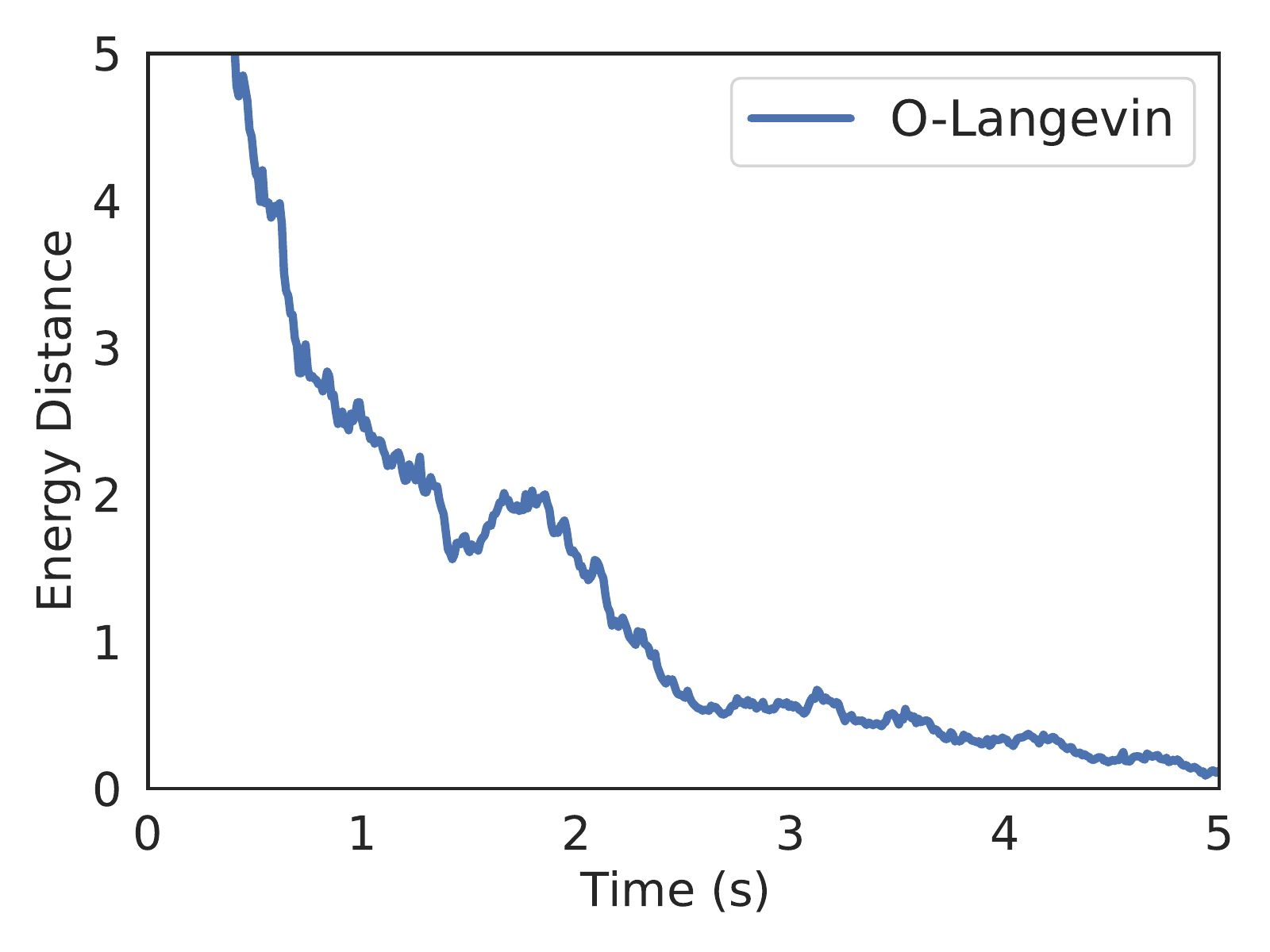} 
    	\\	
    	(a)&
    	(b)\\
    \end{tabular}
    \caption{Runtime comparison when (a) starting on the manifold and (b) starting outside the manifold.}
    \label{fig:runtime}
\end{figure}

\begin{figure}[H]
    \centering
    \begin{tabular}{cccc}		
    	\includegraphics[width=4.2cm]{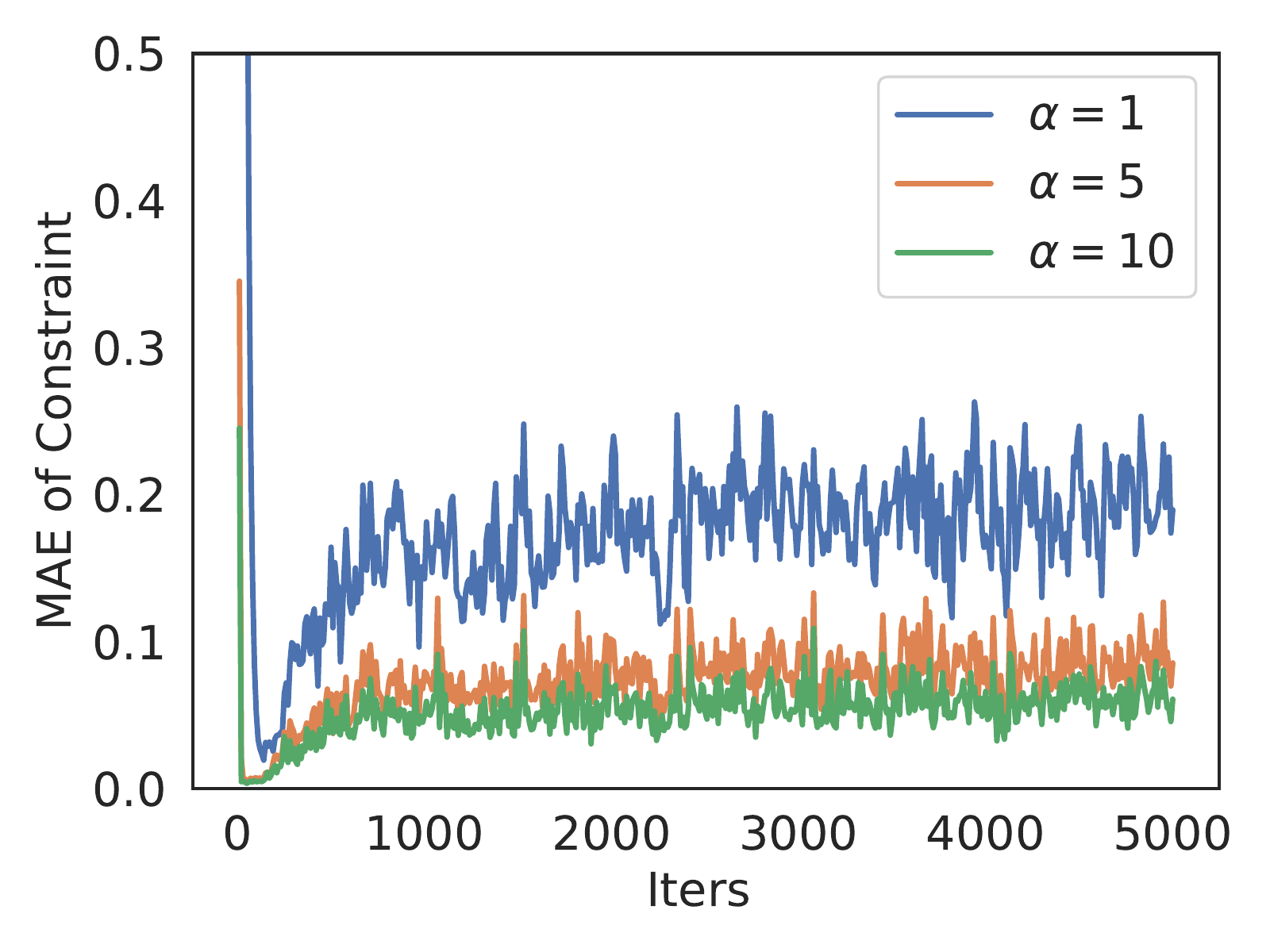} &
    	\includegraphics[width=4.2cm]{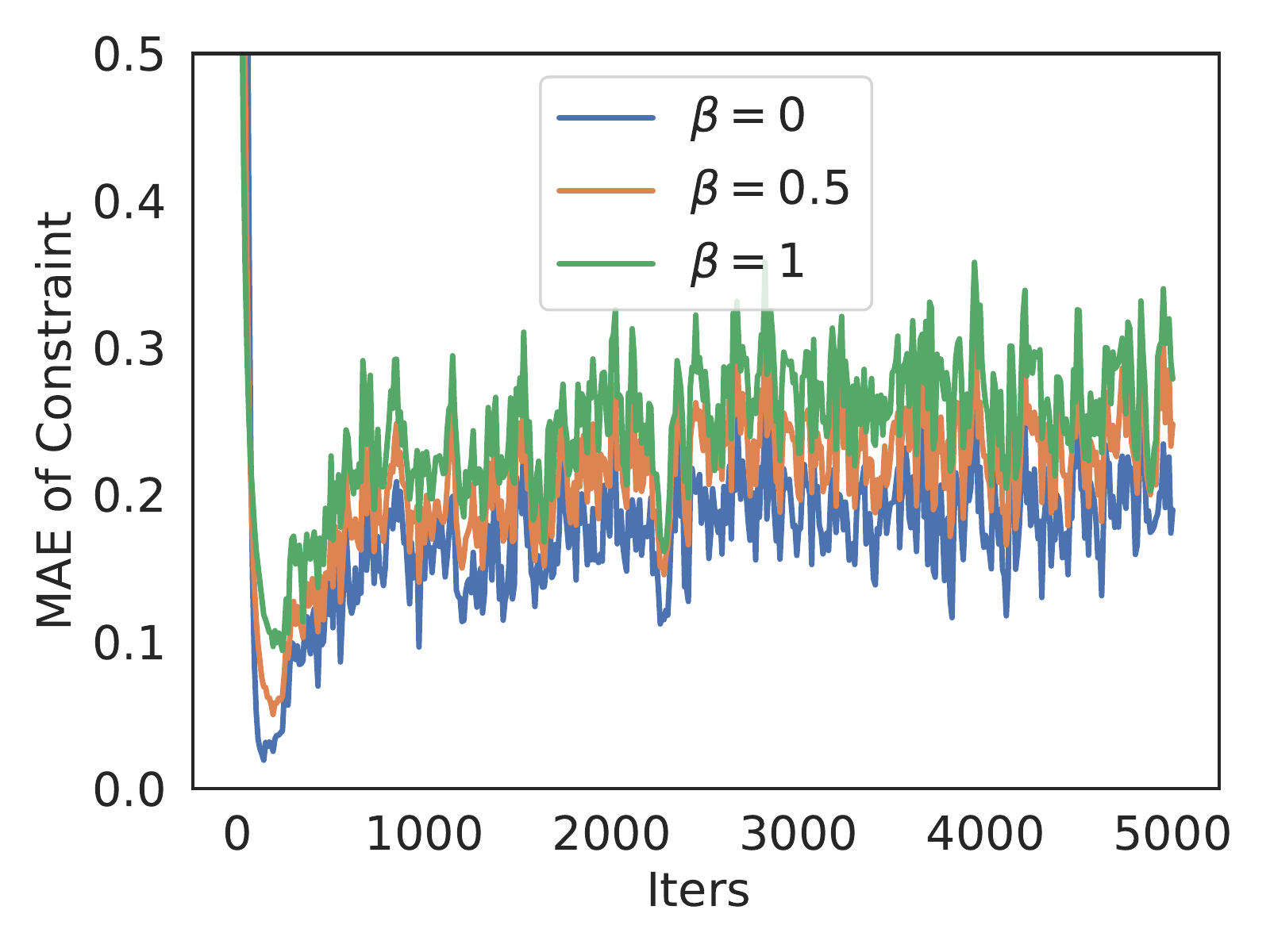}  &\\
    	\includegraphics[width=4.2cm]{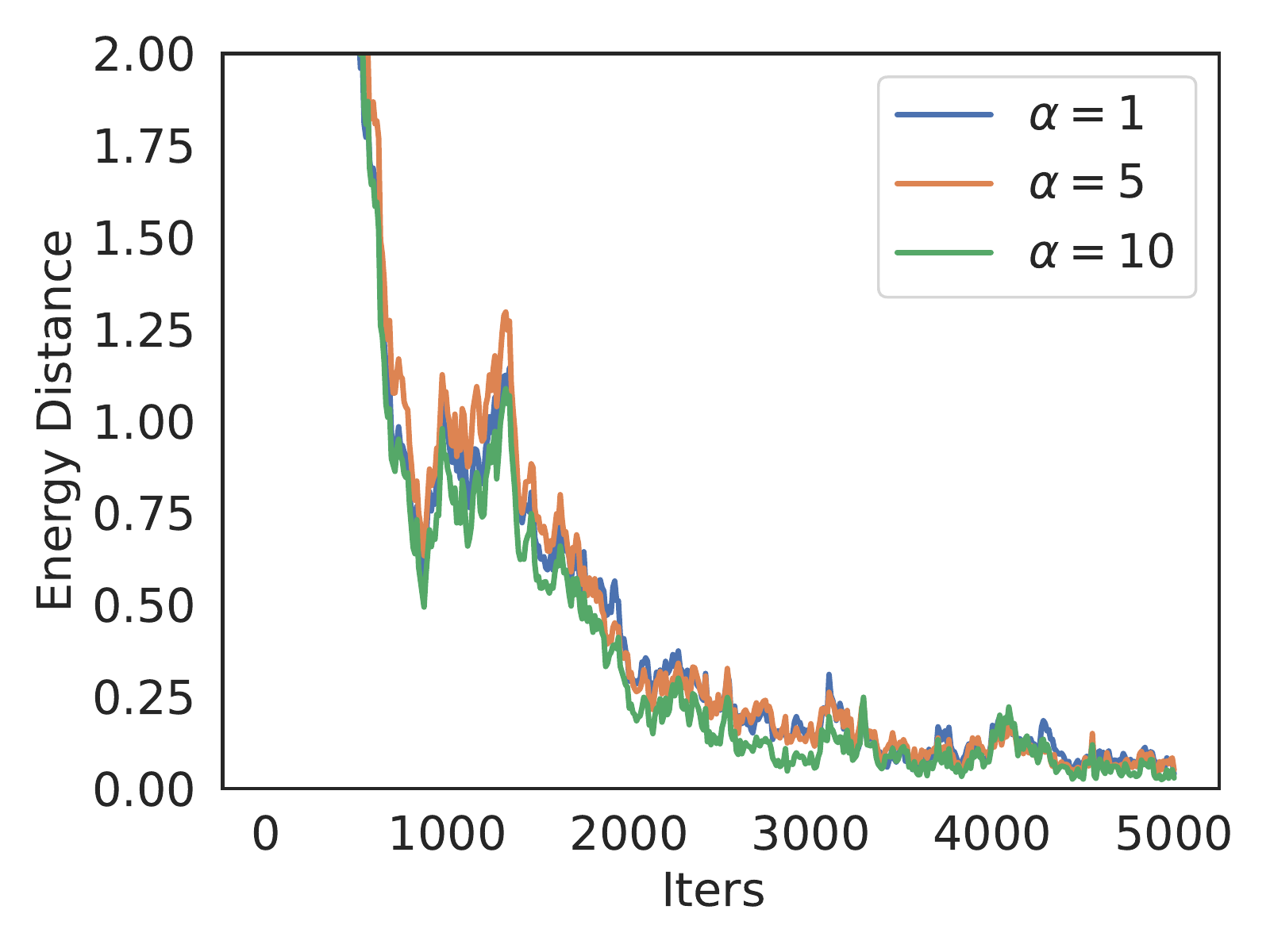} &
    	\includegraphics[width=4.2cm]{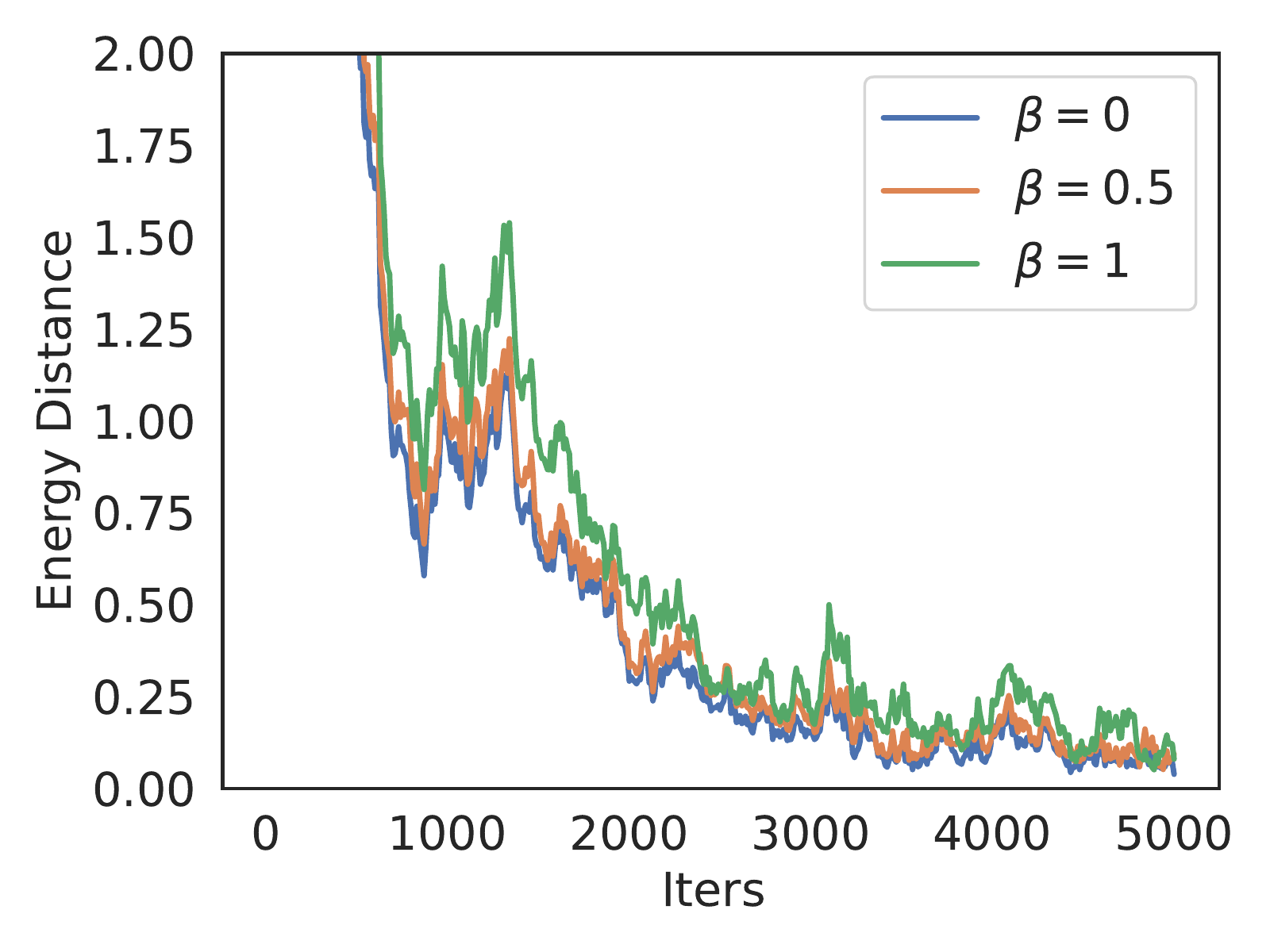}  
    	\\	
    	\hspace{-0mm}\\		
    \end{tabular}
    \caption{Effect of hyperparameters $\alpha$ and $\beta$}
    \label{fig:hyperparameters}
\end{figure}

\begin{figure}[H]
    \centering
    \setlength{\tabcolsep}{1pt}
    \begin{tabular}{ccc}		
    	&\includegraphics[width=4.2cm]{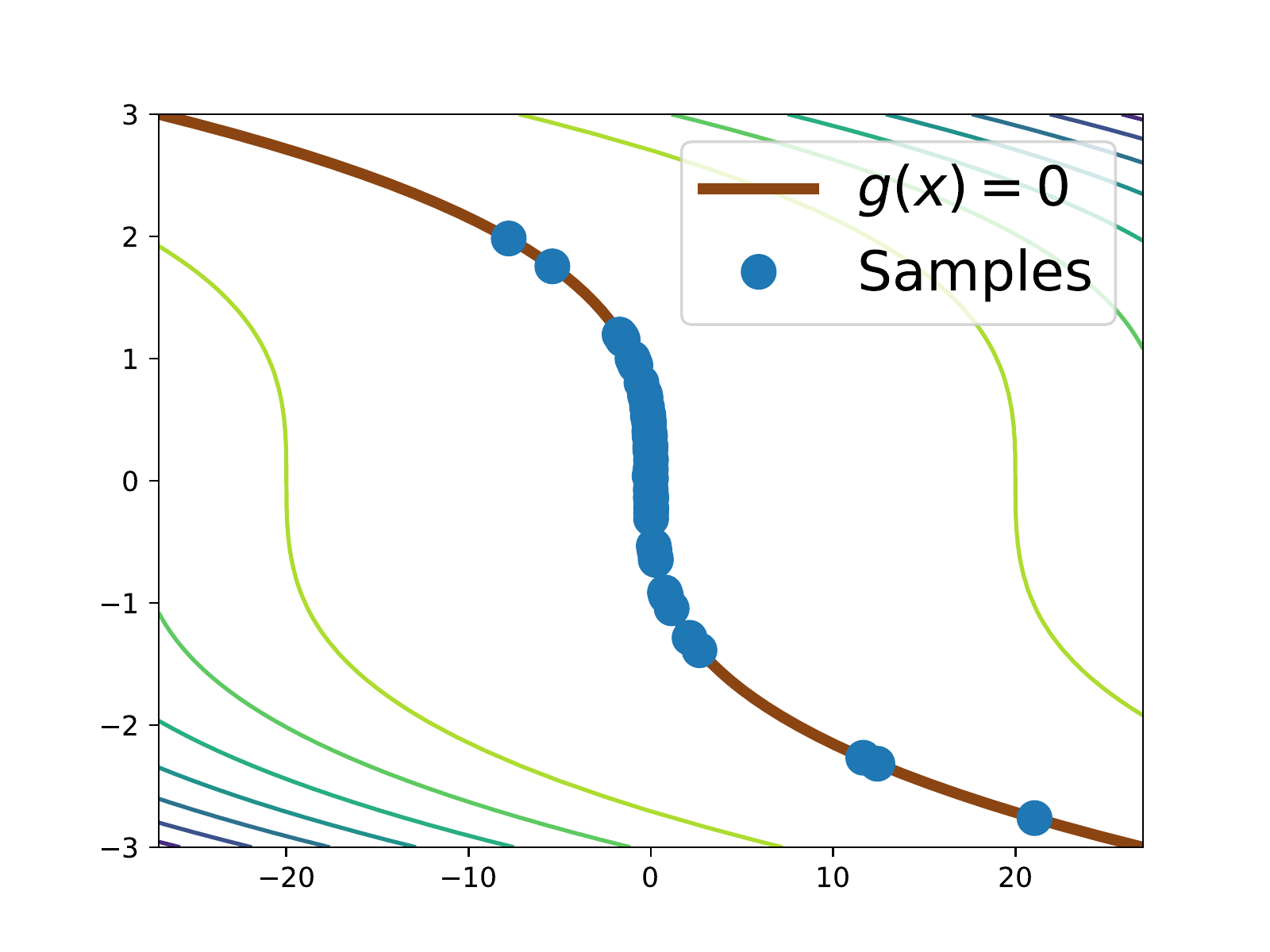} &
    	\includegraphics[width=4.2cm]{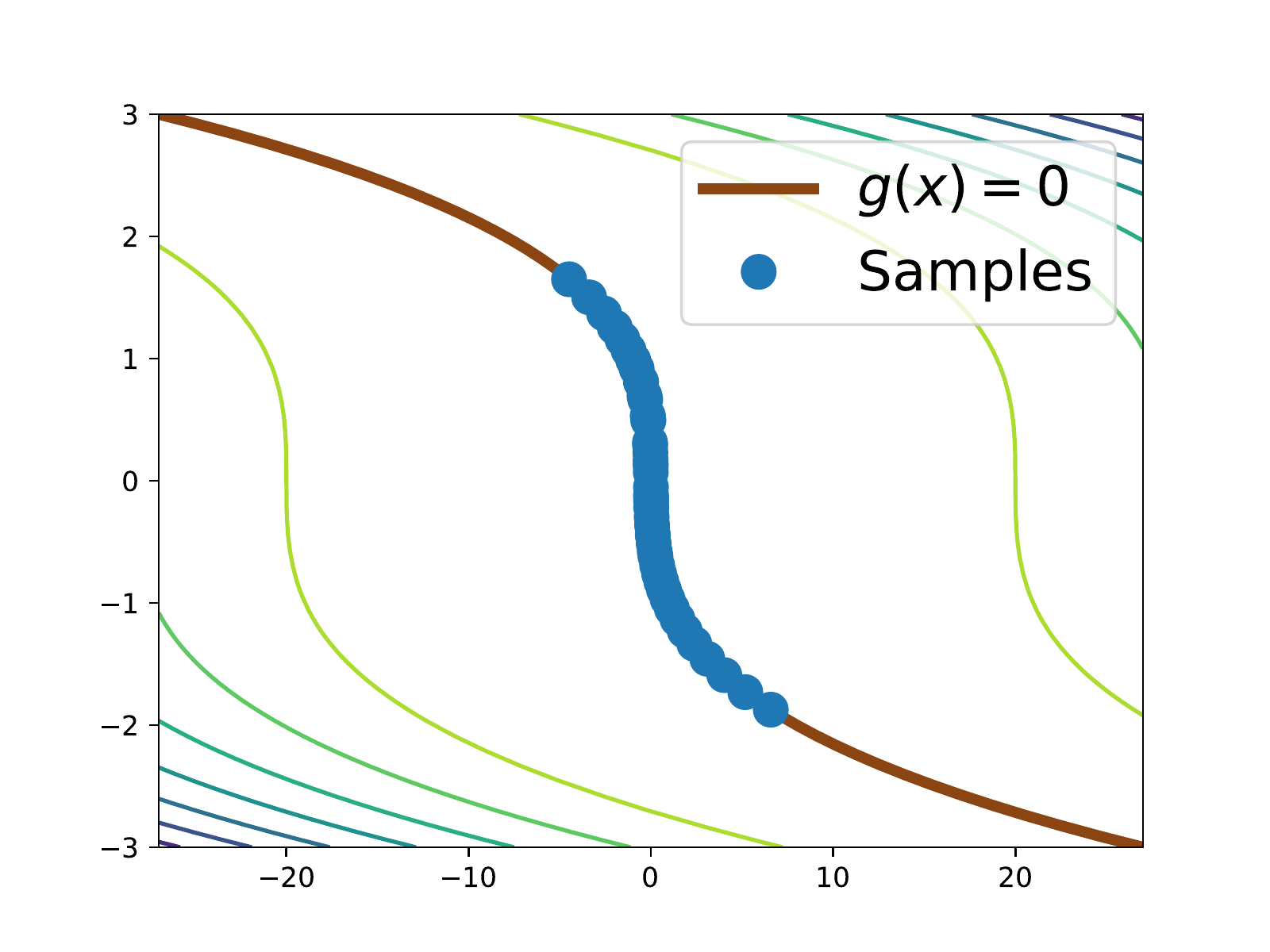}
    	\\	
    	&(a) O-Langevin 
    	&(b) O-SVGD \\
    	\includegraphics[width=4.2cm]{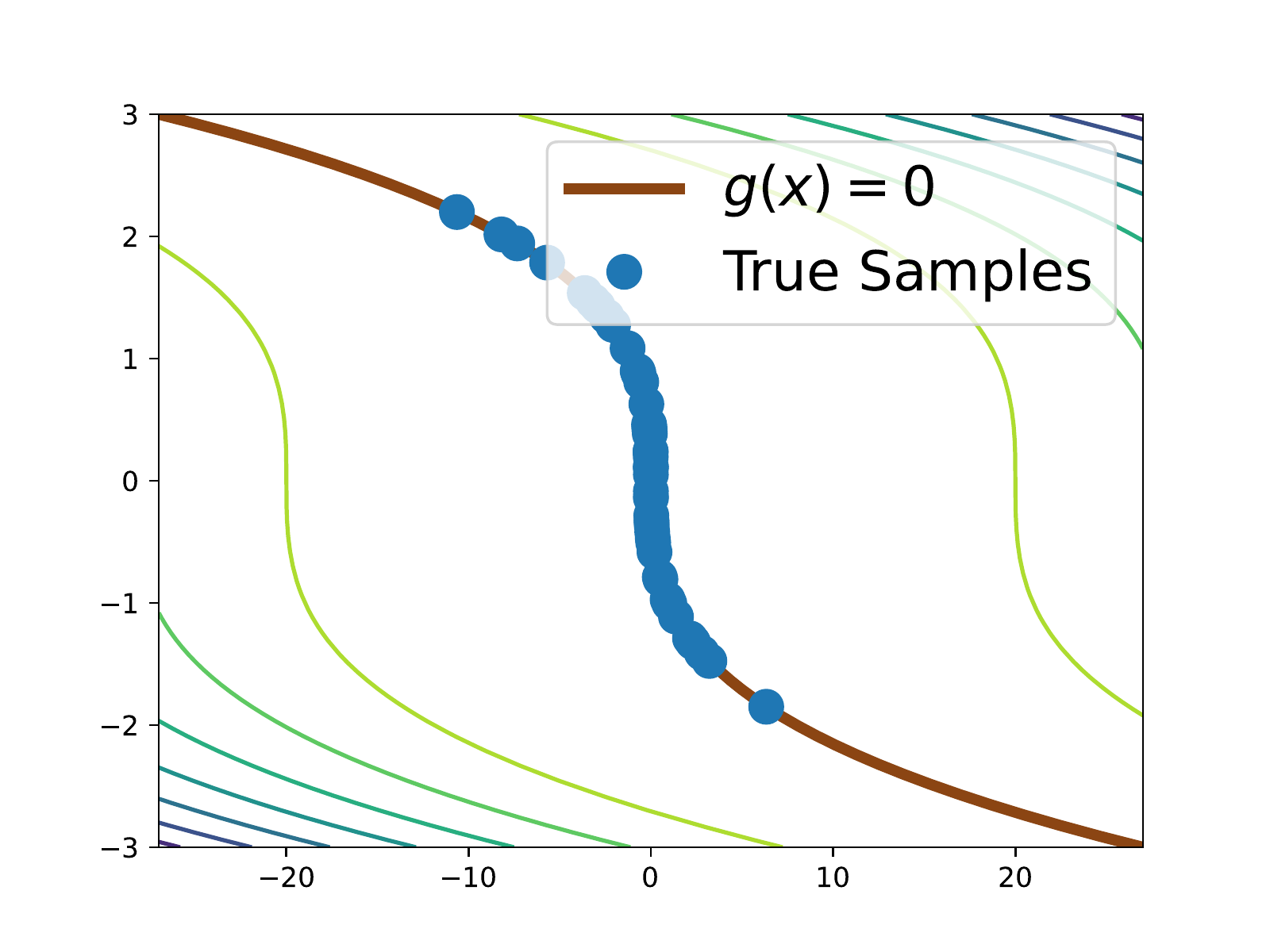}  &
    	\includegraphics[width=4.2cm]{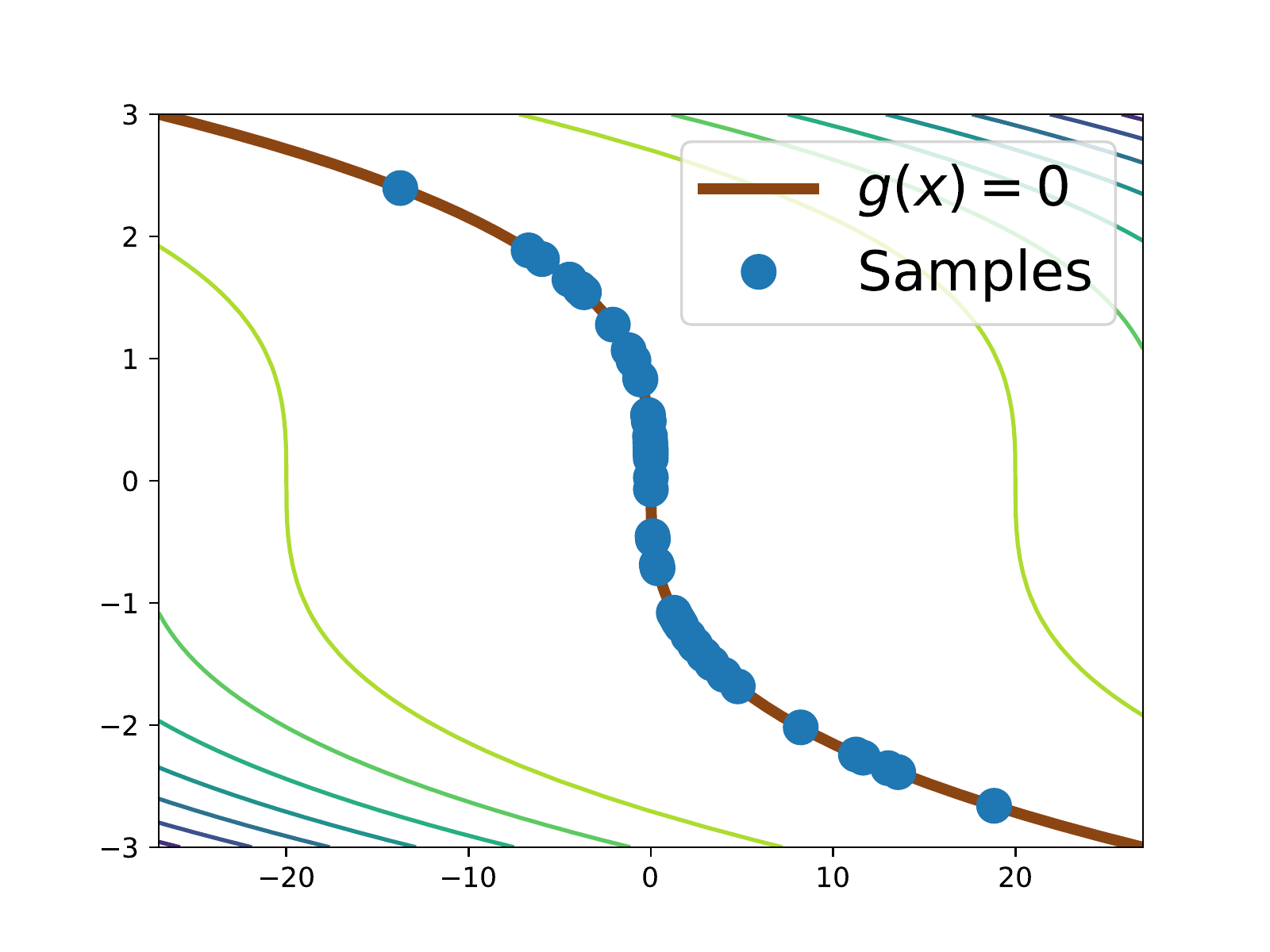}  &
    	\includegraphics[width=4.2cm]{figs/toy/path_clangevin_True_on.pdf} \\
    	Ground Truth
    	&(c) CLangevin 
    	&(d) CHMC \\
    	&
    	\includegraphics[width=4.2cm]{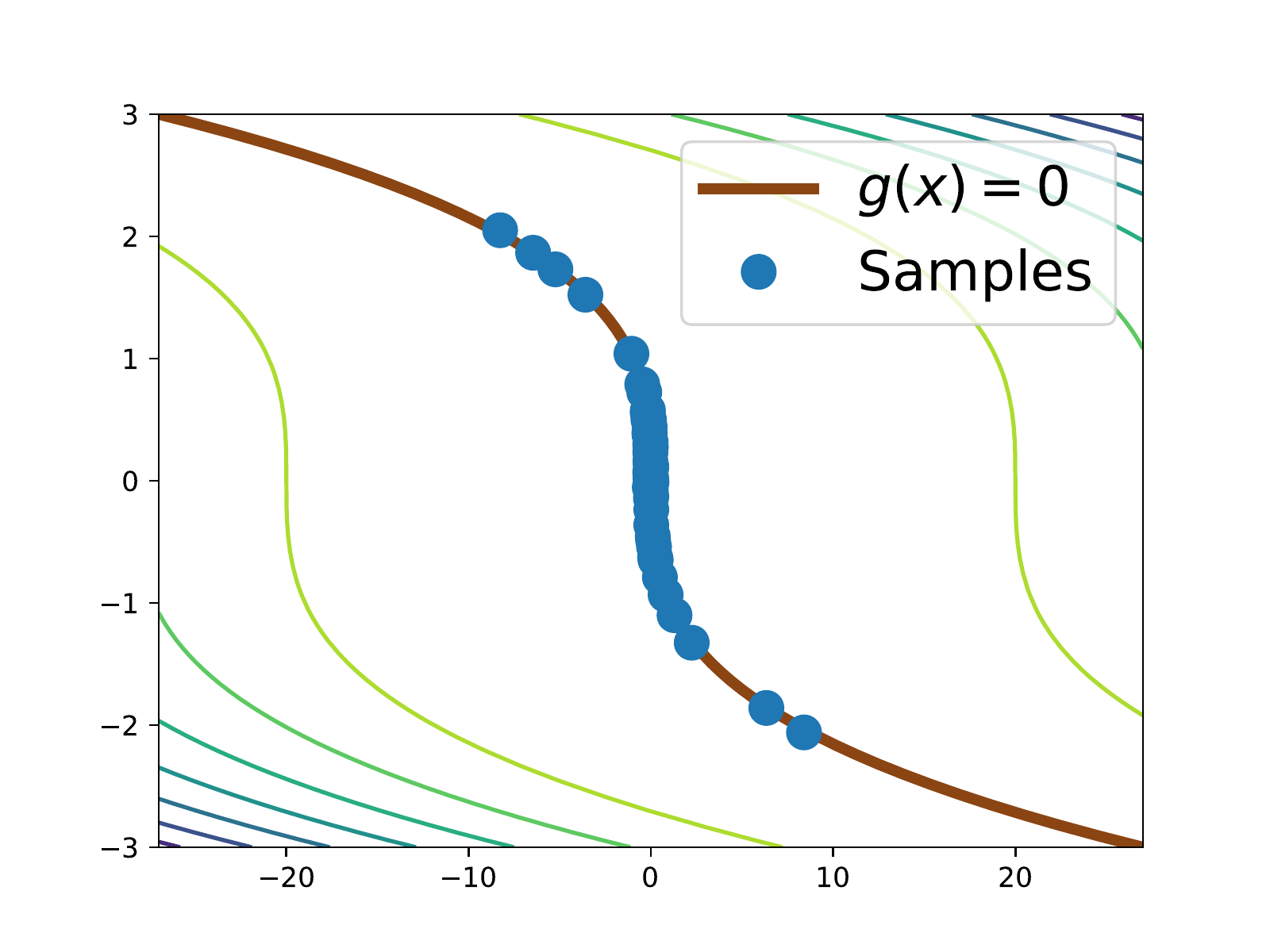} &
    	\includegraphics[width=4.2cm]{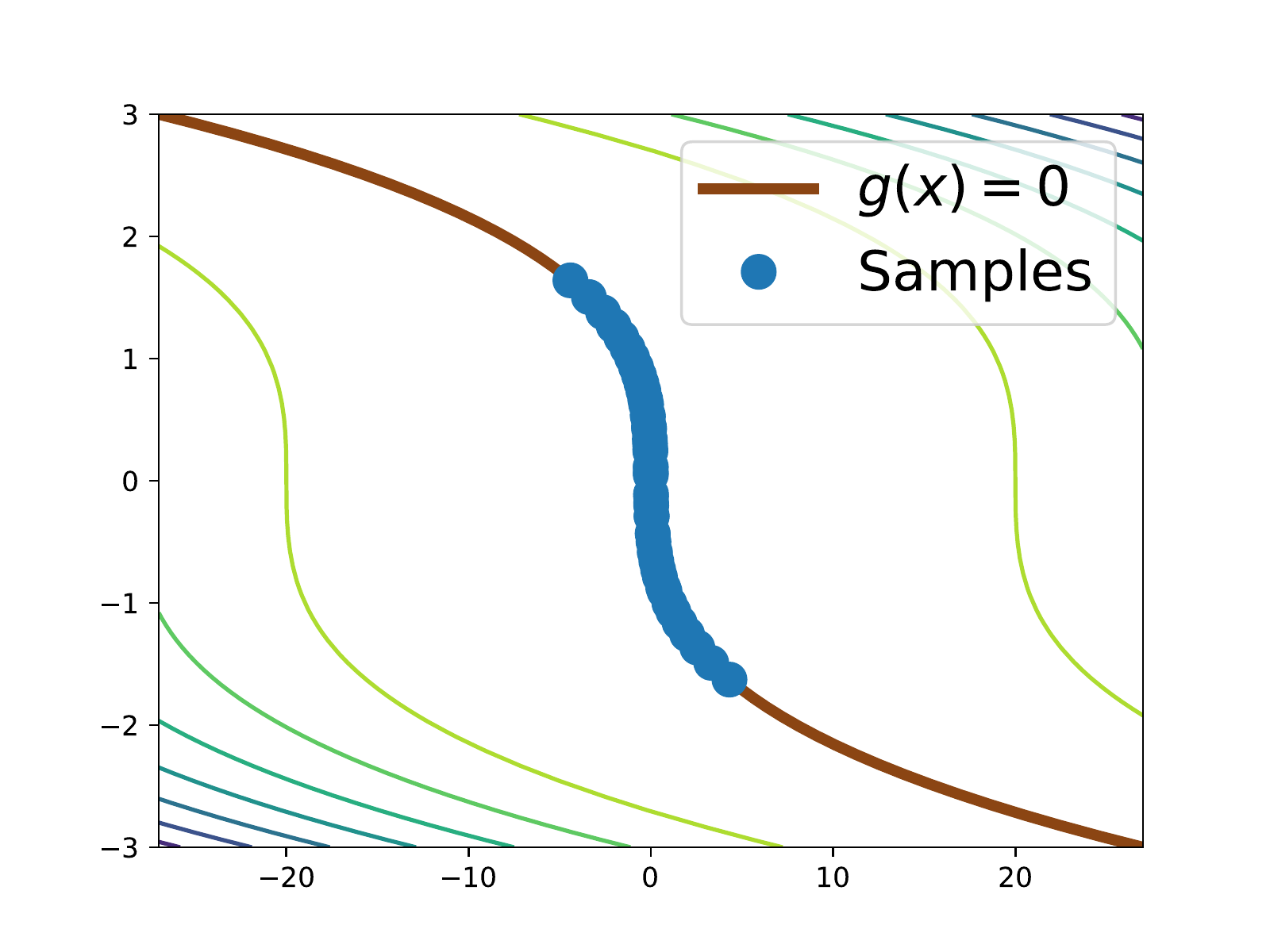}
    	\\	
    	&(e) O-Langevin
    	&(f) O-SVGD \\
    \end{tabular}
    \caption{Density estimation when (a)-(d) starting on the manifold and (e)-(f) starting outside the manifold. 
    }
    \label{fig:path}
    \end{figure}

\subsection{Income Classification with Fairness Constraint}

The Adult Income dataset contains 30,162 training samples and 15,060 test samples. The feature dimension is 86. Following previous work~\citep{martinez2020minimax,liu2020accuracy}, we obtain the training set by randomly subsampling 20,000 data points from the training samples. The model is a two-layer multilayer perceptron (MLP), which has 50 hidden units and RELU nonlinearities. The metric values are the mean over all particles. For both methods, we use $n=10$ particles and $\beta=0$ . For O-Langevin, $\alpha=100$ and $\eta=10^{-5}$. For O-SVGD, $\alpha=130$ and $\eta=10^{-4}$. The results are averaged over 3 runs with the standard error as the error bar.

\subsection{Loan Classification with Logic Rules}
The dataset\footnote{https://www.kaggle.com/wendykan/lending-club-loan-data} contains loans issued through 2007-2015 of several banks. Each data point contains 28 features such as the current loan status and latest payment information. We define the logic loss to be the binary cross-entropy loss. The metric values are the mean over all particles. For both methods, we use $n=10$ and $\beta=0$ . For O-Langevin, $\alpha=80$ and $\eta=10^{-4}$. For O-SVGD, $\alpha=100$ and $\eta=10^{-3}$. The results are averaged over 3 runs with the standard error as the error bar.

\subsection{Prior-Agnostic Bayesian Neural Networks}
For large models, such as ResNet-18 on this task, computing second-order derivatives is slow. To speed up our methods on large models, we ignore the second-order terms in O-Langevin and O-SVGD and empirically find that they still perform well. We leave the theoretical analysis for future work.

Specifically, we ignore the $r$ term in the update of O-Langevin and obtain
\begin{align*}
    x_{t+1} &= x_t + \eta\cdot v_\sharp(x_t) + \Langevin\perpg(x_t),\nonumber\\
    \text{where}~~\Langevin\perpg(x_t) &= \eta D(x_t)\nabla\log\pi(x_t) + \sqrt{2\eta}D(x_t)\xi_t,~~\xi_t\sim\mathcal{N}(0,I).
\end{align*}
For O-SVGD, the update becomes
\begin{align*}
    x_{i,t+1} &= x_{i,t} + \eta\cdot \left (v_\sharp(x_{i,t}) +   \SVGD_{K\perpg}(x_{i,t})\right),\nonumber\\
    \text{where}~~ \SVGD_{K\perpg} (x_{i,t}) &=\frac{1}{n}\sum_{j=1}^n k\perpg(x_{i,t},x_{j,t})\nabla_{x_{j,t}}\log\pi(x_{j,t})+\tilde{\nabla}_{x_{j,t}}k\perpg(x_{i,t},x_{j,t})\\
    \text{and}~~\tilde{\nabla}_{x_{j,t}}k\perpg(x_{i,t},x_{j,t})&=D(x_{i,t})(D(x_{j,t})\nabla_{x_{j,t}}k(x_{i,t},x_{j,t}))
\end{align*}

For all results, we use $200$ epochs, $64$ batchsize, $n=4$, $\beta=0$, $\alpha=1000$ and $\eta=10^{-4}$. During testing, we do Bayesian model averaging to obtain test error, ECE and AUROC.

\subsection{Computational Cost Comparison}
Our method is the first constraint sampling without the requirement of initialization on the manifold, so there is essentially no baseline that can achieve the same effect. Compared to the unconstrained Langevin and SVGD, our method additionally computes the gradient and the Hessian of the constraint function. Compared to previous manifold sampling methods which require expensive projection subroutines, our method has a much cheaper and faster update. For example, in the synthetic distribution experiment, one update of O-Langevin (ours) takes 0.023s whereas the previous method CLangevin takes 0.08s. From Figure~\ref{fig:toy}a, we can see that O-Langevin also converges faster than CLangevin in terms of the number of iterations.

\subsection{Further Comparison to Previous Methods}
\paragraph{Manifold Sampling Methods}
Previous manifold sampling methods assume that the initialization is on the manifold. One may wonder if we can obtain such an initialization by optimization algorithms so that we can still use previous methods when the sampler starts outside the manifold. This will not work because the initialization must be exactly on the manifold whereas the solutions found by optimization always have some intolerable error. Finding a point that is exactly on the manifold without any prior knowledge is by itself a hard problem. Therefore, we are not able to compare our methods with previous methods when there are no known in-domain points, such as the income, loan and image classification tasks in Section~\ref{sec:exp}. 

\paragraph{Moment Constraints}
As mentioned in the related work, sampling with moment constraints $\E_q[g]$ cannot guarantee every sample to satisfy the constraint. To empirically show the difference between our methods and this type of methods, we compare O-Langevin to Control+ Langevin, which is a recently proposed moment constraint sampling method~\citep{liu2021sampling}, on the income classification task. We report the mean and the maximum value of the fairness loss in Figure~\ref{fig:moment-constraint}. While both methods have small mean fairness loss, the maximum value of O-Langevin is always much smaller than that of Control+Langevin. This suggests that every sample of our method satisfies the constraint well whereas some samples of Control+Langevin violate the constraint significantly, since the moment constraint can only guarantee the mean value instead of the value of each sample.

\begin{figure}[H]
    \centering
    \begin{tabular}{cccc}		
    \includegraphics[width=5cm]{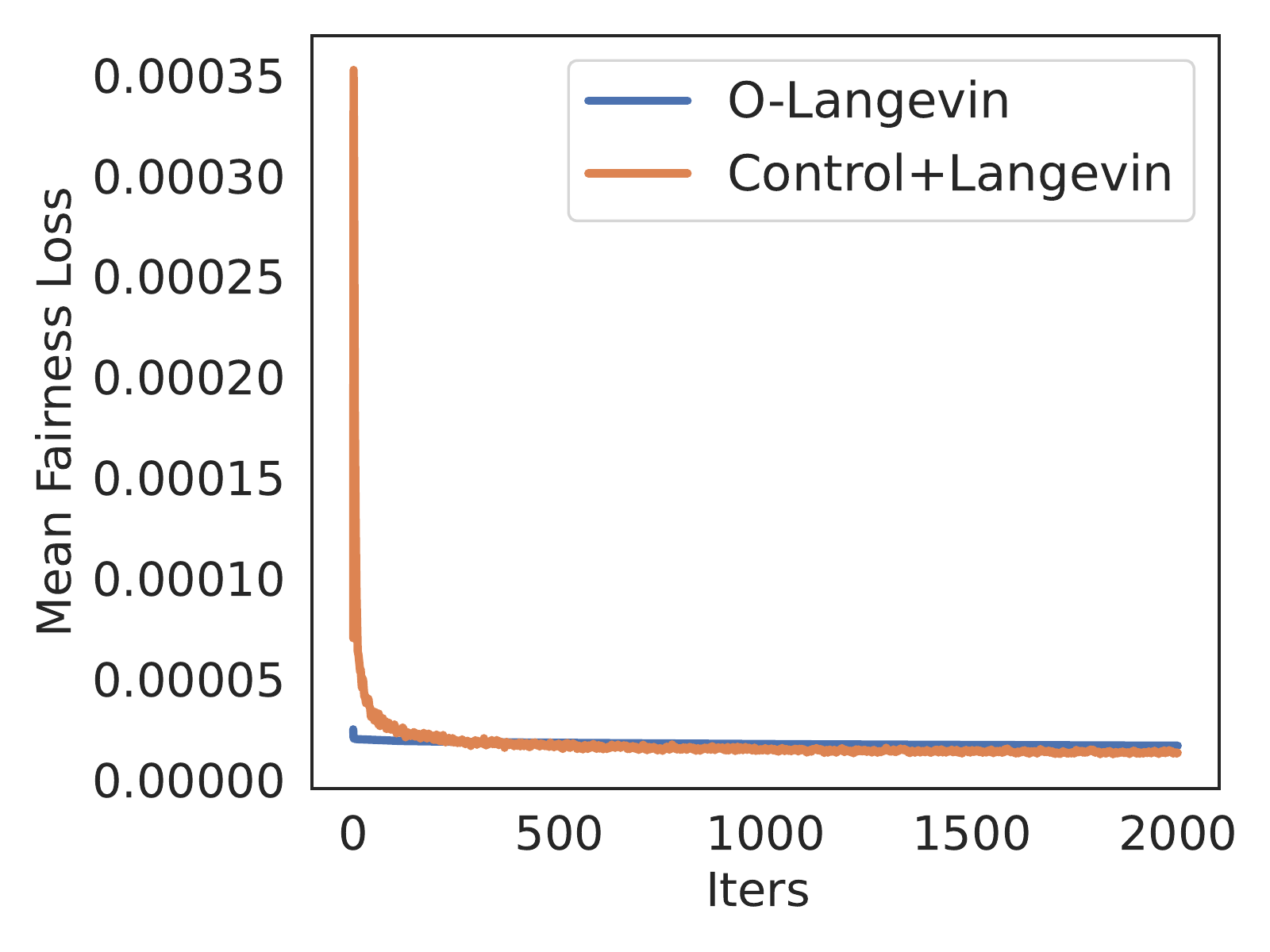} 
    	\includegraphics[width=5cm]{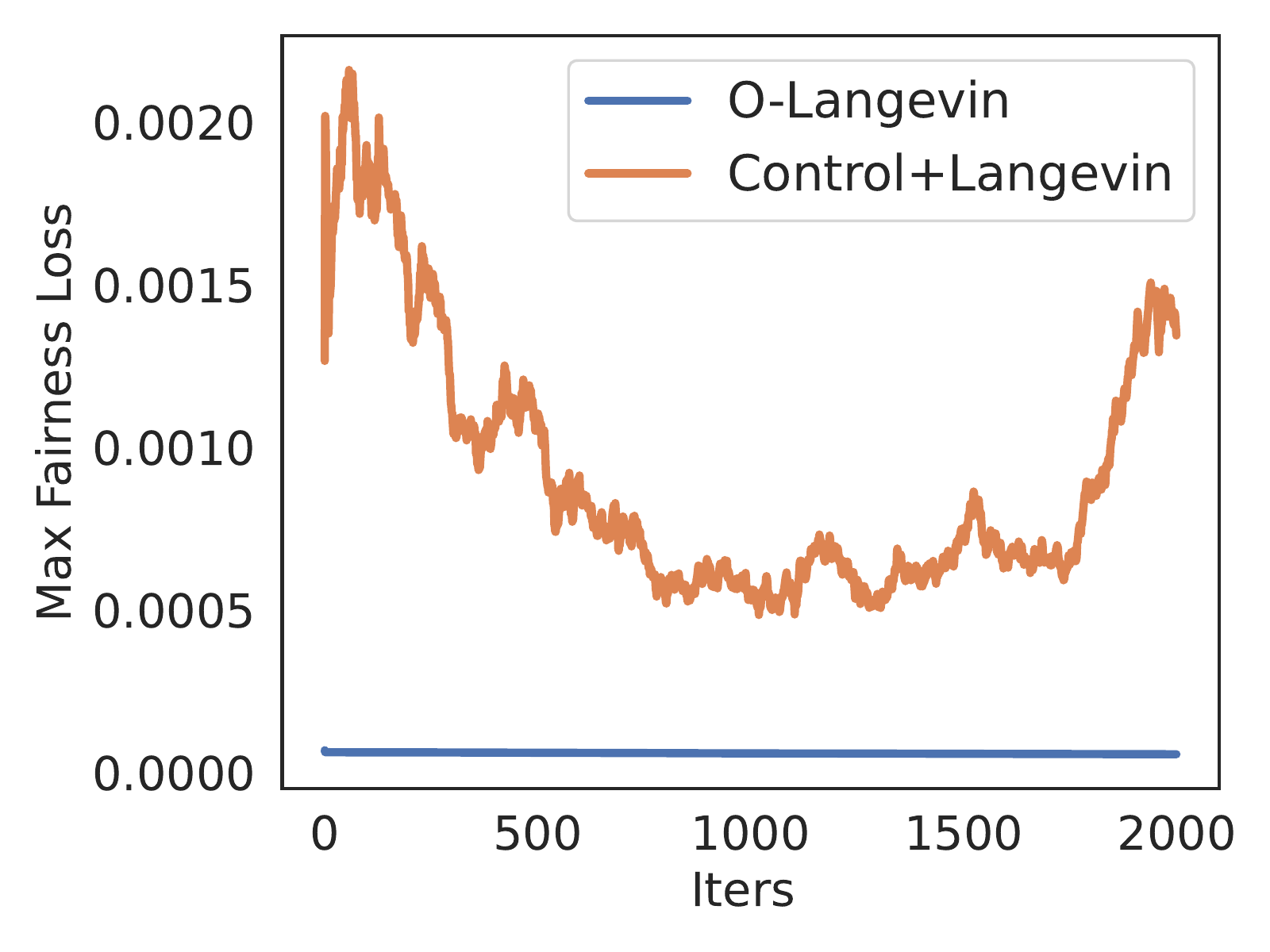} 
    \end{tabular}
    \caption{Every sample of O-Langevin satisfies the constraint well whereas some samples of Control+ Langevin violate the constraint significantly.}
    \label{fig:moment-constraint}
\end{figure}

\end{document}